\newcommand{\Real}{\mathbb{R}}
\theoremstyle{plain}
\newtheorem{theorem}{Theorem}[section]
\newtheorem{proposition}[theorem]{Proposition}
\newtheorem{lemma}[theorem]{Lemma}
\theoremstyle{definition}
\newtheorem{definition}[theorem]{Definition}
\newtheorem{assumption}[theorem]{Assumption}
\theoremstyle{remark}
\newtheorem{example}[section]{Example}
\newcommand{\ie}{\textit{i}.\textit{e}.}
\newcommand{\eg}{\textit{e}.\textit{g}.}
\icmltitlerunning{Policy Gradient in Robust MDPs}
\begin{document}

\twocolumn[
\icmltitle{Policy Gradient in Robust MDPs with Global Convergence Guarantee}



\icmlsetsymbol{equal}{*}

\begin{icmlauthorlist}
\icmlauthor{Qiuhao Wang}{city_ds}
\icmlauthor{Chin Pang Ho}{city_ds}
\icmlauthor{Marek Petrik}{unh}
\end{icmlauthorlist}

\icmlaffiliation{city_ds}{School of Data Science, City University of Hong Kong}
\icmlaffiliation{unh}{Department of Computer Science, University of New Hampshire}

\icmlcorrespondingauthor{Chin Pang Ho}{clint.ho@cityu.edu.hk}
\icmlcorrespondingauthor{Marek Petrik}{mpetrik@cs.unh.edu}

\icmlkeywords{Machine Learning, ICML}

\vskip 0.3in
]



\printAffiliationsAndNotice{}  

\begin{abstract}
  Robust Markov decision processes~(RMDPs) provide a promising framework for computing reliable policies in the face of model errors. Many successful reinforcement learning algorithms build on variations of policy-gradient methods, but adapting these methods to RMDPs has been challenging. As a result, the applicability of RMDPs to large, practical domains remains limited.  This paper proposes a new Double-Loop Robust Policy Gradient (DRPG), the first generic policy gradient method for RMDPs. In contrast with prior robust policy gradient algorithms, DRPG monotonically reduces approximation errors to guarantee convergence to a globally optimal policy in tabular RMDPs. We introduce a novel parametric transition kernel and solve the inner loop robust policy via a gradient-based method. Finally, our numerical results demonstrate the utility of our new algorithm and confirm its global convergence properties.
\end{abstract}

\section{Introduction}\label{sec1}

Markov decision process~(MDP) is a standard model in dynamic decision-making and reinforcement learning~\citep{puterman2014markov,sutton2018reinforcement}. However, a fundamental challenge with using MDPs in many applications is that model parameters, such as the transition function, are rarely known precisely. Robust Markov decision processes (RMDPs) have emerged as an effective and promising approach for mitigating the impact of model ambiguity. RMDPs assume that the transition function resides in a predefined \emph{ambiguity set} and seek a policy that performs best for the worst-case transition function in the ambiguity set. Compared to MDPs, the performance of RMDPs is less sensitive to the parameter errors that arise when one estimates the transition function from empirical data, as is common in reinforcement learning~\citep{xu2009parametric,ICML2012Petrik_283,ghavamzadeh2016safe}. 

As is common in recent literature on RMDPs, we assume that the RMDP's ambiguity set satisfies certain rectangularity assumptions~\citep{wiesemann2013robust,ho2021partial,Panaganti2021}. Albeit general RMDPs are NP-hard to solve~\citep{wiesemann2013robust}, they become tractable under rectangularity assumptions and can be solved using dynamic programming~\citep{iyengar2005robust,nilim2005robust,kaufman2013robust,ho2021partial}. The simplest rectangularity assumption is known as $(s,a)$-rectangularity and allows the adversarial nature to choose the worst transition probability for each state and action independently. Because the $(s,a)$-rectangularity assumption can be too restrictive, we assume the more-general $s$-rectangular ambiguity set~\citep{le2007robust,wiesemann2013robust,derman2021twice,wang2022geometry}, which restricts the adversarial nature to choose a transition probability without observing the action. Our results also readily extend to other notions of rectangularity, including $k$-rectangular~\citep{mannor2016robust}, and $r$-rectangular RMDPs~\citep{goyal2022robust}. 

Policy gradient techniques have gained considerable popularity in reinforcement learning due to their remarkable empirical performance and flexibility in large and complex domains~\cite{silver2014deterministic,xu2014reinforcement}. By parameterizing policies, policy gradient methods easily scale to large state and action spaces, and they also easily leverage generic optimization techniques~\cite{konda1999actor,bhatnagar2009natural,petrik2014raam,pirotta2015policy,schulman2015trust,schulman2017proximal,behzadian2021fast}. In addition, recent work shows that many policy gradient algorithms are guaranteed to find a globally-optimal policy in tabular MDPs even though they optimize a non-convex objective function~\cite{agarwal2021theory,bhandari2021linear}. 

As our first contribution, we propose a new policy gradient method for solving $s$-rectangular RMDPs. We call this method the \emph{Double-Loop Robust Policy Gradient} (DRPG), because it is inspired by double-loop algorithms designed for solving saddle point problems~\citep{jin2020local,luo2020stochastic,razaviyayn2020nonconvex,zhang2020single}. In particular, DRPG solves RMDPs using two nested loops: an outer loop updates policies, and an inner loop approximately computes the worst-case transition probabilities. While the outer loop resembles policy gradient updates in regular MDPs, the inner loop must optimize over an infinite number of transition probabilities in the ambiguity set. To effectively optimize the continuous transition probabilities, we use a projected gradient method with a finite but complete parametrization in tabular MDPs. To scale the algorithm to large problems, we propose to use a parametrization based on KL-divergence ambiguity sets.

As our second contribution, we show that DRPG is guaranteed to converge to a globally optimal policy in $s$-rectangular RMDPs. While this result mirrors similar known results for ordinary MDPs, the robust setting involves several additional non-trivial challenges. Unlike in ordinary MDPs, the RMDP return is not differentiable in terms of the policy~\citep{razaviyayn2020nonconvex}, which precludes us from leveraging MDP results. Since the RMDP return is not convex, it also does not admit subgradients. Instead, we show that it is sufficient to approximate it by its Moreau envelope, which is differentiable. An additional challenge is that solving the inner loop optimally in every policy carries an unacceptable computational policy, but solving it approximately may cause oscillations. We address this problem by proposing a schedule of decreasing approximation errors that are sufficient to converge to the optimal solution. In fact, the policy updates are guaranteed to converge to the optimal policy as long as the inner loop can be solved with sufficient precision, even when the RMDP is non-rectangular. 

Despite the recent advances in robust reinforcement learning \cite{roy2017reinforcement, badrinath2021robust, wang2021online, panaganti2022sample}, policy gradient methods for solving RMDPs have received only limited attention. A concurrent work proposes a policy gradient method for solving RMDPs with a particular R-contamination ambiguity sets~\citep{pmlr-v162-wang22at}. While this algorithm is compellingly simple, the R-contamination set is very limited in comparison with the general sets that we consider. In fact, we show in Proposition~\ref{prop:r-contamination-useless} that RMDPs with R-contamination ambiguity sets simply equal to ordinary MDPs with a reduced discount factor; please see Appendix~\ref{sec:R_contamination} for more details. Another recent work develops an extended mirror descent method for solving RMDPs~\citep{li2022first}; however, their results are limited to $(s,a)$-rectangular MDPs only, and their algorithm requires the exact robust Q function to update the policy at every iteration. On the other hand, our proposed algorithm is compatible with any compact ambiguity set, and we do not require an exact optimal solution when solving the inner maximization problem. Moreover, by parameterizing the inner problem, the proposed algorithm is scalable to large problems.

While this paper exclusively focuses on RMDPs, it is worth mentioning that there is an active line of research studying a related model, called distributionally robust MDPs, which assumes the transition kernel is random and governed by an unknown probability distribution that lies in an ambiguity set \cite{ruszczynski2010risk,xu2010distributionally,shapiro2016rectangular,chen2019distributionally,clement2021first,shapiro2021distributionally,liu2022distributionally}.

The remainder of the paper is organized as follows. Section~\ref{sec:prel-backgr} outlines RMDP and optimization properties that are needed for our results. Then, Section~\ref{sec:main-results} describes the outer loop of DRPG, our proposed algorithm, and shows its global convergence guarantee. The algorithms for solving the inner loop are then described in Section~\ref{sec:inner-loop-maxim}. Finally, in Section~\ref{sec:experiments}, we present experimental results that illustrate the effective empirical performance of DRPG.

\textbf{Notation:} We reserve lowercase letters for scalars, lowercase bold characters for vectors, and uppercase bold characters for matrices. We denote $\Delta^{S}$ as the probability simplex in $\mathbb{R}^{S}_{+}$. For vectors, we use $\|\cdot\|$ to denote the $l_{2}$-norm. For a differentiable function $f(x,y)$, we use $\nabla_{x}f(x,y)$ to denote the partial gradient of $f$ with respect to $x$. The symbol $\bm{e}$ denotes a vector of all ones of the size appropriate to the context.

\section{Notations and Settings}\label{sec:prel-backgr}


An ordinary MDP is specified by a tuple $\langle\mathcal{S},\mathcal{A},\bm{p},\bm{c},\gamma, \bm{\rho}\rangle$, where $\mathcal{S}=\{1,2,\cdots,S\}$ and $\mathcal{A}=\{1,2,\cdots,A\}$ are the finite state and action sets, respectively. The discount factor is $\gamma\in(0,1)$ and the distribution of the initial state is $\bm{\rho}\in\Delta^{S}$. The probability distribution of transiting from a current state $s$ to a next state $s'$ after taking an action $a$ is denoted as a vector $\bm{p}_{sa}\in\Delta^{S}$ and in a part of the transition kernel $\bm{p}:=(\bm{p}_{sa})_{s\in\mathcal{S},a\in\mathcal{A}}\in(\Delta^{S})^{S\times A}$. The cost of the aforementioned transition is denoted as $c_{sas'}$ for each $(s,a,s')\in\mathcal{S}\times\mathcal{A}\times\mathcal{S}$. It is well-known that translating the costs by a constant or multiplying them by a positive scalar does not change the set of optimal policies. Therefore, we can assume without loss of generality that the cost function is bounded in $[0,1]$.
\begin{assumption}[Bounded cost]\label{assump_1}
For any $\left(s,a,s'\right)\in\mathcal{S}\times\mathcal{A}\times\mathcal{S}$, the cost $c_{sas'}\in[0,1]$.
\end{assumption}

Given a stationary randomized policy $\bm{\pi}:=(\bm{\pi}_{s})_{s\in\mathcal{S}}$ that lies in the policy space $\Pi = (\Delta^{A})^{S}$, $\bm{\pi}$ maps from state $s\in\mathcal{S}$ to a distribution over action $a\in\mathcal{A}$, and the quality of a policy $\bm{\pi}$ is evaluated by the \emph{value function} $\bm{v}^{\bm{\pi},\bm{p}}\in\mathbb{R}^{S}$, defined as
\begin{equation*}
v^{\bm{\pi},\bm{p}}_{s} = \mathbb{E}_{\bm{\pi},\bm{p}}\left[\sum_{t=0}^{\infty} \gamma^{t}\cdot  c_{s_{t}a_{t}s_{t+1}} \mid s_{0}=s\right],
\end{equation*}
where $a_{t}$ follows the distribution $\bm{\pi}_{s_{t}}$, and $\mathbb{E}_{\bm{\pi},\bm{p}}$ denotes expectation with respect to the distribution induced by $\bm{\pi}$ and transition function $\bm{p}$ conditioned on the initial state event $\{s_{0} = s\}$. Similarly, the value of taking action $a$ at state $s$ is referred as the \emph{action value function} as below
\begin{equation*}
q^{\bm{\pi},\bm{p}}_{sa} = \mathbb{E}_{\bm{\pi},\bm{p}}\left[\sum_{t=0}^{\infty} \gamma^{t} c_{s_{t}a_{t}s_{t+1}} \mid s_{0}=s,a_{0}=a \right],
\end{equation*}
where it is known that $v^{\bm{\pi},\bm{p}}_{s} = \sum_{a\in \mathcal{A}}\pi_{sa}q^{\bm{\pi},\bm{p}}_{sa}$~\citep{puterman2014markov,sutton2018reinforcement}. The objective of an MDP is to compute the optimal policy $\bm{\pi}^\star$ that yields the minimum expected cost, \ie,
\begin{equation}\label{eq:NMdps}
    \bm{\pi}^\star = \arg\min_{\bm{\pi} \in \Pi} \mathbb{E}_{\bm{\pi},\bm{p}}\left[\sum_{t=0}^{\infty} \gamma^{t} c_{s_{t}a_{t}s_{t+1}}|s_{0}\sim\bm{\rho}\right].
\end{equation}
In most domains, the exact transition kernel and cost function are not known precisely and must be estimated from data. These estimation errors often result in policies that perform poorly when deployed. To compute reliable policies with model errors, RMDPs, defined as $\langle\mathcal{S},\mathcal{A},\mathcal{P},\bm{c},\gamma, \bm{\rho}\rangle$, aim to optimize the worst-case performance with respect to plausible errors~\citep{iyengar2005robust,nilim2005robust,wiesemann2013robust}, \ie
\begin{equation}\label{prob_RMDP}
\min_{\bm{\pi}\in\Pi}\max_{\bm{p}\in\mathcal{P}} J_{\bm{\rho}}(\bm{\pi},\bm{p}):= \bm{\rho}^{\top}\bm{v}^{\bm{\pi},\bm{p}}=\sum_{s \in \mathcal{S}} \rho_{s}v^{\bm{\pi},\bm{p}}_{s},
\end{equation}
where $\mathcal{P}$ is known as the \emph{ambiguity set}. By carefully calibrating $\mathcal{P}$ so that it contains the unknown true transition kernel, the optimal policy in~\eqref{prob_RMDP} can achieve reliable performance in practice~\citep{Russell2019a,Behzadian2021,Panaganti2022}.

Note that, at this point, there is no need to assume that the RMDP in~\eqref{prob_RMDP} is rectangular, such as $(s,a)$-rectangular or $s$-rectangular~\citep{iyengar2005robust,nilim2005robust,wiesemann2013robust,ho2021partial}. We do not need these assumptions to describe or analyze DRPG and only require $\mathcal{P}$ to be compact. Rectangularity assumptions will be helpful, however, when developing algorithms for solving the inner maximization problem.

Given a specific policy and transition kernel, the \emph{occupancy measure} represents the frequencies of visits to states~\citep{puterman2014markov}, which is defined as follow.
\begin{definition}[Occupancy measure]\label{def:occu}
The discounted state occupancy measure $d_{\bm{\rho}}^{\bm{\pi},\bm{p}}\colon \mathcal{S} \rightarrow [0,1]$ for an initial distribution $\bm{\rho}$, a policy $\bm{\pi}\in\Pi$, and a transition kernel $\bm{p}$ is defined as
\begin{equation}
d_{\bm{\rho}}^{\bm{\pi},\bm{p}}(s')
=(1-\gamma) \sum_{s\in\mathcal{S}}\sum_{t=0}^{\infty} \gamma^{t} \rho(s)p^{\bm{\pi}}_{ss'}(t).
\end{equation}
Here, $p^{\bm{\pi}}_{ss'}(t)$ is the probability of arriving in a state $s^{\prime}$ after transiting $t$ time steps from state $s$ over the policy $\bm{\pi}$ and the transition kernel $\bm{p}$.
\end{definition}

The non-convex minimax problem in~\eqref{prob_RMDP} can be reformulated as an equivalent problem of minimizing the worst-case return:
\begin{equation}\label{prob_RMDP2}
    \min_{\bm{\pi}\in\Pi} \Big\{ \Phi(\bm{\pi}) := \max_{\bm{p}\in\mathcal{P}}J_{\bm{\rho}}(\bm{\pi},\bm{p}) \Big\}.
\end{equation}
Then, it may seem natural to solve~\eqref{prob_RMDP2} by a gradient descent on the function $\Phi$. This is, in general, not possible because the function $\Phi$ is not differentiable. In addition, since $\Phi$ is neither convex nor concave, its subgradient does not exist either~\citep{nouiehed2019solving,lin2020gradient}. These complications motivate the need for the  double-loop iterative scheme to solve RMDPs in Section~\ref{sec:main-results}. 

Next, we introduce two crucial definitions on smoothness and Lipschitz continuity, which we need to analyze DRPG.
\begin{definition}\label{def:Lip}
A function $h: \mathcal{X}\rightarrow\mathbb{R}$ is \emph{$L$-Lipschitz} if for any $\bm{x}_{1},\bm{x}_{2}\in\mathcal{X}$, we have that $\|h(\bm{x}_{1})-h(\bm{x}_{2})\|\leq L\|\bm{x}_{1}-\bm{x}_{2}\|$, and \emph{$\ell$-smooth} if for any $\bm{x}_{1},\bm{x}_{2}\in\mathcal{X}$, we have $\|\nabla h(\bm{x}_{1})-\nabla h(\bm{x}_{2})\|\leq \ell\|\bm{x}_{1}-\bm{x}_{2}\|$.
\end{definition}
To discuss the global optimality of RMDPs, we introduce the following definition of weak convexity officially.
\begin{definition}[Weak Convexity]
The function $h:\mathcal{X}\rightarrow\mathbb{R}$ is $\ell$-weakly convex if for any $\bm{g} \in \partial h(\bm{x})$ and $\bm{x}, \bm{x}'\in\mathcal{X}$,
\begin{equation*}
    h(\bm{x}')-h(\bm{x}) \geq\langle \bm{g}, \bm{x}'-\bm{x}\rangle-\frac{\ell}{2}\|\bm{x}'-\bm{x}\|^{2}.
\end{equation*}
\end{definition}
Here, $\partial h(\bm{x})$ represents the Fréchet sub-differential (See Definition~\ref{def:F-subdiff} in the appendix) of $h(\cdot)$ at $\bm{x} \in\mathcal{X}$, which generalizes the notion of gradient for the non-smooth function~\citep{vial1983strong,davis2019stochastic,thekumparampil2019efficient}.

\section{Solving the Outer Loop} \label{sec:main-results}

In this section, we describe a policy gradient approach that solves the minimization problem in~\eqref{prob_RMDP2}. Surprisingly, we show that a form of gradient descent applied to~\eqref{prob_RMDP2} converges to a globally-optimal solution, even though the objective function is neither convex nor concave. This result is inspired by the recent analysis of policy gradient methods for ordinary MDPs~\citep{agarwal2021theory,bhandari2021linear}. For now, we assume that there exists an oracle that solves the inner maximization problem. We provide the discussion and algorithms for solving the inner problem in \cref{sec:inner-loop-maxim}.

The remainder of the section is organized as follows. In \cref{sec:double-loop-robust}, we describe our new policy gradient scheme and then, in \cref{sec:convergence-analysis}, we show that our scheme is guaranteed to converge to the global solution. To the best of our knowledge, this is the first generic robust policy gradient algorithm with global convergence guarantees. 

\subsection{Double-Loop Robust Policy Gradient Method (DRPG)} \label{sec:double-loop-robust}

\begin{algorithm}[t]
\caption{Double-Loop Robust Policy Gradient (DRPG)}
\label{alg:DL-RPG}
\begin{algorithmic}
\STATE {\bfseries Input:} initial policy $\bm{\pi}_{0}$, iteration time $T$, tolerance sequence  $\{\epsilon_{t}\}_{t\geq0}$ such that $\epsilon_{t+1}\leq\gamma\epsilon_{t}$, step size sequence $\{\alpha_{t}\}_{t\geq0}$
\FOR{$t = 0,1,\dots,T-1$}
\STATE Find $\bm{p}_{t}$ so that $J_{\bm{\rho}}(\bm{\pi}_{t},\bm{p}_{t}) \geq \max_{\bm{p}\in\mathcal{P}} J_{\bm{\rho}}(\bm{\pi}_{t},\bm{p}) - \epsilon_{t}$.
\STATE Set $\bm{\pi}_{t+1} \gets \text{Proj}_{\Pi}(\bm{\pi}_{t} - \alpha_{t}\nabla_{\bm{\pi}}J_{\bm{\rho}}(\bm{\pi}_{t},\bm{p}_{t}))$. (Eq.~\eqref{eq:prox-update})
\ENDFOR
\STATE {\bfseries Output:} $\bm{\pi}_{t^{\star}} \in \{\bm{\pi}_{0},\dots,\bm{\pi}_{T-1}\}$ s.t. $ J_{\bm{\rho}}(\bm{\pi}_{t^{\star}},\bm{p}_{t}) = \min_{t' \in \{0,\dots,T-1\}} J_{\bm{\rho}}(\bm{\pi}_{t'},\bm{p}_{t})$
\end{algorithmic}
\end{algorithm}

We now describe the proposed policy gradient scheme summarized in Algorithm~\ref{alg:DL-RPG}, named \emph{Double-Loop Robust Policy Gradient}~(DRPG). We refer to DRPG as a ``double loop'' method in order to be consistent with the terminology in game theory literature~\citep{nouiehed2019solving,thekumparampil2019efficient,jin2020local,zhang2020single}. 

The inner loop of DRPG updates the worst-case transition probabilities $\bm{p}_t$ while the outer loop updates the policies $\bm{\pi}_t$. Specifically, DRPG iteratively takes steps along the policy gradient to search for an optimal policy in~\eqref{prob_RMDP}. At each iteration $t$, we first solve the inner maximization problem to some specific precision $\epsilon_t$; that is, for a policy $\bm{\pi}_{t}$ at iteration $t$, we seek for any transition kernel $\bm{p}_{t}$ such that
\begin{equation*}
J_{\bm{\rho}}(\bm{\pi}_{t},\bm{p}_{t}) \geq \max_{\bm{p}\in\mathcal{P}} J_{\bm{\rho}}(\bm{\pi}_{t},\bm{p}) - \epsilon_{t}~.
\end{equation*}
Once $\bm{p}_t$ is computed, DRPG then takes a \emph{projected gradient step} to minimize $J_{\bm{\rho}}(\bm{\pi}, \bm{p}_t)$ subject to a constraint $\pi\in \Pi$.

When chosen appropriately, the sequence $\epsilon_t$ allows for quick policy updates in the initial stages of the algorithm without putting the global convergence in jeopardy. Similar algorithms studied in the context of zero-sum games do not include this tolerance $\epsilon_t$~\citep{nouiehed2019solving,thekumparampil2019efficient}. The adaptive tolerance sequence $\{\epsilon_{t}\}_{t\geq0}$ is inspired by prior work on algorithms for RMDPs \citep{ho2021partial}. The convergence analysis below provides further guidance on appropriate choices of $\epsilon_t$.

DRPG updates policies using projected gradient descent. The well-known proximal representation of projected gradient is~\citep{Bertsekas2016}:
\begin{align}
    \bm{\pi}_{t+1} &\in \arg\min_{\bm{\pi}\in\Pi} \; \langle \nabla_{\bm{\pi}}J_{\bm{\rho}}(\bm{\pi}_{t},\bm{p}_{t}), \bm{\pi}-\bm{\pi}_{t}\rangle + \frac{1}{2\alpha_{t}}\|\bm{\pi}-\bm{\pi}_{t}\|^{2}\notag\\
\label{eq:prox-update}
&= \text{Proj}_{\Pi}\left(\bm{\pi}_{t} - \alpha_{t}\nabla_{\bm{\pi}}J_{\bm{\rho}}(\bm{\pi}_{t},\bm{p}_{t})\right),
\end{align}
where $\text{Proj}_{\Pi}$ is the projection operator onto $\Pi$ and $\alpha_{t}>0$ is the step size. This projected gradient update on $\bm{\pi}_{t}:= (\bm{\pi}_{t,s})_{s\in\mathcal{S}} \in(\Delta^{A})^{S}$ can be further decoupled to multiple projection updates that across states and take the form as
\begin{equation*}
    \bm{\pi}_{t+1,s} = \text{Proj}_{\Delta^{A}}\left(\bm{\pi}_{t,s} - \alpha_{t}\nabla_{\bm{\pi}_{s}}J_{\bm{\rho}}(\bm{\pi}_{t},\bm{p}_{t})\right),\;\; \forall s\in\mathcal{S},
\end{equation*}
which can also be seen as a gradient step followed by a projection onto $\Delta^{A}$ for each state $s\in \mathcal{S}$. Note that the gradient $\nabla_{\bm{\pi}}J_{\bm{\rho}}(\bm{\pi}_{t},\bm{p}_{t})$ used in DRPG is identical to the the gradient in ordinary MDPs, e.g.,~\cite{agarwal2021theory,bhandari2021linear},
\begin{equation}\label{eq:sec3_lem3.1}
    \frac{\partial J_{\bm{\rho}}(\bm{\pi},\bm{p})}{\partial \pi_{sa}} = \frac{1}{1-\gamma} \cdot d_{\bm{\rho}}^{\bm{\pi},\bm{p}}(s)\cdot q^{\bm{\pi},\bm{p}}_{sa}.
\end{equation}
Actor-critic RL algorithms are typically based on this form of the policy gradient.

An alternative to double-loop algorithms is to use single-loop algorithms. Single-loop algorithms interleave gradient updates to the inner and outer optimization problems~\citep{mokhtari2020unified,zhang2020single}. Interleaving gradient updates is fast but prone to instabilities and oscillations. The most-common approach to preventing such instabilities is to resort to two-scale step size updates~\citep{heusel2017gans,daskalakis2020independent,russel2020robust}. We focus in this work on double-loop algorithms because of their conceptual simplicity and good empirical behavior.

\subsection{Convergence Analysis}
\label{sec:convergence-analysis}

We now turn to analyzing the convergence behavior of DRPG. First, recall that we assume that $\mathcal{P}$ is compact. Virtually all ambiguity sets considered in prior work, such as $L_{1}$-ambiguity sets, $L_{\infty}$-ambiguity sets, $L_{2}$-ambiguity sets, and KL-ambiguity sets, are compact.

Then, the following lemma helps us to derive the weak convexity of this non-convex, non-differentiable (i.e., non-smooth) objective function $\Phi(\bm{\pi})$.
\begin{lemma}\label{lem_sec3_1}
The objective function $J_{\bm{\rho}}(\bm{\pi},\bm{p})$ in~\eqref{prob_RMDP} is $L_{\bm{\pi}}$-Lipschitz and $\ell_{\bm{\pi}}$-smooth in $\bm{\pi}$ with
\begin{equation*}
    L_{\bm{\pi}}:=  \frac{\sqrt{A}}{(1-\gamma)^{2}}, \quad \ell_{\bm{\pi}}:= \frac{2 \gamma A}{(1-\gamma)^{3}}.
\end{equation*}
Furthermore, the objective $\Phi(\bm{\pi})$ is $\ell_{\bm{\pi}}$-weakly convex and $L_{\bm{\pi}}$-Lipschitz.
\end{lemma}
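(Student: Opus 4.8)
The plan is to establish the three claims in sequence: first the Lipschitz constant of $J_{\bm{\rho}}(\bm{\pi},\bm{p})$ in $\bm{\pi}$, then its smoothness constant, and finally to deduce the weak convexity and Lipschitz continuity of $\Phi$ from these. Throughout I fix $\bm{p}\in\mathcal{P}$ and work with the explicit form of the gradient given in~\eqref{eq:sec3_lem3.1}. The key elementary facts I would use are: $0\le q^{\bm{\pi},\bm{p}}_{sa}\le \frac{1}{1-\gamma}$ under Assumption~\ref{assump_1}, $0\le d_{\bm{\rho}}^{\bm{\pi},\bm{p}}(s)\le 1$ and $\sum_s d_{\bm{\rho}}^{\bm{\pi},\bm{p}}(s)=1$, and that the gradient has $S\times A$ components indexed by $(s,a)$.

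\textbf{Lipschitz bound.} For the Lipschitz constant I would bound $\|\nabla_{\bm{\pi}} J_{\bm{\rho}}(\bm{\pi},\bm{p})\|$ uniformly. Writing the squared norm as $\sum_{s,a}\bigl(\tfrac{1}{1-\gamma}d_{\bm{\rho}}^{\bm{\pi},\bm{p}}(s)q^{\bm{\pi},\bm{p}}_{sa}\bigr)^2$ and using $q^{\bm{\pi},\bm{p}}_{sa}\le\frac{1}{1-\gamma}$, this is at most $\frac{1}{(1-\gamma)^4}\sum_{s,a}d_{\bm{\rho}}^{\bm{\pi},\bm{p}}(s)^2 \le \frac{A}{(1-\gamma)^4}\sum_s d_{\bm{\rho}}^{\bm{\pi},\bm{p}}(s)^2 \le \frac{A}{(1-\gamma)^4}$ since $\sum_s d_{\bm{\rho}}^{\bm{\pi},\bm{p}}(s)^2\le(\sum_s d_{\bm{\rho}}^{\bm{\pi},\bm{p}}(s))^2=1$. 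Taking square roots gives $L_{\bm{\pi}}=\frac{\sqrt{A}}{(1-\gamma)^2}$, and this bounds the Lipschitz modulus since $J$ is continuously differentiable in $\bm{\pi}$.

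\textbf{Smoothness bound.} This is the main obstacle. I would bound $\|\nabla_{\bm{\pi}}J_{\bm{\rho}}(\bm{\pi}_1,\bm{p})-\nabla_{\bm{\pi}}J_{\bm{\rho}}(\bm{\pi}_2,\bm{p})\|$ componentwise using~\eqref{eq:sec3_lem3.1}: each component is the product $\frac{1}{1-\gamma}d_{\bm{\rho}}^{\bm{\pi},\bm{p}}(s)q^{\bm{\pi},\bm{p}}_{sa}$, so I would add and subtract a cross term and control (i) $|d_{\bm{\rho}}^{\bm{\pi}_1,\bm{p}}(s)-d_{\bm{\rho}}^{\bm{\pi}_2,\bm{p}}(s)|$ and (ii) $|q^{\bm{\pi}_1,\bm{p}}_{sa}-q^{\bm{\pi}_2,\bm{p}}_{sa}|$, each as a Lipschitz function of $\bm{\pi}$. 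For (ii) one uses the Bellman-equation characterization of $\bm{q}^{\bm{\pi},\bm{p}}$ together with the contraction property of the policy-evaluation operator to get a bound of the form $\frac{\gamma}{(1-\gamma)^2}\|\bm{\pi}_1-\bm{\pi}_2\|$ (this is a known MDP estimate, e.g. in~\citet{agarwal2021theory}); for (i) a similar sensitivity analysis of the occupancy measure — which depends on $\bm{\pi}$ through the geometric series of $\gamma^t(\bm{P}^{\bm{\pi}})^t$ — gives Lipschitz dependence with an analogous constant. Combining the two products, using the uniform bounds $d\le1$, $q\le\frac1{1-\gamma}$ on the remaining factors, and summing over the $SA$ coordinates while carefully exploiting $\sum_s d_{\bm{\rho}}(s)=1$ to avoid an extra factor of $S$, yields $\ell_{\bm{\pi}}=\frac{2\gamma A}{(1-\gamma)^3}$. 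The delicate part is tracking the constants so the final expression matches, and being careful that the $\ell_2$-norm aggregation over states does not blow up.

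\textbf{Weak convexity and Lipschitzness of $\Phi$.} Finally, $\Phi(\bm{\pi})=\max_{\bm{p}\in\mathcal{P}}J_{\bm{\rho}}(\bm{\pi},\bm{p})$ is a pointwise maximum over a compact set of functions each of which is $\ell_{\bm{\pi}}$-smooth in $\bm{\pi}$. An $\ell$-smooth function is in particular $\ell$-weakly convex (add $\tfrac{\ell}{2}\|\bm{\pi}\|^2$ to make it convex; equivalently the smoothness inequality gives the required lower bound with $\bm{g}=\nabla_{\bm{\pi}}J$), and weak convexity with a uniform modulus is preserved under pointwise suprema — adding $\tfrac{\ell_{\bm{\pi}}}{2}\|\bm{\pi}\|^2$ to $\Phi$ yields a supremum of convex functions, hence convex. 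Likewise the pointwise maximum of $L_{\bm{\pi}}$-Lipschitz functions is $L_{\bm{\pi}}$-Lipschitz. I would invoke Danskin-type compactness only to the extent needed to guarantee the max is attained so that these preservation statements apply cleanly; the Fréchet-subdifferential formulation in the definition of weak convexity is exactly what makes the supremum argument go through without differentiability of $\Phi$ itself.
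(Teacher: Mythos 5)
Your Lipschitz bound for $J_{\bm{\rho}}(\cdot,\bm{p})$ and your treatment of $\Phi$ are essentially the paper's own arguments: the gradient-norm bound via $q^{\bm{\pi},\bm{p}}_{sa}\le\frac{1}{1-\gamma}$ and $\sum_s d_{\bm{\rho}}^{\bm{\pi},\bm{p}}(s)^2\le 1$ is exactly what the appendix does, and the ``add $\tfrac{\ell_{\bm{\pi}}}{2}\|\bm{\pi}\|^2$ and take a supremum of convex functions'' argument for weak convexity of $\Phi$, together with the standard max-of-Lipschitz-functions argument, is precisely what the paper invokes (via \citet{thekumparampil2019efficient}, Lemma 3). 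Those parts are fine.

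The genuine gap is the smoothness constant. You propose to bound $\|\nabla_{\bm{\pi}}J(\bm{\pi}_1,\bm{p})-\nabla_{\bm{\pi}}J(\bm{\pi}_2,\bm{p})\|$ by a product-rule decomposition, separately Lipschitz-bounding $d_{\bm{\rho}}^{\bm{\pi},\bm{p}}(s)$ and $q^{\bm{\pi},\bm{p}}_{sa}$ in $\bm{\pi}$ and then recombining, and you assert that ``carefully tracking the constants'' yields $\ell_{\bm{\pi}}=\frac{2\gamma A}{(1-\gamma)^3}$. That tracking is exactly the content of the claim and you have not done it; moreover, the triangle-inequality splitting of $d\cdot q$ into two cross terms, each bounded by a separate sensitivity estimate, is lossy and will not in general reproduce the stated constant --- you should expect spurious extra factors (of $2$, of $\sqrt{A}$ from aggregating the $a$-index in the $q$-difference, or of $\sqrt{S}$ from the $\ell_2$-aggregation of the occupancy-measure difference). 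The paper does not take this route at all: it simply cites \citet[Lemma 54]{agarwal2021theory}, whose proof bounds the \emph{directional second derivative} $\bigl|\tfrac{d^2}{d\alpha^2}V^{\bm{\pi}+\alpha\bm{u},\bm{p}}_s\bigr|$ along an arbitrary unit direction $\bm{u}$ using the resolvent expansion $(\bm{I}-\gamma\bm{P}^{\bm{\pi}})^{-1}=\sum_t\gamma^t(\bm{P}^{\bm{\pi}})^t$; this is the same technique the present paper spells out in full for the $\bm{p}$-smoothness (Lemma~\ref{lem_sec4_2}), and it is what produces the clean constant $\frac{2\gamma A}{(1-\gamma)^3}$. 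To close your proof you should either cite that lemma, as the paper does, or redo the Hessian-along-a-direction computation; the $d$-and-$q$ decomposition as sketched will not get you to the stated $\ell_{\bm{\pi}}$.
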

The proof of this lemma, as well as of all the remaining auxiliary results, are provided in the appendix. Lemma~\ref{lem_sec3_1} establishes some general continuity properties of $\Phi(\bm{\pi})$ and serves as an important stepping stone for deriving the global convergence of Algorithm~\ref{alg:DL-RPG}; however, weak convexity alone is insufficient to guarantee that gradient-based updates converge to a global optimum. 

Recent work~\cite{agarwal2021theory} proved the global convergence of policy gradient methods in ordinary MDP relying on a ``gradient dominance condition''. Informally speaking, a function $h(\bm{x})$ is said to satisfy the gradient dominance condition if $h(\bm{x})-h(\bm{x}^{\star}) = \mathcal{O}(G(\bm{x}))$, where $G(\cdot)$ is a suitable notion that measures the gradient of $h$. By having a gradient dominance condition, one can prevent the gradient from vanishing before reaching a globally optimal point.

Despite the non-smoothness of $\Phi(\bm{\pi})$, weakly convex problems naturally admit an implicit smooth approximation through the Moreau envelope~\cite{davis2019stochastic,mai2020convergence}. Inspired by the idea of gradient dominance, we introduce the gradient of the Moreau envelope and show that $\Phi(\bm{\pi})$ satisfies a particular variant of the gradient dominance condition in the next theorem.
\begin{theorem}\label{the_sec3_1_GD}
Denote $\bm{\pi}^{\star}$ as the global optimal policy for RMDPs. Then, for any policy $\bm{\pi}$, we have
\begin{equation}\label{eq:the_3_1_GD}
    \Phi(\bm{\pi}) - \Phi(\bm{\pi}^{\star}) \leq \left(\frac{D\sqrt{SA}}{1-\gamma}+\frac{L_{\bm{\pi}}}{2\ell_{\bm{\pi}}}\right)\|\nabla \Phi_{\frac{1}{2\ell_{\bm{\pi}}}}(\bm{\pi})\|,
\end{equation}
where $ \Phi_{\lambda}(\bm{\pi})$ is the Moreau envelope function of $\Phi(\bm{\pi})$ (see Definition~\ref{def:Moreau}) and $D:=\sup_{\bm{\pi}\in\Pi,\bm{p}\in\mathcal{P}}\left\|\nicefrac{\bm{d}_{\bm{\rho}}^{\bm{\pi},\bm{p}}}{\bm{\rho}}\right\|_{\infty} <\infty$ for every $\bm{\rho}$ with $\min_{s\in\mathcal{S}}\rho_{s}>0$.
\end{theorem}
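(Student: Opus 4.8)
The plan is to connect three objects: the Moreau envelope $\Phi_{\lambda}$ with $\lambda = \frac{1}{2\ell_{\bm\pi}}$, the proximal point $\hat{\bm\pi} := \operatorname{prox}_{\lambda\Phi}(\bm\pi)$, and the global optimum $\bm\pi^\star$. Recall the standard facts about the Moreau envelope of an $\ell_{\bm\pi}$-weakly convex function (available since Lemma \ref{lem_sec3_1} gives weak convexity of $\Phi$): with $\lambda < 1/\ell_{\bm\pi}$, the envelope $\Phi_\lambda$ is differentiable with $\nabla\Phi_\lambda(\bm\pi) = \lambda^{-1}(\bm\pi - \hat{\bm\pi})$, so $\|\bm\pi - \hat{\bm\pi}\| = \lambda\|\nabla\Phi_\lambda(\bm\pi)\| = \frac{1}{2\ell_{\bm\pi}}\|\nabla\Phi_\lambda(\bm\pi)\|$, and moreover $\Phi(\hat{\bm\pi}) \le \Phi(\bm\pi)$. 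I will bound $\Phi(\bm\pi) - \Phi(\bm\pi^\star)$ by splitting it as $\bigl(\Phi(\bm\pi) - \Phi(\hat{\bm\pi})\bigr) + \bigl(\Phi(\hat{\bm\pi}) - \Phi(\bm\pi^\star)\bigr)$. The first term is at most $L_{\bm\pi}\|\bm\pi - \hat{\bm\pi}\| = \frac{L_{\bm\pi}}{2\ell_{\bm\pi}}\|\nabla\Phi_\lambda(\bm\pi)\|$ by the $L_{\bm\pi}$-Lipschitz continuity of $\Phi$ from Lemma \ref{lem_sec3_1}; this already produces the $\frac{L_{\bm\pi}}{2\ell_{\bm\pi}}$ summand in \eqref{eq:the_3_1_GD}.

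The core of the argument is bounding the second term $\Phi(\hat{\bm\pi}) - \Phi(\bm\pi^\star)$ by $\frac{D\sqrt{SA}}{1-\gamma}\|\nabla\Phi_\lambda(\bm\pi)\|$. Here I invoke the optimality condition satisfied by the proximal point: $\mathbf 0 \in \partial\Phi(\hat{\bm\pi}) + \lambda^{-1}(\hat{\bm\pi} - \bm\pi)$, i.e. $\bm g := \lambda^{-1}(\bm\pi - \hat{\bm\pi}) = \nabla\Phi_\lambda(\bm\pi) \in \partial\Phi(\hat{\bm\pi})$ is a Fréchet subgradient of $\Phi$ at $\hat{\bm\pi}$. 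Now I need a gradient-dominance-type inequality for $\Phi$ at $\hat{\bm\pi}$ in the direction of $\bm\pi^\star$. The idea, following the performance-difference/gradient-dominance machinery of \citet{agarwal2021theory} adapted to the robust return, is: let $\bm p^\star \in \arg\max_{\bm p\in\mathcal P} J_{\bm\rho}(\hat{\bm\pi},\bm p)$, so that $\Phi(\hat{\bm\pi}) = J_{\bm\rho}(\hat{\bm\pi},\bm p^\star)$ and $\Phi(\bm\pi^\star) \ge J_{\bm\rho}(\bm\pi^\star,\bm p^\star)$ (since $\bm\pi^\star$ is evaluated against its own worst case, which is at least as costly as against $\bm p^\star$). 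Hence $\Phi(\hat{\bm\pi}) - \Phi(\bm\pi^\star) \le J_{\bm\rho}(\hat{\bm\pi},\bm p^\star) - J_{\bm\rho}(\bm\pi^\star,\bm p^\star)$, and the right side is a \emph{fixed-model} quantity to which the ordinary MDP performance-difference lemma applies: it equals $\tfrac{1}{1-\gamma}\sum_s d_{\bm\rho}^{\bm\pi^\star,\bm p^\star}(s)\sum_a (\pi^\star_{sa} - \hat\pi_{sa}) q^{\hat{\bm\pi},\bm p^\star}_{sa}$. Using $q^{\hat{\bm\pi},\bm p^\star}_{sa}$ and the expression \eqref{eq:sec3_lem3.1} for $\nabla_{\bm\pi}J_{\bm\rho}(\hat{\bm\pi},\bm p^\star)$, I rewrite this inner sum in terms of the gradient and then relate $\nabla_{\bm\pi}J_{\bm\rho}(\hat{\bm\pi},\bm p^\star)$ to the subgradient $\bm g$ of $\Phi$ (Danskin-type: $\nabla_{\bm\pi}J_{\bm\rho}(\hat{\bm\pi},\bm p^\star) \in \partial\Phi(\hat{\bm\pi})$, and in fact we may take it to be $\bm g$). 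The distribution-mismatch coefficient $D = \sup_{\bm\pi,\bm p}\|\bm d_{\bm\rho}^{\bm\pi,\bm p}/\bm\rho\|_\infty$ absorbs the change of measure from $d_{\bm\rho}^{\bm\pi^\star,\bm p^\star}$ to $d_{\bm\rho}^{\hat{\bm\pi},\bm p^\star}$ (the latter appearing inside the gradient), and a Cauchy--Schwarz step over the $SA$ coordinates, together with $\|\bm\pi^\star - \hat{\bm\pi}\|_\infty \le \text{(diameter bound)}$ and $\|\cdot\|_1 \le \sqrt{SA}\|\cdot\|_2$ bookkeeping, yields the factor $\frac{\sqrt{SA}}{1-\gamma}\|\bm g\| = \frac{\sqrt{SA}}{1-\gamma}\|\nabla\Phi_\lambda(\bm\pi)\|$ after also using that $\Pi$ is contained in a ball (each $\bm\pi_s \in \Delta^A$). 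Finiteness of $D$ follows because $d_{\bm\rho}^{\bm\pi,\bm p}(s) \ge (1-\gamma)\rho_s$ is not what we need — rather $d_{\bm\rho}^{\bm\pi,\bm p}(s)\le 1$ and $\rho_s \ge \min_s \rho_s > 0$, and $\mathcal P$, $\Pi$ compact makes the sup attained.

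Combining the two bounds gives exactly \eqref{eq:the_3_1_GD}. The main obstacle I anticipate is the second paragraph's subgradient bookkeeping: justifying rigorously that the proximal optimality condition's subgradient $\bm g \in \partial\Phi(\hat{\bm\pi})$ can be taken to be (or dominates) $\nabla_{\bm\pi}J_{\bm\rho}(\hat{\bm\pi},\bm p^\star)$ for the right choice of worst-case model $\bm p^\star$ — this is a Danskin/envelope-theorem statement for a maximum of smooth functions over a compact set that need not have a unique maximizer, so one must argue via the structure of the Fréchet subdifferential of $\Phi$ (which contains the convex hull of $\{\nabla_{\bm\pi}J_{\bm\rho}(\hat{\bm\pi},\bm p) : \bm p \in \arg\max\}$) and the fact that the performance-difference inequality holds simultaneously for every maximizing $\bm p^\star$, so it holds for the particular convex combination realizing $\bm g$. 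The Lipschitz/smoothness constants and the change-of-measure step are then routine given Lemma \ref{lem_sec3_1} and \eqref{eq:sec3_lem3.1}.
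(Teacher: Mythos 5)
Your proposal is correct and follows essentially the same route as the paper: the same decomposition of $\Phi(\bm{\pi})-\Phi(\bm{\pi}^{\star})$ at the proximal point, the same Lipschitz bound producing the $\frac{L_{\bm{\pi}}}{2\ell_{\bm{\pi}}}$ term, and the same combination of the performance-difference/gradient-dominance lemma with a Danskin-type description of $\partial\Phi$ as the convex hull of $\{\nabla_{\bm{\pi}}J_{\bm{\rho}}(\cdot,\bm{p})\,:\,\bm{p}\in\arg\max\}$ for the $\frac{D\sqrt{SA}}{1-\gamma}$ term. The ``main obstacle'' you flag --- relating the proximal subgradient to the gradients at the (possibly non-unique) worst-case kernels via the convex-combination argument --- is precisely the step the paper resolves with its Proposition on Danskin's theorem applied to the regularized objective $\tilde{J}_{\bm{\rho}}=J_{\bm{\rho}}+\frac{\ell_{\bm{\pi}}}{2}\|\bm{\pi}\|^{2}$.
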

Here, $\left\|\nicefrac{\bm{d}_{\bm{\rho}}^{\bm{\pi},\bm{p}}}{\bm{\rho}}\right\|_{\infty}$ is formally named as \emph{distribution mismatch coefficient} which is often assumed to be bounded~\citep{scherrer2014approximate,chen2019information,mei2020global,agarwal2021theory,leonardos2021global}.

This gradient-dominance type property implies that any first-order stationary point of the Moreau envelope results in an approximately global optimal
policy. We are now ready to state our main result.
\begin{theorem}[Global convergence for DRPG]\label{the_sec3_1}
Denote $\bm{\pi}_{t^{\star}}$ as the policy that Algorithm~\ref{alg:DL-RPG} outputs. Then, for a constant step size $\alpha:= \frac{\delta}{\sqrt{T}}$ with any $\delta>0$ and the initial tolerance $\epsilon_{0}\leq\sqrt{T}$, we have
\begin{equation}
      \Phi(\bm{\pi}_{t^{\star}}) - \min_{\bm{\pi}\in\Pi}\Phi(\bm{\pi}) \leq \epsilon,
\end{equation}
and $T$ is chosen to be a large enough such that
\begin{align}\label{chose_T}
        T &\geq \frac{\left(\frac{D\sqrt{SA}}{1-\gamma}+\frac{L_{\bm{\pi}}}{2\ell_{\bm{\pi}}}\right)^{4}\left(\frac{4\ell_{\bm{\pi}}S}{\delta} + 2\delta\ell_{\bm{\pi}} L_{\bm{\pi}}^{2} + \frac{4\ell_{\bm{\pi}}}{1-\gamma}\right)^{2}}{\epsilon^{4}}\notag\\
        &= \mathcal{O}(\epsilon^{-4}).
\end{align}
\end{theorem}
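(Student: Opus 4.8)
The plan is to combine two ingredients: a descent inequality for the Moreau envelope $\Phi_{\lambda}$ of $\Phi$ showing that the proximal gradient $\|\nabla\Phi_{\lambda}(\bm{\pi}_{t})\|$ is driven to zero at rate $\mathcal{O}(T^{-1/4})$, together with the gradient-dominance bound of Theorem~\ref{the_sec3_1_GD}, which turns a small $\|\nabla\Phi_{\lambda}(\bm{\pi})\|$ into a small $\Phi(\bm{\pi})-\Phi(\bm{\pi}^{\star})$. Fix $\lambda=\frac{1}{2\ell_{\bm{\pi}}}$ and, for each iterate, let $\hat{\bm{\pi}}_{t}=\arg\min_{\bm{\pi}\in\Pi}\bigl\{\Phi(\bm{\pi})+\frac{1}{2\lambda}\|\bm{\pi}-\bm{\pi}_{t}\|^{2}\bigr\}$ be the associated proximal point. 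This is well defined and unique because, by Lemma~\ref{lem_sec3_1}, $\Phi$ is $\ell_{\bm{\pi}}$-weakly convex and $\lambda<1/\ell_{\bm{\pi}}$, so the minimand is strongly convex over the compact convex set $\Pi$. I will repeatedly use the standard identities $\Phi_{\lambda}(\bm{\pi}_{t})=\Phi(\hat{\bm{\pi}}_{t})+\frac{1}{2\lambda}\|\hat{\bm{\pi}}_{t}-\bm{\pi}_{t}\|^{2}$ and $\nabla\Phi_{\lambda}(\bm{\pi}_{t})=\frac{1}{\lambda}(\bm{\pi}_{t}-\hat{\bm{\pi}}_{t})$, so that controlling $\|\bm{\pi}_{t}-\hat{\bm{\pi}}_{t}\|$ is equivalent to controlling $\|\nabla\Phi_{\lambda}(\bm{\pi}_{t})\|$.

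\textbf{Step 1: per-step descent of the Moreau envelope.} Starting from $\Phi_{\lambda}(\bm{\pi}_{t+1})\le\Phi(\hat{\bm{\pi}}_{t})+\frac{1}{2\lambda}\|\hat{\bm{\pi}}_{t}-\bm{\pi}_{t+1}\|^{2}$ (using $\hat{\bm{\pi}}_{t}$ as a feasible competitor in the definition of $\Phi_{\lambda}(\bm{\pi}_{t+1})$), I expand $\|\hat{\bm{\pi}}_{t}-\bm{\pi}_{t+1}\|^{2}$. Writing $\bm{g}_{t}:=\nabla_{\bm{\pi}}J_{\bm{\rho}}(\bm{\pi}_{t},\bm{p}_{t})$ and using $\hat{\bm{\pi}}_{t}\in\Pi$, $\bm{\pi}_{t+1}=\text{Proj}_{\Pi}(\bm{\pi}_{t}-\alpha_{t}\bm{g}_{t})$, and the nonexpansiveness of the projection, $\|\hat{\bm{\pi}}_{t}-\bm{\pi}_{t+1}\|^{2}\le\|\hat{\bm{\pi}}_{t}-\bm{\pi}_{t}\|^{2}+2\alpha_{t}\langle\bm{g}_{t},\hat{\bm{\pi}}_{t}-\bm{\pi}_{t}\rangle+\alpha_{t}^{2}\|\bm{g}_{t}\|^{2}$. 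The cross term is bounded using the $\ell_{\bm{\pi}}$-smoothness of $J_{\bm{\rho}}(\cdot,\bm{p}_{t})$ from Lemma~\ref{lem_sec3_1}: $\langle\bm{g}_{t},\hat{\bm{\pi}}_{t}-\bm{\pi}_{t}\rangle\le J_{\bm{\rho}}(\hat{\bm{\pi}}_{t},\bm{p}_{t})-J_{\bm{\rho}}(\bm{\pi}_{t},\bm{p}_{t})+\frac{\ell_{\bm{\pi}}}{2}\|\hat{\bm{\pi}}_{t}-\bm{\pi}_{t}\|^{2}$. This is where the inner-loop accuracy enters: $J_{\bm{\rho}}(\hat{\bm{\pi}}_{t},\bm{p}_{t})\le\Phi(\hat{\bm{\pi}}_{t})$ trivially, while $J_{\bm{\rho}}(\bm{\pi}_{t},\bm{p}_{t})\ge\Phi(\bm{\pi}_{t})-\epsilon_{t}$ by the stopping rule of Algorithm~\ref{alg:DL-RPG}, so the cross term is at most $\Phi(\hat{\bm{\pi}}_{t})-\Phi(\bm{\pi}_{t})+\epsilon_{t}+\frac{\ell_{\bm{\pi}}}{2}\|\hat{\bm{\pi}}_{t}-\bm{\pi}_{t}\|^{2}$. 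Plugging in $\|\bm{g}_{t}\|\le L_{\bm{\pi}}$ (Lemma~\ref{lem_sec3_1}), the proximal optimality estimate $\Phi(\hat{\bm{\pi}}_{t})-\Phi(\bm{\pi}_{t})\le-\frac{1}{2\lambda}\|\hat{\bm{\pi}}_{t}-\bm{\pi}_{t}\|^{2}$, and then substituting $\Phi_{\lambda}(\bm{\pi}_{t})=\Phi(\hat{\bm{\pi}}_{t})+\frac{1}{2\lambda}\|\hat{\bm{\pi}}_{t}-\bm{\pi}_{t}\|^{2}$ together with $\lambda=\frac{1}{2\ell_{\bm{\pi}}}$, the $\|\hat{\bm{\pi}}_{t}-\bm{\pi}_{t}\|^{2}$ terms collapse into a negative multiple of $\|\nabla\Phi_{\lambda}(\bm{\pi}_{t})\|^{2}$, giving
\begin{equation*}
\Phi_{\lambda}(\bm{\pi}_{t+1})\;\le\;\Phi_{\lambda}(\bm{\pi}_{t})-\frac{\alpha_{t}}{4}\,\|\nabla\Phi_{\lambda}(\bm{\pi}_{t})\|^{2}+2\ell_{\bm{\pi}}\alpha_{t}\epsilon_{t}+\ell_{\bm{\pi}}L_{\bm{\pi}}^{2}\alpha_{t}^{2}.
\end{equation*}
This is the inexact, projected analogue of the weakly-convex descent lemma of~\citep{davis2019stochastic}.

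\textbf{Step 2: telescope and invoke gradient dominance.} Summing the last display over $t=0,\dots,T-1$ with $\alpha_{t}\equiv\alpha=\delta/\sqrt{T}$, bounding $\Phi_{\lambda}(\bm{\pi}_{0})-\Phi_{\lambda}(\bm{\pi}_{T})\le\Phi_{\lambda}(\bm{\pi}_{0})-\min_{\bm{\pi}\in\Pi}\Phi(\bm{\pi})=\mathcal{O}(\ell_{\bm{\pi}}S)$ (via $\Phi_{\lambda}(\bm{\pi}_{0})\le\Phi(\bm{\pi}^{\star})+\frac{1}{2\lambda}\|\bm{\pi}^{\star}-\bm{\pi}_{0}\|^{2}$ and $\|\bm{\pi}^{\star}-\bm{\pi}_{0}\|^{2}=\mathcal{O}(S)$ since $\Pi=(\Delta^{A})^{S}$), using the geometric-tail estimate $\sum_{t=0}^{T-1}\epsilon_{t}\le\frac{\epsilon_{0}}{1-\gamma}\le\frac{\sqrt{T}}{1-\gamma}$ (which is exactly where the hypotheses $\epsilon_{t+1}\le\gamma\epsilon_{t}$ and $\epsilon_{0}\le\sqrt{T}$ are used), and $\sum_{t}\alpha^{2}=\delta^{2}$, one obtains, up to the absolute constants absorbed into the bracket,
\begin{equation*}
\min_{0\le t\le T-1}\|\nabla\Phi_{\lambda}(\bm{\pi}_{t})\|^{2}\;\le\;\frac{1}{\sqrt{T}}\left(\frac{4\ell_{\bm{\pi}}S}{\delta}+2\delta\ell_{\bm{\pi}}L_{\bm{\pi}}^{2}+\frac{4\ell_{\bm{\pi}}}{1-\gamma}\right)\;=:\;\frac{B}{\sqrt{T}}.
\end{equation*}
Let $t_{g}$ attain this minimum. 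Theorem~\ref{the_sec3_1_GD} then gives $\Phi(\bm{\pi}_{t_{g}})-\min_{\bm{\pi}\in\Pi}\Phi(\bm{\pi})\le\bigl(\tfrac{D\sqrt{SA}}{1-\gamma}+\tfrac{L_{\bm{\pi}}}{2\ell_{\bm{\pi}}}\bigr)\sqrt{B}\,T^{-1/4}$. Finally, the output rule of Algorithm~\ref{alg:DL-RPG} and the inner-loop guarantee yield $\Phi(\bm{\pi}_{t^{\star}})\le J_{\bm{\rho}}(\bm{\pi}_{t^{\star}},\bm{p}_{t^{\star}})+\epsilon_{t^{\star}}\le J_{\bm{\rho}}(\bm{\pi}_{t_{g}},\bm{p}_{t_{g}})+\epsilon_{t^{\star}}\le\Phi(\bm{\pi}_{t_{g}})+\epsilon_{t^{\star}}$, because $t^{\star}$ is chosen to minimize $J_{\bm{\rho}}(\bm{\pi}_{t'},\bm{p}_{t'})$ and $J_{\bm{\rho}}\le\Phi$ pointwise. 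Combining, $\Phi(\bm{\pi}_{t^{\star}})-\min_{\bm{\pi}\in\Pi}\Phi(\bm{\pi})\le\bigl(\tfrac{D\sqrt{SA}}{1-\gamma}+\tfrac{L_{\bm{\pi}}}{2\ell_{\bm{\pi}}}\bigr)\sqrt{B}\,T^{-1/4}+\epsilon_{t^{\star}}$. Choosing $\epsilon_{0}$ a small enough constant (so that $\epsilon_{t^{\star}}\le\epsilon_{0}$ contributes at most $\epsilon/2$, which respects $\epsilon_{0}\le\sqrt{T}$ for $T$ large) and then taking $T$ so that the first term is at most $\epsilon/2$, i.e. $T$ at least a constant multiple of $\bigl(\tfrac{D\sqrt{SA}}{1-\gamma}+\tfrac{L_{\bm{\pi}}}{2\ell_{\bm{\pi}}}\bigr)^{4}B^{2}\epsilon^{-4}$ — which, up to the absolute constants absorbed above, is the bound~\eqref{chose_T} — yields $\Phi(\bm{\pi}_{t^{\star}})-\min_{\bm{\pi}\in\Pi}\Phi(\bm{\pi})\le\epsilon$ and the $\mathcal{O}(\epsilon^{-4})$ iteration complexity.

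\textbf{Main obstacle.} I expect Step~1 to be the crux: the Moreau-envelope/weak-convexity argument must be carried out simultaneously with (i) the projection onto $\Pi$, (ii) the \emph{surrogate} gradient $\nabla_{\bm{\pi}}J_{\bm{\rho}}(\bm{\pi}_{t},\bm{p}_{t})$ in place of a genuine (sub)gradient of the nonsmooth, nonconvex $\Phi$, and (iii) the inner-loop error $\epsilon_{t}$, all while keeping a clean telescoping quantity; the non-smoothness and non-convexity of $\Phi$ are precisely what rule out the simpler MDP-style analysis. A secondary subtlety is that $\|\nabla\Phi_{\lambda}(\bm{\pi}_{t})\|$ is not computable, so the returned policy must be selected through the computable proxy $J_{\bm{\rho}}(\bm{\pi}_{t},\bm{p}_{t})$, and closing the residual gap between $J_{\bm{\rho}}(\bm{\pi}_{t^{\star}},\bm{p}_{t^{\star}})$ and $\Phi(\bm{\pi}_{t^{\star}})$ is what forces the decreasing-tolerance schedule and the condition $\epsilon_{0}\le\sqrt{T}$.
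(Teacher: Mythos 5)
Your proposal follows essentially the same route as the paper's proof: an inexact Moreau-envelope descent inequality derived from the nonexpansiveness of the projection, the $\ell_{\bm{\pi}}$-weak convexity of $J_{\bm{\rho}}(\cdot,\bm{p}_{t})$, and the inner-loop tolerance $\epsilon_{t}$; a telescoping sum using $\sum_{t}\epsilon_{t}\le\epsilon_{0}/(1-\gamma)\le\sqrt{T}/(1-\gamma)$; and then Theorem~\ref{the_sec3_1_GD} to convert a small Moreau-envelope gradient into near-global-optimality. The only real divergence is the endgame: the paper averages the gradient-dominance bound over all $t$ and applies Cauchy--Schwarz to $\sum_{t}\|\nabla\Phi_{\frac{1}{2\ell_{\bm{\pi}}}}(\bm{\pi}_{t})\|$, concluding $\min_{t}\bigl(\Phi(\bm{\pi}_{t})-\min_{\bm{\pi}\in\Pi}\Phi(\bm{\pi})\bigr)\le\epsilon$ directly, whereas you pass through the best Moreau-gradient iterate $t_{g}$ and tie it to the algorithm's output via $\Phi(\bm{\pi}_{t^{\star}})\le J_{\bm{\rho}}(\bm{\pi}_{t^{\star}},\bm{p}_{t^{\star}})+\epsilon_{t^{\star}}\le\Phi(\bm{\pi}_{t_{g}})+\epsilon_{t^{\star}}$, which forces the additional assumption that $\epsilon_{0}=\mathcal{O}(\epsilon)$ --- a hypothesis not present in the theorem statement. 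Your accounting of the output rule is in fact more careful than the paper's (which quietly identifies the minimizer of $J_{\bm{\rho}}(\bm{\pi}_{t},\bm{p}_{t})$ with an iterate achieving $\Phi(\bm{\pi}_{t})-\Phi(\bm{\pi}^{\star})\le\epsilon$), but as written your final bound carries the residual $+\epsilon_{t^{\star}}$ term that the stated hypotheses $\epsilon_{0}\le\sqrt{T}$ and $\epsilon_{t+1}\le\gamma\epsilon_{t}$ alone do not control.
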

Compared to the ordinary MDPs, the convergence analysis for solving RMDPs poses additional difficulties as objective function $\Phi(\bm{\pi})$ is not only non-convex but also non-differentiable \citep{nouiehed2019solving,lin2020gradient}. Theorem~\ref{the_sec3_1} shows that the proposed Algorithm~\ref{alg:DL-RPG} converges to the global optimal for RMDPs by the following strategy. We first show the existence of an $\epsilon$-first order stationary point (see Definition~\ref{def:FOSP}) of $\Phi(\bm{\pi})$. More concretely, we prove the gradient of the Moreau envelope is smaller than $\epsilon$ on the output policy.
Then, by applying the derived gradient dominance condition (Theorem~\ref{the_sec3_1_GD}), we finally complete the proof as this stationary point is arbitrarily close to the global optimal solution. 

Theorem~\ref{the_sec3_1} shows that DRPG converges to an $\epsilon$ global optimum within $\mathcal{O}(\epsilon^{-4})$ steps, which has a slower rate compared to standard policy gradient methods~\cite{agarwal2021theory}. The additional complexity arises from this need to control the approximation error in order to avoid looping. In particular, computational errors at the inner loops could break the convergence of the outer loop. Similar behaviors are also observed in policy iteration for robust MDPs~\cite{condon1990algorithms,ho2021partial}. Nevertheless, our analysis matches and is consistent with the other minimax convergence results obtained in non-convex non-concave minimax optimization~\cite{davis2019stochastic,jin2020local}, and provides a conservative convergence guarantee. 

DRPG relies on an oracle that outputs at least one worst-case transition kernel for any given $\bm{\pi}$. In fact, solving the inner loop problem could still be NP-hard for non-rectangular cases~\cite{wiesemann2013robust}. The following section proposes an algorithm for solving the inner loop problem.

\section{Solving the Inner Loop}\label{sec:inner-loop-maxim}

So far, we have described the outline of DRPG and proved its global convergence. In Algorithm~\ref{alg:DL-RPG}, the transition kernel $\bm{p}_{t}$ is obtained by approximately solving the inner maximization problem with a fixed outer policy $\bm{\pi}_{k}\in\Pi$:

\begin{equation}\label{prob_inRMDP}
    \max_{\bm{p}\in\mathcal{P}}  J_{\bm{\rho}}(\bm{\pi}_{k},\bm{p}) = \max_{\bm{p}\in\mathcal{P}} \bm{\rho}^{\top}\bm{v}^{\bm{\pi}_{k},\bm{p}}.
\end{equation}
Whereas assumptions of boundness and compactness are used to ensure the inner maximum existing for the maximization problem, solving this maximization problem is still computationally challenging due to its non-convexity~\cite{wiesemann2013robust}. This section discusses two solution methods for solving the inner maximization problem, which we refer to as the \emph{robust policy evaluation problem}. Note that the convergence results in \cref{sec:main-results} are independent of the method used to solve this robust policy evaluation problem.

We now introduce two broad classes of ambiguity sets that are considered in the rest of this section. An ambiguity set $\mathcal{P}$ is $(s,a)$-rectangular~\citep{iyengar2005robust,nilim2005robust,le2007robust} if it is a Cartesian product of sets $\mathcal{P}_{s,a}\subseteq\Delta^{S}$ for each state $s\in\mathcal{S}$ and action $a\in\mathcal{A}$, \ie,
\begin{equation*}
\mathcal{P}=\{\bm{p}\in(\Delta^{S})^{S\times A}\mid\bm{p}_{s,a}\in\mathcal{P}_{s,a},\;\forall s\in\mathcal{S},a\in\mathcal{A}\},
\end{equation*}
whereas an ambiguity set $\mathcal{P}$ is $s$-rectangular~\citep{wiesemann2013robust} if it is defined as a Cartesian product of
sets $\mathcal{P}_{s}\subseteq(\Delta^{S})^{A}$, \ie,
\begin{equation*}
\mathcal{P}=\{\bm{p}\in(\Delta^{S})^{S\times A}\mid(\bm{p}_{s,a})_{a\in\mathcal{A}}\in\mathcal{P}_{s},\;\forall s\in\mathcal{S}\}.
\end{equation*}

\subsection{Value-iteration Approach}
\label{sec:inner_with_rect}

The optimum of the inner problem~\eqref{prob_inRMDP} is attained by solving $\bm{v}^{\bm{\pi}_{k}}:=\min_{\bm{p}\in\mathcal{P}}\bm{v}^{\bm{\pi}_{k},\bm{p}}$, which is commonly defined as the robust value function~\citep{iyengar2005robust,nilim2005robust,wiesemann2013robust}. The robust value function $\bm{v}^{\bm{\pi}}$ of a rectangular RMDP for a policy $\bm{\pi}\in\Pi$ can be computed using the robust Bellman policy update $\mathcal{T}_{\bm{\pi}}:\mathbb{R}^{S}\rightarrow\mathbb{R}^{S}$~\cite{ho2021partial}. Specifically, for $(s,a)$-rectangular RMDPs, the operator $\mathcal{T}_{\bm{\pi}}$ is
defined for each state $s\in\mathcal{S}$
\begin{equation*}
    (\mathcal{T}_{\bm{\pi}}\bm{v})_{s} := \sum_{a\in\mathcal{A}} \left(\pi_{sa}\cdot\max_{\bm{p}_{sa}\in\mathcal{P}_{sa}}\bm{p}_{sa}^{\top}(\bm{c}_{sa}+\gamma \bm{v})\right),
\end{equation*}
while for $s$-rectangular RMDPs, the the operator $\mathcal{T}_{\bm{\pi}}$ is defined as
\begin{equation*}
    (\mathcal{T}_{\bm{\pi}}\bm{v})_{s} := \max_{\bm{p}_{s}\in\mathcal{P}_{s}}\left\{\sum_{a\in\mathcal{A}}\pi_{sa}\cdot\bm{p}_{sa}^{\top}(\bm{c}_{sa}+\gamma \bm{v})\right\}.
\end{equation*}

For rectangular RMDPs, $\mathcal{T}_{\bm{\pi}}$ is a contraction and the robust value function is the unique solution to $\bm{v}^{\bm{\pi}} = \mathcal{T}_{\bm{\pi}}\bm{v}^{\bm{\pi}}$. To solve the robust value function, the state-of-the-art method is to compute the sequence $\bm{v}_{t+1}^{\bm{\pi}} = \mathcal{T}_{\bm{\pi}}\bm{v}_{t}^{\bm{\pi}}$ with any initial values $\bm{v}_{0}^{\bm{\pi}}$, which is similar to the policy evaluation for ordinary MDPs.

Note that computing the value function update $\bm{v}_{t}^{\bm{\pi}}$ to $\bm{v}_{t+1}^{\bm{\pi}}$ requires solving an optimization problem. For the common ambiguity sets which are constrained by the support information and one additional convex constraint (\eg\; $L_{1}$-norm ball), one has to solve $A$ convex optimization problems with $\mathcal{O}(S)$ variables and $\mathcal{O}(S)$ constraints for all $s\in\mathcal{S}$ at each iteration~\cite{grand2021scalable}. Examples of common ambiguity sets are provided in~\Cref{sec:examples_sets}.

\subsection{Gradient-based Approach}
Unlike the extensive study of efficient value-based methods~\citep{iyengar2005robust,nilim2005robust,wiesemann2013robust,petrik2014raam,ho2018fast,behzadian2021fast}, there has been little work on designing gradient-based algorithms to compute the robust value function. In this subsection, a first gradient-based algorithm is proposed in Algorithm~\ref{alg:PGD_inner} to solve the inner-loop robust policy evaluation problem with a global convergence guarantee, under the assumptions of having rectangular and convex ambiguity set.

\begin{algorithm}[t]
\caption{Projected gradient descent for the inner problem}
\label{alg:PGD_inner}
\begin{algorithmic}
\STATE {\bfseries Input:} Target fixed policy $\bm{\pi}_{k}$, initial transition kernel $\bm{p}_{0}$, iteration time $T_{k}$, step size sequence $\{\beta_{t}\}_{t\geq0}$
\FOR{$t = 0,1,\dots,T_{k}-1$}
\STATE Set $\bm{p}_{t+1} \gets \text{Proj}_{\mathcal{P}}(\bm{p}_{t} + \beta_{t}\nabla_{\bm{p}}J_{\bm{\rho}}(\bm{\pi}_{k},\bm{p}_{t}))$. 
\ENDFOR
\STATE {\bfseries Output:} 
$\bm{p}_{t^{\star}}\in\{\bm{p}_{0},\dots,\bm{p}_{T_{k}-1}\}$ s.t. 
$J_{\bm{\rho}}(\bm{\pi}_{k},\bm{p}_{t^{\star}}) = \min_{t \in \{0,\dots,T_{k}-1\}}J_{\bm{\rho}}(\bm{\pi}_{k},\bm{p}_{t})$
\end{algorithmic}
\end{algorithm}

Note that the inner problem \eqref{prob_inRMDP} could be regarded as a constrained non-concave maximization problem when the outer policy $\bm{\pi}_{k}$ is fixed. Therefore, the most intuitive approach to solve \eqref{prob_inRMDP} is to iteratively update the variable by following its ascent direction within the feasible set. 

To maximize $J_{\rho}(\bm{\pi}_{k},\bm{p})$, Algorithm~\ref{alg:PGD_inner} iteratively computes the \emph{projected gradient step} on $\bm{p}$; that is, at iteration $t$, we compute
\begin{equation}
    \bm{p}_{t+1} = \text{Proj}_{\mathcal{P}}(\bm{p}_{t} + \beta_{t}\nabla_{\bm{p}}J_{\bm{\rho}}(\bm{\pi}_{k},\bm{p}_{t})),
\end{equation}
which depends on the explicit form of $\mathcal{P}$. Although $(s,a)$-rectangular ambiguity sets can be viewed as a special case of $s$-rectangular ambiguity
sets in general~\citep{wiesemann2013robust,ho2021partial}, the implementations of the projected gradient step for two rectangular ambiguity sets are different.

For $(s,a)$-rectangular RMDPs, this projected gradient update can be decoupled to multiple projection updates that across state-action pairs such as
\begin{equation*}
    \bm{p}_{t+1,sa} = \text{Proj}_{\mathcal{P}_{s,a}}(\bm{p}_{t,sa} + \beta_{t}\nabla_{\bm{p}_{sa}}J_{\bm{\rho}}(\bm{\pi}_{k},\bm{p}_{t})).
\end{equation*}
Similarly, for $s$-rectangular RMDPs, the projected gradient update can be computed across states as 
\begin{equation*}
    \bm{p}_{t+1,s} = \text{Proj}_{\mathcal{P}_{s}}(\bm{p}_{t,s} + \beta_{t}\nabla_{\bm{p}_{s}}J_{\bm{\rho}}(\bm{\pi}_{k},\bm{p}_{t})).
\end{equation*}
If the ambiguity set is convex, the projected update can be implemented by solving a convex optimization problem with a quadratic objective.

\subsection{Inner Loop Global Optimality}
To establish some general convergence properties of Algorithm~\ref{alg:PGD_inner}, we first derive some continuity properties for the inner objective~\eqref{prob_inRMDP}. Then, we prove the global optimality of Algorithm~\ref{alg:PGD_inner} by introducing a particular gradient dominance condition for the inner problem.

The next lemma derives the gradient for the inner loop.

\begin{lemma}[Differentiability]\label{lem_sec4_1}
The partial derivative of $J_{\bm{\rho}}(\bm{\pi},\bm{p})$ has the explicit form for any $(s,a,s')\in\mathcal{S}\times\mathcal{A}\times\mathcal{S}$,  
\begin{equation*}
    \frac{\partial J_{\bm{\rho}}(\bm{\pi},\bm{p})}{\partial p_{sas'}} = \frac{1}{1-\gamma}d_{\bm{\rho}}^{\bm{\pi},\bm{p}}(s)\pi_{sa}\left(c_{sas'}+\gamma v^{\bm{\pi},\bm{p}}_{s'}\right).
\end{equation*} 
Moreover, $J_{\bm{\rho}}(\bm{\pi},\bm{p})$ is $L_{\bm{p}}$-Lipschitz in $\bm{p}$ with $L_{\bm{p}}:= \frac{\sqrt{SA}}{(1-\gamma)^{2}}$.
\end{lemma}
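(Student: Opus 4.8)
The plan is to compute the partial derivative $\partial J_{\bm\rho}(\bm\pi,\bm p)/\partial p_{sas'}$ directly from a closed-form expression for $J_{\bm\rho}(\bm\pi,\bm p)$, and then bound the full gradient vector in $\ell_2$-norm. For the first part, I would start from the standard matrix identity $\bm v^{\bm\pi,\bm p} = (\bm I - \gamma \bm P^{\bm\pi})^{-1}\bm c^{\bm\pi}$, where $\bm P^{\bm\pi}$ is the $S\times S$ policy-induced transition matrix with entries $P^{\bm\pi}_{ss'} = \sum_{a}\pi_{sa}p_{sas'}$ and $\bm c^{\bm\pi}$ is the expected-cost vector with entries $c^{\bm\pi}_s = \sum_a \pi_{sa}\sum_{s'} p_{sas'}c_{sas'}$. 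Then $J_{\bm\rho}(\bm\pi,\bm p) = \bm\rho^\top(\bm I-\gamma \bm P^{\bm\pi})^{-1}\bm c^{\bm\pi}$. Differentiating with respect to the scalar $p_{sas'}$, I use the matrix-inverse derivative rule $\partial_x M^{-1} = -M^{-1}(\partial_x M)M^{-1}$ together with the product rule for the $\bm c^{\bm\pi}$ term. Note that $p_{sas'}$ enters $\bm P^{\bm\pi}$ only in row $s$, column $s'$, with $\partial P^{\bm\pi}/\partial p_{sas'} = \pi_{sa}\,\bm e_s \bm e_{s'}^\top$, and enters $\bm c^{\bm\pi}$ only in entry $s$, with $\partial c^{\bm\pi}_s/\partial p_{sas'} = \pi_{sa} c_{sas'}$.

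Assembling these, the derivative becomes
\begin{equation*}
\frac{\partial J_{\bm\rho}}{\partial p_{sas'}}
= \pi_{sa}\Big(\gamma\,\big[\bm\rho^\top(\bm I-\gamma \bm P^{\bm\pi})^{-1}\big]_s\, v^{\bm\pi,\bm p}_{s'} + \big[\bm\rho^\top(\bm I-\gamma \bm P^{\bm\pi})^{-1}\big]_s\, c_{sas'}\Big),
\end{equation*}
and I would then identify the row-vector factor $\bm\rho^\top(\bm I-\gamma \bm P^{\bm\pi})^{-1}$ with the (unnormalized) occupancy measure: by Definition~\ref{def:occu}, $d_{\bm\rho}^{\bm\pi,\bm p}(s) = (1-\gamma)\big[\bm\rho^\top(\bm I-\gamma \bm P^{\bm\pi})^{-1}\big]_s$, since $\sum_{t}\gamma^t (\bm P^{\bm\pi})^t = (\bm I-\gamma \bm P^{\bm\pi})^{-1}$. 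Substituting gives exactly $\frac{1}{1-\gamma}d_{\bm\rho}^{\bm\pi,\bm p}(s)\pi_{sa}(c_{sas'}+\gamma v^{\bm\pi,\bm p}_{s'})$, as claimed. An alternative, perhaps cleaner, route is to mimic the policy-gradient derivation already used for \eqref{eq:sec3_lem3.1}: treat the perturbation of a single $p_{sas'}$ via a performance-difference / telescoping argument on the trajectory distribution, which avoids matrix calculus but requires the same bookkeeping.

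For the Lipschitz bound, since $J_{\bm\rho}$ is continuously differentiable in $\bm p$ on the compact set $\mathcal P$, it suffices to bound $\|\nabla_{\bm p}J_{\bm\rho}(\bm\pi,\bm p)\|_2$ uniformly. I would bound each coordinate: $c_{sas'}+\gamma v^{\bm\pi,\bm p}_{s'} \le 1 + \gamma/(1-\gamma) = 1/(1-\gamma)$ using Assumption~\ref{assump_1} and the standard value-function bound $v^{\bm\pi,\bm p}_{s'}\le 1/(1-\gamma)$, so each partial derivative is at most $\frac{1}{(1-\gamma)^2}d_{\bm\rho}^{\bm\pi,\bm p}(s)\pi_{sa}$. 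Squaring and summing over $(s,a,s')$: $\sum_{s,a,s'}\big(d_{\bm\rho}^{\bm\pi,\bm p}(s)\pi_{sa}\big)^2 \le \sum_{s,a,s'} d_{\bm\rho}^{\bm\pi,\bm p}(s)\pi_{sa} = S\sum_{s,a}d_{\bm\rho}^{\bm\pi,\bm p}(s)\pi_{sa} = S\sum_s d_{\bm\rho}^{\bm\pi,\bm p}(s) = S$, where I used that $d_{\bm\rho}^{\bm\pi,\bm p}(s)\pi_{sa}\in[0,1]$ so squares are dominated by the values themselves, that $\sum_a\pi_{sa}=1$, and that $d_{\bm\rho}^{\bm\pi,\bm p}$ is a probability distribution. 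Hence $\|\nabla_{\bm p}J_{\bm\rho}\|_2 \le \frac{1}{(1-\gamma)^2}\sqrt{S\cdot A}$ — wait, I must be careful about the $A$ factor: after bounding each term by $d_{\bm\rho}^{\bm\pi,\bm p}(s)\pi_{sa}$ and summing the $s'$ index (which contributes a factor $S$), I get $S\sum_{s,a}d_{\bm\rho}^{\bm\pi,\bm p}(s)^2\pi_{sa}^2\le S\sum_{s,a}d_{\bm\rho}^{\bm\pi,\bm p}(s)\pi_{sa} = S$; this yields $L_{\bm p}=\sqrt S/(1-\gamma)^2$, so to recover the stated $\sqrt{SA}/(1-\gamma)^2$ I would instead bound more loosely, e.g.\ $\pi_{sa}^2\le\pi_{sa}$ but $d_{\bm\rho}^{\bm\pi,\bm p}(s)^2\le d_{\bm\rho}^{\bm\pi,\bm p}(s)$ separately, giving $\sum_{s,a,s'}d_{\bm\rho}^{\bm\pi,\bm p}(s)\pi_{sa}\cdot 1 = S$ after summing $s'$ — consistent with $\sqrt S$. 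The main obstacle is therefore getting the constant exactly right: reconciling my computation with the paper's $\sqrt{SA}$ suggests they use a coarser coordinate-wise bound (treating $\pi_{sa}\le 1$ rather than summing it), and I would follow whichever bookkeeping the appendix adopts; the derivative formula itself is the substantive content and the mean-value inequality $\|J_{\bm\rho}(\bm\pi,\bm p_1)-J_{\bm\rho}(\bm\pi,\bm p_2)\|\le \sup\|\nabla_{\bm p}J_{\bm\rho}\|\,\|\bm p_1-\bm p_2\|$ on the convex set $\mathcal P$ closes the argument.
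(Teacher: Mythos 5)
Your proposal is correct, and the derivative computation takes a genuinely different (and cleaner) route than the paper's. The paper derives $\partial J_{\bm{\rho}}/\partial p_{sas'}$ by splitting $\partial v^{\bm{\pi},\bm{p}}_{\hat{s}}/\partial p_{sas'}$ into the cases $\hat{s}\neq s$ and $\hat{s}=s$ and unrolling the Bellman recursion term by term until the occupancy measure appears; you instead write $J_{\bm{\rho}}=\bm{\rho}^{\top}(\bm{I}-\gamma\bm{P}^{\bm{\pi}})^{-1}\bm{c}^{\bm{\pi}}$ and apply the resolvent derivative identity, then identify $\bm{\rho}^{\top}(\bm{I}-\gamma\bm{P}^{\bm{\pi}})^{-1}$ with $\tfrac{1}{1-\gamma}\bm{d}^{\bm{\pi},\bm{p}}_{\bm{\rho}}$ via the Neumann series. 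Both yield the same formula; the matrix route is shorter and makes the structure (rank-one perturbation of row $s$) transparent, while the paper's recursion mirrors its earlier policy-gradient derivation and avoids matrix calculus. For the Lipschitz part your argument matches the paper's: bound $|c_{sas'}+\gamma v^{\bm{\pi},\bm{p}}_{s'}|\leq \tfrac{1}{1-\gamma}$ and sum squares of the coordinates. Your observation that keeping the $\pi_{sa}$ factor and using $\sum_{a}\pi_{sa}=1$ actually yields the tighter constant $\sqrt{S}/(1-\gamma)^{2}$ is correct; the paper simply drops $\pi_{sa}\leq 1$ before summing, which produces the extra $\sqrt{A}$. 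Since a tighter bound implies the stated one, your hedging at the end is unnecessary — either bookkeeping establishes the lemma as stated. The only minor caveat is that the mean-value inequality should be applied on the full convex set $(\Delta^{S})^{S\times A}$ (where the gradient bound holds) rather than on $\mathcal{P}$ itself, which is only assumed compact; this is immaterial since $\mathcal{P}$ is contained in that convex set.
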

If a function is smooth, then a gradient update with a sufficiently small step size is guaranteed to improve the objective
value. As it turns out, inner problem is $\ell_{\bm{p}}$-smooth.
\begin{lemma}[Smoothness]\label{lem_sec4_2}
The function $J_{\bm{\rho}}(\bm{\pi},\bm{p})$ is $\ell_{\bm{p}}$-smooth in $\bm{p}$ with $\ell_{\bm{p}}:= \frac{2\gamma S^{2}}{(1-\gamma)^{3}}$.
\end{lemma}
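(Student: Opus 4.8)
The plan is to bound the gradient discrepancy $\|\nabla_{\bm{p}}J_{\bm{\rho}}(\bm{\pi},\bm{p}_1)-\nabla_{\bm{p}}J_{\bm{\rho}}(\bm{\pi},\bm{p}_2)\|$ directly from the closed form of the partial derivatives in Lemma~\ref{lem_sec4_1}. Writing $g_{sas'}(\bm{p}):=\tfrac{1}{1-\gamma}\,d_{\bm{\rho}}^{\bm{\pi},\bm{p}}(s)\,\pi_{sa}\,(c_{sas'}+\gamma v^{\bm{\pi},\bm{p}}_{s'})$, for a fixed $\bm{\pi}$ the dependence on $\bm{p}$ enters only through the occupancy measure $d_{\bm{\rho}}^{\bm{\pi},\bm{p}}(s)$ and the value $v^{\bm{\pi},\bm{p}}_{s'}$. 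Adding and subtracting a cross term gives
\begin{equation*}
\begin{aligned}
g_{sas'}(\bm{p}_1)-g_{sas'}(\bm{p}_2)=\frac{\pi_{sa}}{1-\gamma}\big[&(d_{\bm{\rho}}^{\bm{\pi},\bm{p}_1}(s)-d_{\bm{\rho}}^{\bm{\pi},\bm{p}_2}(s))(c_{sas'}+\gamma v^{\bm{\pi},\bm{p}_1}_{s'})\\
&+\gamma\,d_{\bm{\rho}}^{\bm{\pi},\bm{p}_2}(s)(v^{\bm{\pi},\bm{p}_1}_{s'}-v^{\bm{\pi},\bm{p}_2}_{s'})\big],
\end{aligned}
\end{equation*}
so it suffices to control the perturbations of $\bm{v}^{\bm{\pi},\bm{p}}$ and of $\bm{d}_{\bm{\rho}}^{\bm{\pi},\bm{p}}$ as functions of $\bm{p}$, using the uniform bounds $|c_{sas'}+\gamma v^{\bm{\pi},\bm{p}_1}_{s'}|\le\tfrac{1}{1-\gamma}$ (Assumption~\ref{assump_1} and $\|\bm{v}^{\bm{\pi},\bm{p}}\|_\infty\le\tfrac1{1-\gamma}$) and $d_{\bm{\rho}}^{\bm{\pi},\bm{p}_2}(s)\le 1$.

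For the two auxiliary bounds I would use the resolvent identity $A^{-1}-B^{-1}=A^{-1}(B-A)B^{-1}$ with $A=I-\gamma P^{\bm{\pi},\bm{p}_1}$ and $B=I-\gamma P^{\bm{\pi},\bm{p}_2}$, where $P^{\bm{\pi},\bm{p}}$ denotes the policy-induced transition matrix, $\bm{v}^{\bm{\pi},\bm{p}}=(I-\gamma P^{\bm{\pi},\bm{p}})^{-1}\bm{r}^{\bm{p}}$ with $\bm{r}^{\bm{p}}$ the state-wise expected cost (affine in $\bm{p}$), and the row vector $\bm{d}_{\bm{\rho}}^{\bm{\pi},\bm{p}}$ equals $(1-\gamma)\,\bm{\rho}^{\top}(I-\gamma P^{\bm{\pi},\bm{p}})^{-1}$. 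Since $(I-\gamma P)^{-1}=\sum_{t\ge0}\gamma^tP^t$ has operator norm at most $\tfrac{1}{1-\gamma}$ both on column vectors in $\ell_\infty$ and on row vectors in $\ell_1$ (each $P^t$ being row-stochastic), and using $\|\bm{\rho}\|_1=1$ together with the fact that the $s$-th coordinate of $\bm{r}^{\bm{p}}$ changes by at most $\sum_a\pi_{sa}\|\bm{p}_{1,sa}-\bm{p}_{2,sa}\|_1$, I would obtain $\|\bm{v}^{\bm{\pi},\bm{p}_1}-\bm{v}^{\bm{\pi},\bm{p}_2}\|_\infty\le\tfrac{1}{(1-\gamma)^2}\max_s\sum_a\pi_{sa}\|\bm{p}_{1,sa}-\bm{p}_{2,sa}\|_1$ and $\|\bm{d}_{\bm{\rho}}^{\bm{\pi},\bm{p}_1}-\bm{d}_{\bm{\rho}}^{\bm{\pi},\bm{p}_2}\|_1\le\tfrac{\gamma}{1-\gamma}\max_s\sum_a\pi_{sa}\|\bm{p}_{1,sa}-\bm{p}_{2,sa}\|_1$. (These may be stated as small auxiliary lemmas; the arguments are routine perturbation estimates.)

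Finally I would substitute these estimates into the cross-term identity, square and sum over $(s,a,s')$, and convert the per-state $\ell_1$ transition discrepancies to the global Euclidean norm via $\|\bm{p}_{1,sa}-\bm{p}_{2,sa}\|_1\le\sqrt S\,\|\bm{p}_{1,sa}-\bm{p}_{2,sa}\|_2$, Cauchy--Schwarz over $a$, $\sum_a\pi_{sa}^2\le 1$, $\|\bm{d}\|_2\le\|\bm{d}\|_1$ and $\|\bm{v}\|_2\le\sqrt S\|\bm{v}\|_\infty$; collecting the resulting powers of $S$ and $(1-\gamma)^{-1}$ yields a Lipschitz constant for $\nabla_{\bm{p}}J_{\bm{\rho}}$ no larger than $\ell_{\bm{p}}=\tfrac{2\gamma S^2}{(1-\gamma)^3}$. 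The main obstacle is precisely this last bookkeeping step: the gradient lives in $\mathbb{R}^{S\times A\times S}$ with the Euclidean norm, whereas the sharp perturbation bounds for $\bm{v}$ and $\bm{d}$ come in $\ell_\infty$/$\ell_1$ and are expressed through per-state $\ell_1$ distances between the rows of $\bm{p}$, so one must track the dimensional factors consistently to avoid accumulating a spurious extra $\sqrt S$ or $\sqrt A$; any bound of the stated order is, however, sufficient for the analysis that follows.
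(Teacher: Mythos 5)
Your proposal is correct in outline but follows a genuinely different route from the paper. The paper proves smoothness by bounding the \emph{second} directional derivative: it writes $v^{\bm{\pi},\bm{p}(\alpha)}_{s}=\bm{e}_{s}^{\top}\bm{M}(\alpha)\bm{c}(\alpha)$ with $\bm{M}(\alpha)=(\bm{I}-\gamma\bm{P}(\alpha))^{-1}$ along the line $\bm{p}(\alpha)=\bm{p}+\alpha\bm{z}$, differentiates twice in $\alpha$, bounds each of the four resulting terms using $\|\bm{M}(\alpha)\bm{x}\|_{\infty}\leq\frac{1}{1-\gamma}\|\bm{x}\|_{\infty}$ and $\|\frac{\partial\bm{P}}{\partial\alpha}\bm{x}\|_{\infty}\leq S\|\bm{z}\|_{2}\|\bm{x}\|_{\infty}$, and concludes $|\bm{y}^{\top}\nabla^{2}_{\bm{p}}J\,\bm{y}|\leq\ell_{\bm{p}}\|\bm{y}\|^{2}$. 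You instead bound the \emph{first-order} gradient discrepancy directly from the closed form in Lemma~\ref{lem_sec4_1}, splitting it into a $d$-perturbation term and a $v$-perturbation term and controlling each via the resolvent identity. Your two auxiliary perturbation estimates are correct as stated, and the final bookkeeping — which you flag as the main risk — does close: the $d$-term contributes at most $\frac{\sqrt{S}}{(1-\gamma)^{2}}\cdot\frac{\gamma}{1-\gamma}\cdot\sqrt{S}\,\|\bm{p}_{1}-\bm{p}_{2}\|$ and the $v$-term at most $\frac{\gamma\sqrt{S}}{1-\gamma}\cdot\frac{\sqrt{S}}{(1-\gamma)^{2}}\,\|\bm{p}_{1}-\bm{p}_{2}\|$, giving a Lipschitz constant of $\frac{2\gamma S}{(1-\gamma)^{3}}$ for $\nabla_{\bm{p}}J$, which is actually sharper than the paper's $\frac{2\gamma S^{2}}{(1-\gamma)^{3}}$ by a factor of $S$ (the paper's extra $S$ comes from bounding $\|\frac{\partial\bm{P}}{\partial\alpha}\bm{x}\|_{\infty}$ via $S\|\bm{z}\|_{\infty}\le S\|\bm{z}\|_{2}$ rather than exploiting the per-row $\ell_{1}$ structure). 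The trade-offs: the paper's Hessian route avoids any perturbation lemmas but requires the twice-differentiability computation and the matrix-calculus identities for $\partial\bm{M}(\alpha)/\partial\alpha$; your route is more elementary and yields a tighter constant, but as written it remains a plan — to be a complete proof you would need to carry out the $\ell_{1}$/$\ell_{\infty}$-to-$\ell_{2}$ conversion explicitly rather than asserting that "any bound of the stated order" suffices.
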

Due to the non-convexity of $J_{\bm{\rho}}$, smoothness is not sufficient to establish the global convergence guarantee. We notice that the inner problem can be interpreted as having an adversarial nature to maximize the total reward (decision maker's cost) by selecting a proper transition kernel from the ambiguity set $\mathcal{P}$~\cite{lim2013reinforcement,goyal2022robust}. Hence we leverage the idea from the convergence analysis of the classical policy gradient~\cite{agarwal2021theory} and derive our global convergence guarantee by first deriving the following inner problem's gradient dominance condition.
\begin{lemma} [Gradient dominance]\label{the_sec4_GD}
For any fixed $\bm{\pi}\in\Pi$, $J_{\bm{\rho}}(\bm{\pi},\bm{p})$ satisfies the following condition for any $\bm{p}\in\mathcal{P}$ such that
\begin{equation*}
J_{\bm{\rho}}(\bm{\pi},\bm{p}^{\star})-J_{\bm{\rho}}(\bm{\pi},\bm{p}) 
\leq \frac{D}{1-\gamma}\max_{\bar{\bm{p}}\in\mathcal{P}} \left\langle\bar{\bm{p}}-\bm{p},\nabla_{\bm{p}} J_{\bm{\rho}}(\bm{\pi},\bm{p})\right\rangle,
\end{equation*}
where $J_{\bm{\rho}}(\bm{\pi},\bm{p}^{\star}):= \max_{\bm{p}\in\mathcal{P}}J_{\bm{\rho}}(\bm{\pi},\bm{p})$.
\end{lemma}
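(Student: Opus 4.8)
\textbf{Proof proposal for Lemma~\ref{the_sec4_GD}.}

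The plan is to mimic the classical policy-gradient dominance argument of \citet{agarwal2021theory}, but with the roles of ``policy'' and ``transition kernel'' swapped: here $\bm{p}$ plays the role of the decision variable of a maximizing agent and the ambiguity set $\mathcal{P}$ plays the role of the (rectangular) feasible polytope. First I would fix $\bm{\pi}\in\Pi$ and let $\bm{p}^{\star}\in\arg\max_{\bm{p}\in\mathcal{P}}J_{\bm{\rho}}(\bm{\pi},\bm{p})$ be a maximizer, which exists by compactness of $\mathcal{P}$. The key identity I need is a ``performance-difference'' style lemma expressing $J_{\bm{\rho}}(\bm{\pi},\bm{p}^{\star}) - J_{\bm{\rho}}(\bm{\pi},\bm{p})$ as a single sum over states weighted by the occupancy measure $d^{\bm{\pi},\bm{p}^{\star}}_{\bm{\rho}}$ of the advantage-type quantity that $\bm{p}^{\star}$ gains over $\bm{p}$ at each state. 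Concretely, I expect
\begin{equation*}
J_{\bm{\rho}}(\bm{\pi},\bm{p}^{\star}) - J_{\bm{\rho}}(\bm{\pi},\bm{p})
= \frac{1}{1-\gamma}\sum_{s\in\mathcal{S}} d^{\bm{\pi},\bm{p}^{\star}}_{\bm{\rho}}(s) \sum_{a\in\mathcal{A}} \pi_{sa}\, (\bm{p}^{\star}_{sa} - \bm{p}_{sa})^{\top}(\bm{c}_{sa} + \gamma \bm{v}^{\bm{\pi},\bm{p}}),
\end{equation*}
which is the robust-policy-evaluation analogue of the usual performance-difference lemma, obtained by telescoping $\bm{v}^{\bm{\pi},\bm{p}^{\star}} - \bm{v}^{\bm{\pi},\bm{p}}$ through the Bellman equation for $\bm{v}^{\bm{\pi},\bm{p}}$ and iterating.

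Next I would recognize that, by Lemma~\ref{lem_sec4_1}, the bracketed inner term $\pi_{sa}(\bm{p}^{\star}_{sa}-\bm{p}_{sa})^{\top}(\bm{c}_{sa}+\gamma\bm{v}^{\bm{\pi},\bm{p}})$ is exactly $(1-\gamma)/d^{\bm{\pi},\bm{p}}_{\bm{\rho}}(s)$ times the $(s,\cdot,\cdot)$-block of $\langle \bm{p}^{\star}-\bm{p},\nabla_{\bm{p}}J_{\bm{\rho}}(\bm{\pi},\bm{p})\rangle$ — in other words, the summand reassembles the gradient. So I would rewrite the sum as
\begin{equation*}
J_{\bm{\rho}}(\bm{\pi},\bm{p}^{\star}) - J_{\bm{\rho}}(\bm{\pi},\bm{p})
= \frac{1}{1-\gamma}\sum_{s\in\mathcal{S}} \frac{d^{\bm{\pi},\bm{p}^{\star}}_{\bm{\rho}}(s)}{d^{\bm{\pi},\bm{p}}_{\bm{\rho}}(s)}\, \big\langle (\bm{p}^{\star}-\bm{p})_{s},\, \big(\nabla_{\bm{p}}J_{\bm{\rho}}(\bm{\pi},\bm{p})\big)_{s}\big\rangle,
\end{equation*}
where $(\cdot)_s$ denotes restriction to the coordinates $\{p_{sas'}\}_{a,s'}$. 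Then I bound each ratio $d^{\bm{\pi},\bm{p}^{\star}}_{\bm{\rho}}(s)/d^{\bm{\pi},\bm{p}}_{\bm{\rho}}(s)$: since $d^{\bm{\pi},\bm{p}}_{\bm{\rho}}(s)\geq (1-\gamma)\rho_s$ for every $s$, and $d^{\bm{\pi},\bm{p}^{\star}}_{\bm{\rho}}(s)\le \|\bm{d}^{\bm{\pi},\bm{p}^{\star}}_{\bm{\rho}}/\bm{\rho}\|_\infty \rho_s \le D\rho_s$ by the definition of $D$ in Theorem~\ref{the_sec3_1_GD}, each ratio is at most $D/(1-\gamma)$. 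Hence
\begin{equation*}
J_{\bm{\rho}}(\bm{\pi},\bm{p}^{\star}) - J_{\bm{\rho}}(\bm{\pi},\bm{p})
\le \frac{D}{(1-\gamma)^2}\sum_{s\in\mathcal{S}} \big\langle (\bm{p}^{\star}-\bm{p})_{s},\, (\nabla_{\bm{p}}J_{\bm{\rho}})_{s}\big\rangle
= \frac{D}{(1-\gamma)^2}\big\langle \bm{p}^{\star}-\bm{p},\, \nabla_{\bm{p}}J_{\bm{\rho}}(\bm{\pi},\bm{p})\big\rangle,
\end{equation*}
(here I must be slightly careful that the positivity argument for dropping the ratio bound only works because each block inner product is nonnegative, which follows since a per-state maximizing choice can only increase value — essentially optimality of $\bm{p}^{\star}$ restricted block-wise, using rectangularity; alternatively one simply keeps $\max_{\bar{\bm p}\in\mathcal P}$ on the right throughout, which dominates the $\bm{p}^{\star}$ choice and avoids sign bookkeeping). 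Finally, since $\bm{p}^{\star}\in\mathcal{P}$, $\langle\bm{p}^{\star}-\bm{p},\nabla_{\bm p}J_{\bm\rho}\rangle \le \max_{\bar{\bm p}\in\mathcal P}\langle\bar{\bm p}-\bm p,\nabla_{\bm p}J_{\bm\rho}\rangle$, giving the claimed bound (up to matching the exact constant $D/(1-\gamma)$ in the statement versus $D/(1-\gamma)^2$ above — I would recheck whether the paper absorbs one $(1-\gamma)$ factor into the definition of $D$ or into the gradient normalization).

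The main obstacle I anticipate is two-fold. First, establishing the robust performance-difference identity rigorously: unlike ordinary MDPs, here both value functions share the same policy $\bm{\pi}$ but different kernels, so the telescoping must be done on the kernel side, and one has to be careful that the occupancy measure appearing is that of $\bm{p}^{\star}$ (the ``comparator'' kernel) rather than $\bm{p}$ — getting this direction right is what makes the distribution-mismatch ratio finite. Second, handling the change of measure: the inequality direction requires that the per-state inner products $\langle(\bm{p}^{\star}-\bm{p})_s,(\nabla J)_s\rangle$ be nonnegative so that inflating their weights by $D/(1-\gamma)$ is an upper bound; justifying this cleanly is where rectangularity of $\mathcal{P}$ is genuinely used, and it is worth checking whether the lemma as stated secretly needs rectangularity (the surrounding text suggests it is invoked only for rectangular inner problems, so this should be fine). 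If the sign issue is delicate, the safe route is to carry the $\max_{\bar{\bm p}\in\mathcal P}$ operator from the very start, bounding $d^{\bm{\pi},\bm{p}^{\star}}_{\bm\rho}(s)\langle(\bm p^\star-\bm p)_s,(\nabla J)_s\rangle \le d^{\bm{\pi},\bm{p}^{\star}}_{\bm\rho}(s)\max_{\bar{\bm p}_s}\langle(\bar{\bm p}-\bm p)_s,(\nabla J)_s\rangle$ with each term now nonnegative, then summing — this is the cleanest presentation and I would adopt it.
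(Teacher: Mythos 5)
Your proposal follows essentially the same route as the paper's proof: the same kernel-side performance-difference identity (occupancy measure under $\bm{p}^{\star}$, value function under $\bm{p}$), the same change of measure bounded via $\left\|\nicefrac{\bm{d}_{\bm{\rho}}^{\bm{\pi},\bm{p}^{\star}}}{\bm{d}_{\bm{\rho}}^{\bm{\pi},\bm{p}}}\right\|_{\infty}\leq \frac{D}{1-\gamma}$, the same identification of the summand with $\nabla_{\bm p}J$ through Lemma~\ref{lem_sec4_1}, and the same appeal to rectangularity for the nonnegativity of the per-state terms that licenses inflating the ratio (the paper invokes $s$-rectangularity at exactly that step, and your ``carry the max from the start'' variant is the clean way to justify it). The only discrepancy is your spurious extra $\frac{1}{1-\gamma}$: once you substitute the gradient blocks, the leading $\frac{1}{1-\gamma}$ of the performance-difference identity is already absorbed into $\nabla_{\bm p}J$ (which contains the factor $\frac{1}{1-\gamma}d_{\bm{\rho}}^{\bm{\pi},\bm{p}}(s)$), so the correct intermediate identity is $J_{\bm{\rho}}(\bm{\pi},\bm{p}^{\star})-J_{\bm{\rho}}(\bm{\pi},\bm{p})=\sum_{s}\frac{d_{\bm{\rho}}^{\bm{\pi},\bm{p}^{\star}}(s)}{d_{\bm{\rho}}^{\bm{\pi},\bm{p}}(s)}\bigl\langle(\bm{p}^{\star}-\bm{p})_{s},(\nabla_{\bm p}J)_{s}\bigr\rangle$ with no prefactor, which yields exactly the stated constant $\frac{D}{1-\gamma}$.
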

Using this notion of gradient dominance, we now give an iteration complexity bound for Algorithm~\ref{alg:PGD_inner}.
\begin{theorem}\label{the_sec4_global}
Let $\bm{p}_{t^{\star}}$ be the point obtained by Algorithm~\ref{alg:PGD_inner} and $\epsilon_{k}>0$ be the desired precision. Algorithm~\ref{alg:PGD_inner} with constant step size $\beta = \frac{(1-\gamma)^{3}}{2\gamma S^{2}}$ satisfies
\begin{equation}
\max_{\bm{p}\in\mathcal{P}}J_{\bm{\rho}}(\bm{\pi}_{k},\bm{p}) - J_{\bm{\rho}}(\bm{\pi}_{k},\bm{p}_{t^{\star}}) \leq \epsilon_{k},
\end{equation}
whenever
\begin{equation}
    T \geq \frac{32\gamma S^{3}AD^{2}}{(1-\gamma)^{6}\epsilon_{k}^{2}} = \mathcal{O}(\epsilon_{k}^{-2}).
\end{equation}
\end{theorem}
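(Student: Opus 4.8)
The plan is to mirror the analysis of projected policy gradient for ordinary MDPs (see~\cite{agarwal2021theory}), now feeding it the three facts already established for the inner objective: $\ell_{\bm{p}}$-smoothness (Lemma~\ref{lem_sec4_2}), the gradient formula together with the $L_{\bm{p}}$-Lipschitz bound (Lemma~\ref{lem_sec4_1}), and the Frank--Wolfe-type gradient dominance (Lemma~\ref{the_sec4_GD}). Throughout, abbreviate $J(\bm{p}):=J_{\bm{\rho}}(\bm{\pi}_{k},\bm{p})$, fix a maximizer $\bm{p}^{\star}\in\arg\max_{\bm{p}\in\mathcal{P}}J(\bm{p})$ (which exists since $\mathcal{P}$ is compact and $J$ is continuous), and define the \emph{gradient mapping} $\bm{G}_{t}:=\frac{1}{\beta}(\bm{p}_{t+1}-\bm{p}_{t})$ along the iterates of Algorithm~\ref{alg:PGD_inner}. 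The step size has been chosen precisely so that $\beta=1/\ell_{\bm{p}}$.

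\emph{Per-iteration ascent and a bounded gradient mapping.} Testing the variational inequality of the Euclidean projection $\bm{p}_{t+1}=\text{Proj}_{\mathcal{P}}(\bm{p}_{t}+\beta\nabla_{\bm{p}}J(\bm{p}_{t}))$ at the feasible point $\bm{p}=\bm{p}_{t}$ gives $\langle\nabla_{\bm{p}}J(\bm{p}_{t}),\bm{p}_{t+1}-\bm{p}_{t}\rangle\geq\frac{1}{\beta}\|\bm{p}_{t+1}-\bm{p}_{t}\|^{2}$, which combined with the $\ell_{\bm{p}}$-smoothness from Lemma~\ref{lem_sec4_2} and $\beta=1/\ell_{\bm{p}}$ yields the ascent estimate
\[
J(\bm{p}_{t+1})-J(\bm{p}_{t})\;\geq\;\frac{1}{2\ell_{\bm{p}}}\|\bm{G}_{t}\|^{2}.
\]
Summing over $t=0,\dots,T-1$ telescopes the left-hand side, and since Assumption~\ref{assump_1} forces $0\leq J\leq\frac{1}{1-\gamma}$, we obtain $\sum_{t=0}^{T-1}\|\bm{G}_{t}\|^{2}\leq\frac{2\ell_{\bm{p}}}{1-\gamma}$, hence $\min_{0\leq t<T}\|\bm{G}_{t}\|\leq\sqrt{\frac{2\ell_{\bm{p}}}{(1-\gamma)T}}$.

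\emph{From a small gradient mapping to near-optimality.} Testing the same projection inequality at an arbitrary $\bar{\bm{p}}\in\mathcal{P}$ and splitting $\bar{\bm{p}}-\bm{p}_{t}=(\bar{\bm{p}}-\bm{p}_{t+1})+(\bm{p}_{t+1}-\bm{p}_{t})$, Cauchy--Schwarz together with the diameter bound $\mathrm{diam}(\mathcal{P})\leq\sqrt{2SA}$ (for $\mathcal{P}\subseteq(\Delta^{S})^{S\times A}$) and $\|\nabla_{\bm{p}}J(\bm{p}_{t})\|\leq L_{\bm{p}}$ from Lemma~\ref{lem_sec4_1} gives
\[
\max_{\bar{\bm{p}}\in\mathcal{P}}\big\langle\bar{\bm{p}}-\bm{p}_{t},\,\nabla_{\bm{p}}J(\bm{p}_{t})\big\rangle\;\leq\;\big(\mathrm{diam}(\mathcal{P})+\beta L_{\bm{p}}\big)\,\|\bm{G}_{t}\|.
\]
Substituting this into the gradient-dominance inequality of Lemma~\ref{the_sec4_GD} yields $J(\bm{p}^{\star})-J(\bm{p}_{t})\leq\frac{D}{1-\gamma}(\mathrm{diam}(\mathcal{P})+\beta L_{\bm{p}})\|\bm{G}_{t}\|$ for every $t$. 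Since the returned $\bm{p}_{t^{\star}}$ attains the best inner objective among $\bm{p}_{0},\dots,\bm{p}_{T-1}$, it follows that $J(\bm{p}^{\star})-J(\bm{p}_{t^{\star}})\leq\min_{0\leq t<T}(J(\bm{p}^{\star})-J(\bm{p}_{t}))$; chaining the two displays, substituting $\ell_{\bm{p}}=\frac{2\gamma S^{2}}{(1-\gamma)^{3}}$, $L_{\bm{p}}=\frac{\sqrt{SA}}{(1-\gamma)^{2}}$, $\beta=\frac{(1-\gamma)^{3}}{2\gamma S^{2}}$ and bounding $\mathrm{diam}(\mathcal{P})+\beta L_{\bm{p}}\leq2\sqrt{2SA}$ gives $J(\bm{p}^{\star})-J(\bm{p}_{t^{\star}})\leq\frac{4\sqrt{2}\,D\sqrt{\gamma S^{3}A}}{(1-\gamma)^{3}\sqrt{T}}$; requiring the right-hand side to be at most $\epsilon_{k}$ and solving for $T$ reproduces the stated threshold $T\geq\frac{32\gamma S^{3}AD^{2}}{(1-\gamma)^{6}\epsilon_{k}^{2}}=\mathcal{O}(\epsilon_{k}^{-2})$.

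I expect the only non-routine ingredient to be the third step --- translating the Frank--Wolfe-type gap $\max_{\bar{\bm{p}}}\langle\bar{\bm{p}}-\bm{p}_{t},\nabla_{\bm{p}}J(\bm{p}_{t})\rangle$ produced by the gradient-dominance condition into the Euclidean gradient-mapping norm $\|\bm{G}_{t}\|$ that the ascent/telescoping argument actually controls. This is exactly where convexity of $\mathcal{P}$ is used (so that the projection's variational inequality is available at every feasible test point) and where the $L_{\bm{p}}$ bound of Lemma~\ref{lem_sec4_1} is needed to absorb the residual $\langle\nabla_{\bm{p}}J(\bm{p}_{t}),\bm{p}_{t+1}-\bm{p}_{t}\rangle$ term. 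Everything else is standard projected-gradient bookkeeping together with tracking the $\gamma$-, $S$-, and $A$-dependence to land on the displayed expression for $T$.
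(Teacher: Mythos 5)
Your proposal is correct and follows the same overall architecture as the paper's proof: a standard projected-gradient ascent lemma plus telescoping to show $\min_{t}\|\bm{G}_{t}\|=\mathcal{O}(\sqrt{\ell_{\bm{p}}/((1-\gamma)T)})$, followed by the gradient-dominance condition of Lemma~\ref{the_sec4_GD} to convert a small Frank--Wolfe gap into a small optimality gap. The one place you genuinely diverge is the bridge between the gradient-mapping norm and the Frank--Wolfe gap. The paper imports Lemma~3 of Ghadimi et al., which says $\|G^{\beta}(\bm{p})\|\leq\epsilon$ implies $\nabla f_{\bm{\pi}}(\bm{p}^{+})\in\mathcal{N}_{\mathcal{P}}(\bm{p}^{+})+2\epsilon\mathcal{B}(1)$, and then evaluates the gradient-dominance bound at the \emph{shifted} point $\bm{p}_{\hat{t}}=\bm{p}_{\hat{t}-1}+\beta G^{\beta}(\bm{p}_{\hat{t}-1})$, paying a factor $2\epsilon\cdot\mathrm{diam}(\mathcal{P})$. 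You instead test the projection's variational inequality at an arbitrary $\bar{\bm{p}}$, split $\bar{\bm{p}}-\bm{p}_{t}$ through $\bm{p}_{t+1}$, and absorb the residual with the Lipschitz bound $L_{\bm{p}}$ from Lemma~\ref{lem_sec4_1}, obtaining $\max_{\bar{\bm{p}}}\langle\bar{\bm{p}}-\bm{p}_{t},\nabla J(\bm{p}_{t})\rangle\leq(\mathrm{diam}(\mathcal{P})+\beta L_{\bm{p}})\|\bm{G}_{t}\|$ directly at the iterate $\bm{p}_{t}$. This buys two things: the argument is self-contained (no external normal-cone lemma), and gradient dominance is applied at a point that is actually among the candidate outputs $\{\bm{p}_{0},\dots,\bm{p}_{T-1}\}$, sidestepping the index shift $\hat{t}=1+\arg\min_{t}\|G^{\beta}(\bm{p}_{t})\|$ in the paper, which can land on $\bm{p}_{T}$. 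The price is that your constants require $\beta L_{\bm{p}}\leq\sqrt{2SA}$, i.e.\ $(1-\gamma)/(2\gamma S^{2})\leq\sqrt{2}$, which fails for very small $\gamma$; this only perturbs the constant in front of $\epsilon_{k}^{-2}$, not the rate, and the paper's own constant bookkeeping is no tighter (tracing their inequalities yields $64\gamma S^{3}AD^{2}/((1-\gamma)^{6}\epsilon_{k}^{2})$ rather than the stated $32$).
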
 

\subsection{Scalability of Parametric Transition}
In standard policy-gradient methods, one considers a family of policies parametrized by lower-dimensional parameter vectors to limit the number of variables when scaling to large problems. The projected gradient step in Algorithm~\ref{alg:PGD_inner} needs to update each $p_{sas'}$, which is difficult with large state and action spaces. To overcome this problem, we provide a new approach to transition probability parameterization. To the best of our knowledge, comparable parameterizations for the inner problem have not been studied previously.

We parameterize transition kernel with the following form for any $(s,a,s')\in\mathcal{S}\times\mathcal{A}\times\mathcal{S}$,
\begin{equation}\label{sec4_parameterization}
p^{\bm{\xi}}_{sas'} := \frac{\bar{p}_{sas'}\cdot\exp(\frac{\bm{\theta}^{\top}\bm{\phi}(s')}{\lambda_{sa}})}{\sum_{k}\bar{p}_{sak}\cdot\exp({\frac{\bm{\theta}^{\top}\bm{\phi}(k)}{\lambda_{sa}}})},
\end{equation}
where $\bm{\phi}(s):=\left[\phi_{1}(s),\cdots,\phi_{m}(s)\right]$ is a $m$-dimensional feature vector corresponding to the state $s\in\mathcal{S}$, $\bm{\xi}:=(\bm{\theta},\bm{\lambda
})$ is the collection of parameters, consisting of the strictly positive parameter $\bm{\lambda}:=\{\lambda_{sa}>0 \mid  \forall (s,a)\in\mathcal{S}\times\mathcal{A}\}$ and the unconstrained parameter $\bm{\theta}:=\left[\theta_{1},\cdots,\theta_{m}\right]$. The symbol $\bar{\bm{p}}$ represents the nominal transition kernel, which is typically estimated from the empirical sample of state transitions. 

The parameterization in~\eqref{sec4_parameterization} is motivated by the form of the worst-case transition probabilities in RMDPs with KL-divergence constrained $(s,a)$-rectangular ambiguity sets~\cite{nilim2005robust}. In fact, the worst-case transitions has an identical form to~\eqref{sec4_parameterization} when linear approximation $\bm{\theta}^{\top}\bm{\phi}(s)$ is applied.

Then, the RMDPs problem then becomes,
\begin{equation*}
\min_{\bm{\pi}\in\Pi}\max_{\bm{\xi}\in\Xi} J_{\bm{\rho}}(\bm{\pi}, \bm{\xi}),
\end{equation*}
where $\Xi$ is the ambiguity set for the parameter $\bm{\xi}$. In practice, $\Xi$ could be constructed via distance-type constraint; that is, we consider
\begin{equation*}
    \Xi:=\{\bm{\xi} \mid D\left(\bm{\xi}\|\bm{\xi}_{c}\right) \leq \kappa\},
\end{equation*}
where $D(\cdot\|\cdot)$ represents a distance function, such as $L_{1}$-norm and $L_{\infty}$-norm, $\bm{\xi}_{c}$ is the user-specified empirical estimation of $\bm{\xi}$, and $\kappa\in\mathbb{R}_{++}$ is a given radius.

To apply the gradient-based update on parameterized transition, we introduce the following lemma to derive the gradient of the inner problem, which is similar to the classical policy gradient theorem~\cite{sutton1999policy}
\begin{lemma}\label{the_sec4_parame}
    Consider a map $\bm{\xi}\mapsto p^{\bm{\xi}}_{sas'}$ that is differentiable for any $(s,a,s')$ . Then, the partial gradient of $J_{\rho}(\bm{\pi},\bm{\xi})$ on $\bm{\xi}$ is
    \begin{align}
    \frac{\partial J_{\rho}(\bm{\pi},\bm{\xi})}{\partial \bm{\xi}} = \frac{1}{1-\gamma}    \mathbb{E}_{\begin{subarray}{l}
    s\sim\bm{d}^{\bm{\pi},\bm{\xi}}_{\rho}\\
    a\sim\bm{\pi}_{s\cdot}\\
    s'\sim\bm{p}_{sa\cdot}
    \end{subarray}}\left[\frac{\partial\log p^{\bm{\xi}}_{sas'}}{\partial \bm{\xi}}\left(c_{sas'}+\gamma v^{\bm{\pi},\bm{\xi}}_{s'}\right)\right].
\end{align}
Moreover, when parameterization~\eqref{sec4_parameterization} is applied, the score function $\frac{\partial\log p^{\bm{\xi}}_{sas'}}{\partial \bm{\xi}}$ has the analytical form:
\begin{align}
    \frac{\partial\log p^{\bm{\xi}}_{sas'}}{\partial \theta_{i}} &= \frac{\phi_{i}(s')}{\lambda_{sa}} - \sum_{j}p^{\bm{\xi}}_{saj}\cdot\frac{\phi_{i}(j)}{\lambda_{sa}},\\
    \frac{\partial\log p^{\bm{\xi}}_{sas'}}{\partial \lambda_{sa}} &= \sum_{j}p^{\bm{\xi}}_{saj}\cdot\frac{\bm{\theta}^{\top}\bm{\phi}(j)}{\lambda^{2}_{sa}}-\frac{\bm{\theta}^{\top}\bm{\phi}(s')}{\lambda_{sa}^{2}}.
\end{align}
\end{lemma}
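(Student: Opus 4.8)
The plan is to combine the multivariate chain rule with the log-derivative trick and then repackage the resulting finite sum as an expectation. I would start from Lemma~\ref{lem_sec4_1}, which provides $\partial J_{\bm{\rho}}(\bm{\pi},\bm{p})/\partial p_{sas'}$ \emph{including} the implicit dependence of $\bm{v}^{\bm{\pi},\bm{p}}$ on $\bm{p}$, and from Lemma~\ref{lem_sec4_2}, which gives $C^{1}$ regularity of $J_{\bm{\rho}}$ in $\bm{p}$. Since $\bm{\xi}\mapsto p^{\bm{\xi}}_{sas'}$ is differentiable by hypothesis, $J_{\rho}(\bm{\pi},\cdot)$ is a composition of differentiable maps, so the chain rule yields
\begin{equation*}
\frac{\partial J_{\rho}(\bm{\pi},\bm{\xi})}{\partial \bm{\xi}}
= \sum_{s,a,s'} \left.\frac{\partial J_{\bm{\rho}}(\bm{\pi},\bm{p})}{\partial p_{sas'}}\right|_{\bm{p}=\bm{p}^{\bm{\xi}}}\!\cdot\frac{\partial p^{\bm{\xi}}_{sas'}}{\partial \bm{\xi}}
= \frac{1}{1-\gamma}\sum_{s,a,s'} d^{\bm{\pi},\bm{\xi}}_{\rho}(s)\,\pi_{sa}\,\bigl(c_{sas'}+\gamma v^{\bm{\pi},\bm{\xi}}_{s'}\bigr)\,\frac{\partial p^{\bm{\xi}}_{sas'}}{\partial \bm{\xi}}.
\end{equation*}
Then I would apply the identity $\partial_{\bm{\xi}} p^{\bm{\xi}}_{sas'} = p^{\bm{\xi}}_{sas'}\,\partial_{\bm{\xi}}\log p^{\bm{\xi}}_{sas'}$ to pull out a factor $p^{\bm{\xi}}_{sas'}$.

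The remaining step for the first formula is to recognize the triple sum as an expectation. By Definition~\ref{def:occu}, $\sum_{s'\in\mathcal{S}} d^{\bm{\pi},\bm{p}}_{\bm{\rho}}(s') = 1$ (the $(1-\gamma)$ prefactor normalizes the geometric series), so $s\mapsto d^{\bm{\pi},\bm{\xi}}_{\rho}(s)$, $a\mapsto \pi_{sa}$, and $s'\mapsto p^{\bm{\xi}}_{sas'}$ together define a valid joint sampling distribution over $\mathcal{S}\times\mathcal{A}\times\mathcal{S}$. Hence the sum equals $\frac{1}{1-\gamma}\,\mathbb{E}_{s\sim \bm{d}^{\bm{\pi},\bm{\xi}}_{\rho},\,a\sim\bm{\pi}_{s\cdot},\,s'\sim\bm{p}_{sa\cdot}}\bigl[\partial_{\bm{\xi}}\log p^{\bm{\xi}}_{sas'}\,(c_{sas'}+\gamma v^{\bm{\pi},\bm{\xi}}_{s'})\bigr]$, which is the claimed identity.

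For the analytical score function, I would take logarithms of~\eqref{sec4_parameterization}: $\log p^{\bm{\xi}}_{sas'} = \log \bar{p}_{sas'} + \frac{\bm{\theta}^{\top}\bm{\phi}(s')}{\lambda_{sa}} - \log\!\bigl(\sum_{k}\bar{p}_{sak}\exp(\bm{\theta}^{\top}\bm{\phi}(k)/\lambda_{sa})\bigr)$, i.e., a linear term minus a log-partition term. Differentiating in $\theta_i$, the linear term gives $\phi_i(s')/\lambda_{sa}$ and the log-partition term gives $-\sum_j \tfrac{\bar{p}_{saj}\exp(\bm{\theta}^{\top}\bm{\phi}(j)/\lambda_{sa})}{\sum_k \bar{p}_{sak}\exp(\bm{\theta}^{\top}\bm{\phi}(k)/\lambda_{sa})}\tfrac{\phi_i(j)}{\lambda_{sa}} = -\sum_j p^{\bm{\xi}}_{saj}\,\phi_i(j)/\lambda_{sa}$, after recognizing the normalized exponential weights as $p^{\bm{\xi}}_{saj}$. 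Differentiating in $\lambda_{sa}$ is the same computation via $\partial_{\lambda_{sa}}(1/\lambda_{sa}) = -1/\lambda_{sa}^{2}$: the linear term gives $-\bm{\theta}^{\top}\bm{\phi}(s')/\lambda_{sa}^{2}$ and the log-partition term gives $+\sum_j p^{\bm{\xi}}_{saj}\,\bm{\theta}^{\top}\bm{\phi}(j)/\lambda_{sa}^{2}$. Since $\lambda_{sa}$ enters only through the $(s,a)$ block, $\partial \log p^{\bm{\xi}}_{sas'}/\partial \lambda_{s''a''} = 0$ whenever $(s'',a'')\neq(s,a)$, consistent with the per-block structure of the $\bm{\lambda}$-gradient.

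The only point requiring genuine care is interchanging differentiation with the infinite-horizon sum defining $J_{\rho}$ (equivalently, differentiating $\bm{v}^{\bm{\pi},\bm{\xi}}$), but this is already absorbed into Lemma~\ref{lem_sec4_1} and follows from uniform convergence of the $\gamma$-discounted series on the compact parameter set; everything past that is elementary calculus. A secondary bookkeeping matter is tracking which coordinates of $\bm{\xi}$ each $p^{\bm{\xi}}_{sas'}$ depends on — $\bm{\theta}$ is shared across all blocks, while each $\lambda_{sa}$ is local — but this is handled automatically once the score function is treated as a vector with the sparsity noted above.
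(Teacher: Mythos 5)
Your proof is correct. For the first identity you take a somewhat different (and shorter) route than the paper: you compose the total derivative $\partial J_{\bm{\rho}}(\bm{\pi},\bm{p})/\partial p_{sas'}$ already established in Lemma~\ref{lem_sec4_1} with the map $\bm{\xi}\mapsto\bm{p}^{\bm{\xi}}$ via the multivariate chain rule, then apply the log-derivative trick and read off the triple sum as an expectation. The paper instead re-derives the gradient from scratch by unrolling the Bellman recursion for $\partial v^{\bm{\pi},\bm{\xi}}_{s}/\partial\bm{\xi}$ (exactly mirroring the proof of Lemma~\ref{lem_sec4_1} with $\partial p^{\bm{\xi}}_{sas'}/\partial\bm{\xi}$ in place of a coordinate perturbation), collects the occupancy measure from the resulting geometric series, and only then applies $\partial_{\bm{\xi}}p^{\bm{\xi}}_{sas'}=p^{\bm{\xi}}_{sas'}\,\partial_{\bm{\xi}}\log p^{\bm{\xi}}_{sas'}$. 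Both arrive at the same intermediate expression $\frac{1}{1-\gamma}\sum_{s,a,s'}d^{\bm{\pi},\bm{\xi}}_{\bm{\rho}}(s)\pi_{sa}p^{\bm{\xi}}_{sas'}\,\partial_{\bm{\xi}}\log p^{\bm{\xi}}_{sas'}\,(c_{sas'}+\gamma v^{\bm{\pi},\bm{\xi}}_{s'})$; your version buys brevity by reusing the lemma, at the modest cost of noting (as you do) that the chain rule is applied to the smooth extension of $J_{\bm{\rho}}(\bm{\pi},\cdot)$ off the simplex, which is the same convention under which Lemma~\ref{lem_sec4_1} is proved, and that your normalization claim $\sum_{s'}d^{\bm{\pi},\bm{\xi}}_{\bm{\rho}}(s')=1$ indeed justifies reading the sum as an expectation. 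The second half is essentially identical to the paper's computation: take logarithms of~\eqref{sec4_parameterization}, differentiate the linear and log-partition terms, and recognize the normalized exponential weights as $p^{\bm{\xi}}_{saj}$; your sparsity observation that $\partial\log p^{\bm{\xi}}_{sas'}/\partial\lambda_{s''a''}=0$ for $(s'',a'')\neq(s,a)$ matches the remark made in the paper's proof.
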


\section{Experiments}\label{sec:experiments}
In this section, we demonstrate the global convergence of DRPG and verify the robustness of the policies computed by DRPG. All algorithms are implemented in Python 3.8.8, and performed on a computer with an i7-11700 CPU with 16GB RAM. We use Gurobi 9.5.2 to solve any linear or quadratic optimization problems involved. To facilitate the reproducibility of the domains, the full source code, which was used to generate
them, is available at \url{https://github.com/JerrisonWang/ICML-DRPG}. The repository also
contains CSV files with the precise specification of the RMDPs being solved.

\begin{figure}[t]
\centering
\includegraphics[height=2in,width=0.4\textwidth]{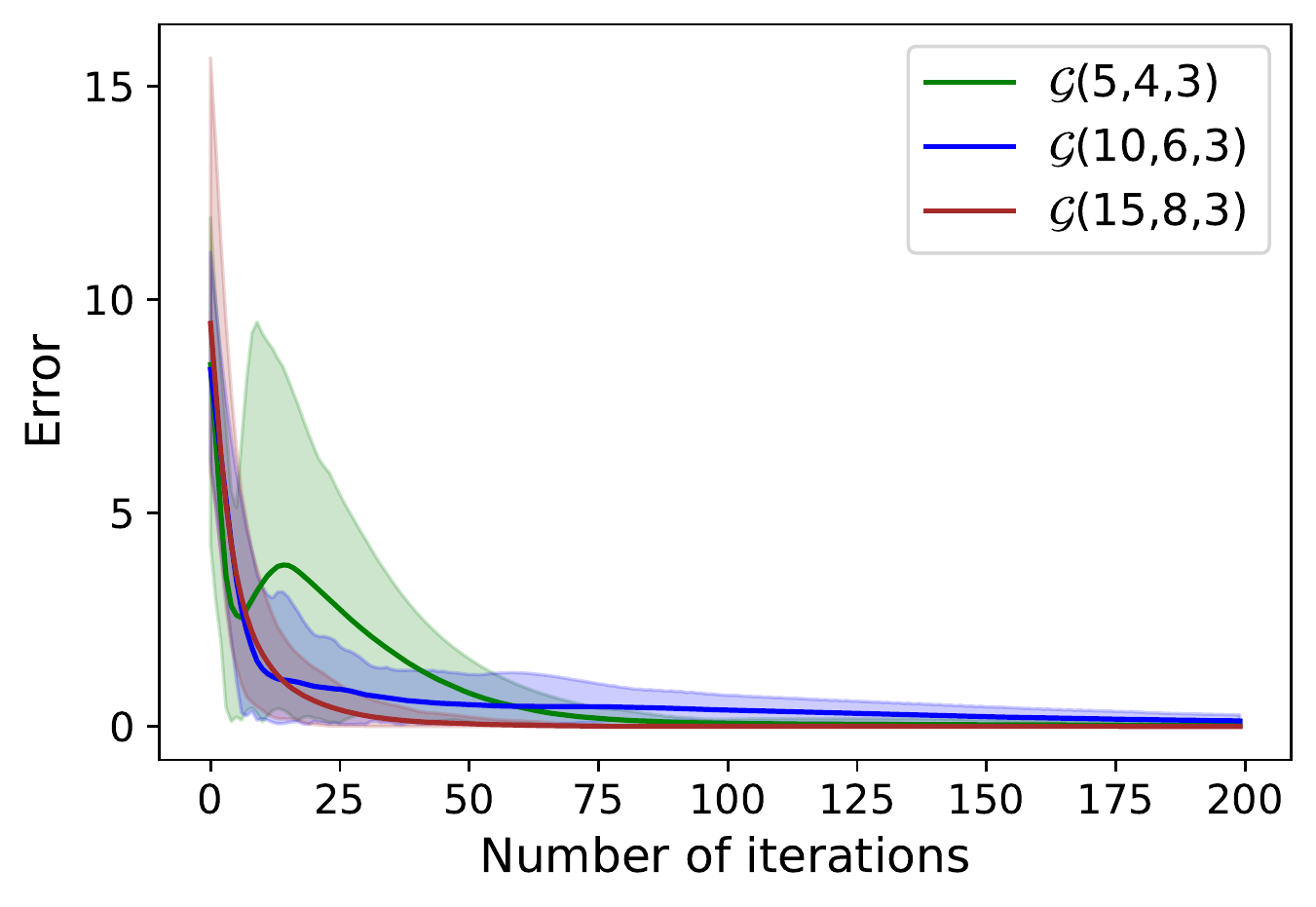}
\caption{The error of value functions computed by DRPG for three Garnet problems with different sizes.}
\label{fig_ex1_error}
\end{figure}
\subsection{Experimental Setup}
To demonstrate the convergence behavior, we test our algorithm on random GARNET MDPs, one of the widely-used benchmarks for RL algorithms, with three different problem sizes and two settings on the ambiguity sets: $(s, a)$- and $s$-rectangular ambiguity
sets. We then apply DRPG with inner parameterization on the practical inventory management problem to demonstrate its convergence and robustness. 

Garnet MDPs are a class of abstract, but representative, finite MDPs that can be generated randomly~\citep{archibald1995generation}. A general GARNET $\mathcal{G}(|\mathcal{S}|,|\mathcal{A}|,b)$ is characterized by three parameters, where $|\mathcal{S}|$ is the number of states, $|\mathcal{A}|$ is the number of actions, and $b$ is a branching factor which determines the number of possible next states for each state-action pair and controls the level of connectivity of underlying Markov chains.

In our inventory management problem~\citep{porteus2002foundations,ho2018fast}, a retailer orders, stores, and sells a single product
over an infinite time horizon. The states and actions of the MDP represent the inventory levels and
the order quantities in any given time period, respectively. The stochastic demands drive
the stochastic state transitions. Any items held in inventory incur deterministic per-period holding costs. The retailer's goal is to find a policy that minimizes the total cost without knowing the exact transition kernel.

More details on the problem settings, parameter choice, and feature selection are available in the \cref{app: Details in experiment}.

\subsection{Results and Discussion}

In each of our GARNET problems, we compare the objective values of DRPG at different iterations with the optimal objective value $J^{\star}$, which is computed by robust value iteration. Robust value iteration solves the robust Bellman equation by iteratively applying robust Bellman updates. For each setup of our GARNET problems, we solve 50 sample instances using both DRPG and robust value iteration. Figure~\ref{fig_ex1_error} shows how the error (i.e., $|J(\bm{\pi}_{t},\bm{p}_{t}) - J^{\star}|$) decreases when DRPG is performed. The upper and lower envelopes of the curves correspond to the 95 and 5 percentiles of the 50 samples, respectively. As expected, the error decreases to zero as the iteration step increases, which confirms the convergence behavior of DRPG. Similar results are observed for the s-rectangular case. 

The results of our numerical study on the inventory management problem are provided in Figure~\ref{fig_ex1_sa_comparison}. We run DRPG with inner parametrization and compare the performance with the non-robust policy gradient. At each iteration $t$, we consider the policy $\bm{\pi}_{t}$ obtained by DRPG, and then we compute its worst-case expected return $\Phi(\bm{\pi}_{t})=\max_{\bm{p}\in\mathcal{P}}J(\bm{\pi}_{t},\bm{p})$. We do the same for the non-robust policy gradient method. As we can see, DRPG obtains a policy that performs much better than the non-robust policy gradient, which demonstrates the robustness of our method. Different step sizes are chosen for DRPG, and they lead to different convergence behaviors; yet, in both cases, DRPGs outperform the non-robust policy gradient method.

\begin{figure}[t]
\centering
\includegraphics[height=2in,width=0.4\textwidth]{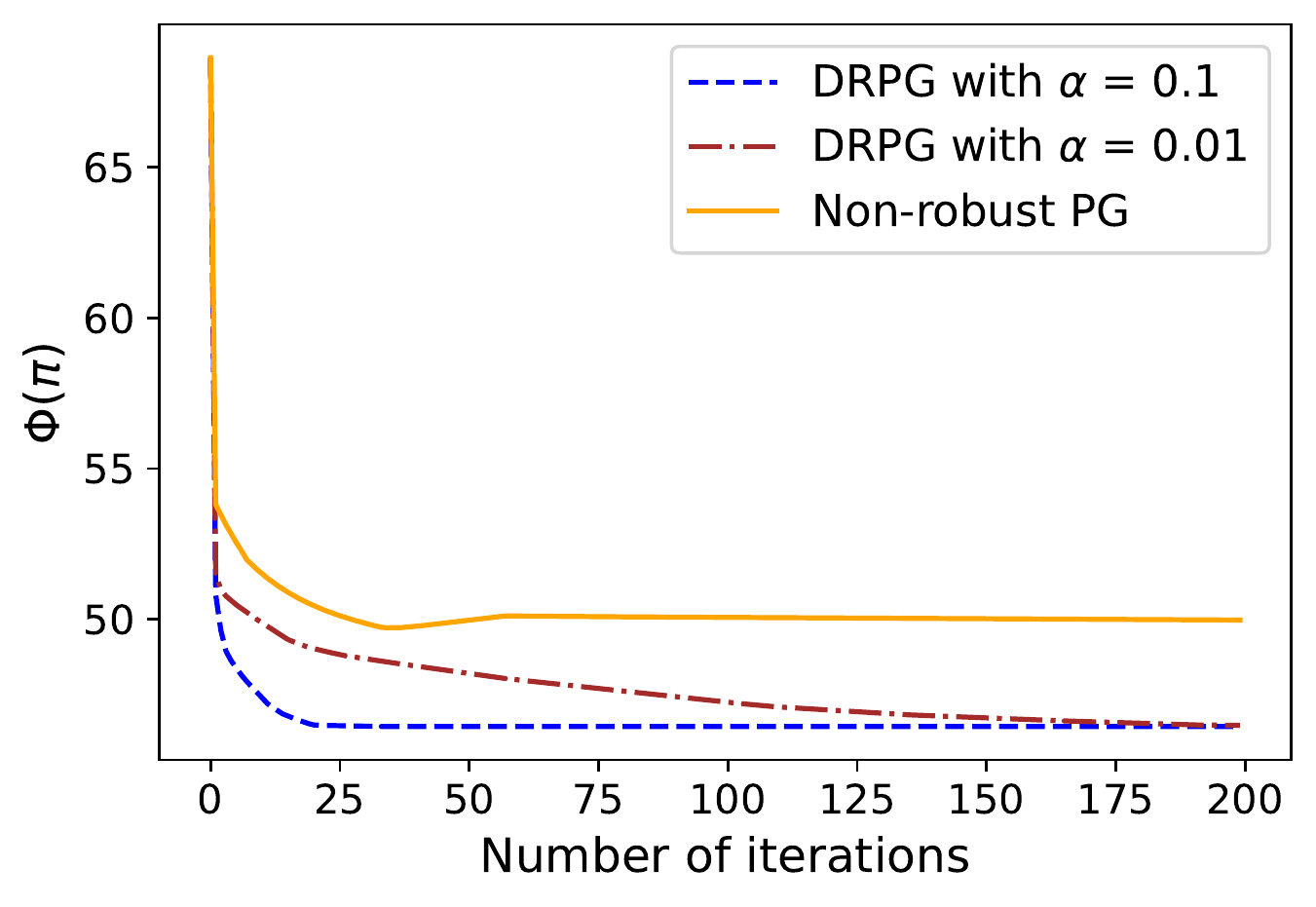}
\caption{DRPG with parameterization v.s. Non-robust Policy Gradient
on the Inventory Management Problem}
\label{fig_ex1_sa_comparison}
\end{figure}

\section{Conclusion}
We proposed a new policy optimization algorithm DRPG to solve RMDPs over general compact ambiguity sets. By selecting a suitable step size and an adaptive decreasing tolerance sequence, our algorithm converges to the global optimal policy under mild conditions. Moreover, we provide the first gradient-based solution method with a novel parameterization for solving the inner maximization. In our experiments, our results demonstrate the global convergence of DRPG and its reliable performance against the non-robust approach. Future work should address extensions to related models (\eg, distributionally RMDP) and scalable model-free algorithms.

\section*{Acknowledgements}

We thank the anonymous reviewers for their comments. This work was supported, in part, by NSF grants 2144601 and 1815275, the CityU Start-Up Grant (Project No. 9610481), the National Natural Science Foundation of China (Project No. 72032005), and Chow Sang Sang Group Research Fund sponsored by Chow Sang Sang Holdings International Limited (Project No. 9229076).

\bibliography{main}
\bibliographystyle{icml2023}

\newpage
\appendix
\onecolumn

\section{Examples of Common Ambiguity Sets}\label{sec:examples_sets}

We discuss a particularly popular class of rectangular ambiguity sets which are defined by norm constraints bounding the distance of any feasible transition probabilities from a nominal (average) state distribution. It is usually referred to as an $L_{1}$-constrained ambiguity set~\citep{petrik2014raam,ghavamzadeh2016safe,ho2021partial} or $L_{\infty}$-constrained ambiguity set~\citep{delgado2016real,behzadian2021fast}. For such rectangular ambiguity sets, problem~\eqref{prob_inRMDP} can be solved efficiently by updating the value function with the robust Bellman operator $\mathcal{T}_{\bm{\pi}}:\mathbb{R}^{S}\rightarrow\mathbb{R}^{S}$. Below, we show forms of Bellman operator within different rectangular conditions.

\begin{example}
$L_{1}$-constrained $(s,a)$-rectangular ambiguity sets generally assume the uncertain in transition probabilities is independent for each state-action pair and are defined as 
\begin{equation*}
\mathcal{P}=\underset{s \in \mathcal{S},a\in\mathcal{A}}{\times} \mathcal{P}_{s,a} \quad \text { where } \quad \mathcal{P}_{s,a}:=\left\{\bm{p}\in\Delta^{S} \mid \|\bm{p}-\bar{\bm{p}}_{sa}\|_{1} \leq \kappa_{sa}\right\}.
\end{equation*}
For $(s,a)$-rectangular RMDPs constrained by the $L_{1}$-norm, $\mathcal{T}_{\bm{\pi}}$ is defined for each $s\in\mathcal{S}$ as
\begin{equation*}
    (\mathcal{T}_{\bm{\pi}}\bm{v}^{\bm{\pi},\bm{p}})_{s} := \sum_{a\in\mathcal{A}} \left(\pi_{sa}\cdot\max_{\bm{p}_{sa}\in\mathcal{P}_{sa}}\left\{\bm{p}_{sa}^{\top}(\bm{c}_{sa}+\gamma \bm{v}^{\bm{\pi},\bm{p}})\mid\|\bm{p}_{sa}-\bar{\bm{p}}_{sa}\|_{1} \leq \kappa_{sa}\right\}\right).
\end{equation*}
\end{example}

\begin{example}
$L_{\infty}$-constrained s-rectangular ambiguity sets generally assume the uncertain in transition probabilities is independent for each state-action pair and are defined as 
\begin{equation*}
\mathcal{P}=\underset{s \in \mathcal{S}}{\times} \mathcal{P}_{s} \quad \text { where } \quad \mathcal{P}_{s}:=\left\{\left(\bm{p}_{s 1}, \ldots, \bm{p}_{s A}\right)\in(\Delta^{S})^{A} \mid \sum_{a\in\mathcal{A}}\|\bm{p}_{sa}-\bar{\bm{p}}_{sa}\|_{\infty} \leq \kappa_{s}\right\}.
\end{equation*}
For $s$-rectangular RMDPs constrained by the $L_{\infty}$-norm, $\mathcal{T}_{\bm{\pi}}$ is defined for each $s\in\mathcal{S}$ as
\begin{align*}
    (\mathcal{T}_{\bm{\pi}}\bm{v}^{\bm{\pi},\bm{p}})_{s} := \max_{\bm{p}_{s}\in\mathcal{P}_{s}}\left\{\sum_{a\in\mathcal{A}}\pi_{sa}\cdot\bm{p}_{sa}^{\top}(\bm{c}_{sa}+\gamma \bm{v}^{\bm{\pi},\bm{p}})\mid\sum_{a\in\mathcal{A}}\|\bm{p}_{sa}-\bar{\bm{p}}_{sa}\|_{\infty} \leq \kappa_{s}\right\}.
\end{align*}
\end{example}

There exists an unique solution to the Bellman equation $\bm{v}^{\bm{\pi},\bm{p}} = \mathcal{T}_{\bm{\pi}}\bm{v}^{\bm{\pi},\bm{p}}$, which is called the robust value function~\citep{iyengar2005robust,wiesemann2013robust}. Specially, both $L_{1}$-constrained ambiguity sets and $L_{\infty}$-constrained ambiguity sets are in fact polyhedral, which implies the worst-case transition probabilities in bellman updates can be computed as the solution of linear programs (LPs). Instead, RMDPs with other distance-type ambiguity sets, such as $L_{2}$-constrained ambiguity sets can compute an Bellman update $\mathcal{T}_{\bm{\pi}}$ by solving convex optimization problems.

\section{Technical Lemmas and Definitions}
\label{sec:techn-lemm-defin}
As promised, we first introduce the definition of the Fréchet sub-differential for general functions.
\begin{definition}\label{def:F-subdiff}
The Fréchet sub-differential of a function $h:\mathcal{X}\rightarrow\mathbb{R}$ at point $\bm{x}\in\mathcal{X}$ is defined as the set
\(\partial h(\bm{x}) = \{u|\liminf_{\bm{x}'\rightarrow \bm{x}} h(\bm{x}')-h(\bm{x})-\langle \bm{u},\bm{x}'-\bm{x}\rangle/\|\bm{x}'-\bm{x}\|\geq0\}.
\)
\end{definition}
Then, a common lemma is provided to illustrate a basic property that a smooth function satisfies.
\begin{lemma}\label{lem_sec2_1}
Let $h\colon \mathcal{X} \to \Real$ be $\ell$-smooth, then it is a $\ell$-weakly convex function.
\end{lemma}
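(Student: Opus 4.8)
The plan is to reduce the weak-convexity inequality to the classical ``descent lemma'' estimate for smooth functions, after first pinning down the Fréchet subdifferential of a differentiable function. Since $\ell$-smoothness (Definition~\ref{def:Lip}) presupposes that $h$ is differentiable on $\mathcal{X}$, I would begin by showing that in this case $\partial h(\bm{x}) = \{\nabla h(\bm{x})\}$, so that the quantifier ``for any $\bm{g}\in\partial h(\bm{x})$'' in the definition of weak convexity reduces to the single choice $\bm{g}=\nabla h(\bm{x})$. Concretely: if $\bm{u}\in\partial h(\bm{x})$, then differentiability gives $h(\bm{x}')-h(\bm{x})-\langle\bm{u},\bm{x}'-\bm{x}\rangle = \langle\nabla h(\bm{x})-\bm{u},\bm{x}'-\bm{x}\rangle + o(\|\bm{x}'-\bm{x}\|)$; letting $\bm{x}'\to\bm{x}$ along the ray $\bm{x}-t(\nabla h(\bm{x})-\bm{u})$ forces the defining liminf to equal $-\|\nabla h(\bm{x})-\bm{u}\|$, which is nonnegative only when $\bm{u}=\nabla h(\bm{x})$.

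Next I would fix $\bm{x},\bm{x}'\in\mathcal{X}$ and invoke the fundamental theorem of calculus along the segment joining them (legitimate here since the relevant domains $\Pi$ and $\mathcal{P}$ are convex), writing
\[
h(\bm{x}')-h(\bm{x}) = \int_{0}^{1}\langle\nabla h(\bm{x}+t(\bm{x}'-\bm{x})),\,\bm{x}'-\bm{x}\rangle\,dt.
\]
Subtracting $\langle\nabla h(\bm{x}),\bm{x}'-\bm{x}\rangle = \int_0^1\langle\nabla h(\bm{x}),\bm{x}'-\bm{x}\rangle\,dt$, then applying Cauchy--Schwarz together with the $\ell$-Lipschitz continuity of $\nabla h$ (so that $\|\nabla h(\bm{x}+t(\bm{x}'-\bm{x}))-\nabla h(\bm{x})\|\le \ell t\|\bm{x}'-\bm{x}\|$), yields
\[
h(\bm{x}')-h(\bm{x})-\langle\nabla h(\bm{x}),\bm{x}'-\bm{x}\rangle \;\ge\; -\int_{0}^{1}\ell t\|\bm{x}'-\bm{x}\|^{2}\,dt \;=\; -\frac{\ell}{2}\|\bm{x}'-\bm{x}\|^{2},
\]
which is precisely the $\ell$-weak-convexity inequality with $\bm{g}=\nabla h(\bm{x})$, and hence, by the first step, for every $\bm{g}\in\partial h(\bm{x})$.

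The argument is essentially routine, so I would not belabor the quadratic-remainder computation. The one place deserving care is the subdifferential identification: one must confirm that the weak-convexity condition, which is quantified over all Fréchet subgradients, is neither vacuous nor unexpectedly strong for a differentiable $h$ — this is where I would spend the most words. I would also flag the implicit convexity of $\mathcal{X}$ used in the integral representation; when $\mathcal{X}$ is not convex one simply restricts attention to pairs $\bm{x},\bm{x}'$ whose connecting segment lies in $\mathcal{X}$, which is all that downstream results (Lemma~\ref{lem_sec3_1} and Theorem~\ref{the_sec3_1_GD}) actually require.
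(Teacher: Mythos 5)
Your proof is correct and follows essentially the same route as the paper's: both reduce the claim to the descent-lemma estimate $\lvert h(\bm{x}')-h(\bm{x})-\langle\nabla h(\bm{x}),\bm{x}'-\bm{x}\rangle\rvert\le\frac{\ell}{2}\|\bm{x}'-\bm{x}\|^{2}$ via the integral representation of $h(\bm{x}')-h(\bm{x})$ along the segment and the Lipschitz continuity of $\nabla h$. The only difference is that you additionally identify $\partial h(\bm{x})=\{\nabla h(\bm{x})\}$ for differentiable $h$ and flag the convexity of the segment's domain, two points the paper's proof leaves implicit; this is a harmless (indeed slightly more careful) refinement rather than a different argument.
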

\begin{proof}[Proof of \cref{lem_sec2_1}]
Let $r(t):=h(x+t(x'-x))$, for any $x,x'\in\mathcal{X}$. The following holds true
\begin{equation*}
    h(x)=r(0)\;\text{and}\;h(x')=r(1).
\end{equation*}
Then, we observe that
\begin{equation*}
    h(x')-h(x) = r(1)-r(0) = \int_{0}^{1}\nabla r(t)dt,
\end{equation*}
where
\begin{equation*}
    \nabla r(t) = \nabla h(x+t(x'-x))^{\top}(x'-x).
\end{equation*}
We complete the proof as
\begin{align*}
    \|h(x')-h(x)-&\nabla h(x)^{\top}(x'-x)\|\\
    &\leq \left\|\int_{0}^{1}\nabla r(t)dt-\nabla h(x)^{\top}(x'-x)\right\|\\
    &\leq \int_{0}^{1}\left\|\nabla r(t)-\nabla h(x)^{\top}(x'-x)\right\|dt\\
    &= \int_{0}^{1}\left\|\nabla h(x+t(x'-x))^{\top}(x'-x)-\nabla h(x)^{\top}(x'-x)\right\|dt\\
    &\leq\int_{0}^{1}\left\|\nabla h(x+t(x'-x))-\nabla h(x)\right\|\cdot\|(x'-x)\|dt\\
    &\leq \int_{0}^{1}t\ell \|x'-x\|^{2}dt = \frac{\ell}{2}\|x'-x\|^{2}.
\end{align*}
\end{proof}

For smooth function $h(x)$, a point $x\in\mathcal{X}$ is defined as the first-order stationary point (FOSP) when $0\in\partial h(x)$. However, this notion of stationarity can be very restrictive when optimizing nonsmooth functions~\citep{lin2020gradient}. In respond
to this issue, an alternative measure of the first-order stationarity is proposed based on the construction of the Moreau envelope~\citep{thekumparampil2019efficient}.
\begin{definition}\label{def:Moreau}
For function $h:\mathcal{X}\rightarrow\mathbb{R}$ and $\lambda>0$, the Moreau envelope function of $h$ is given by 
\begin{equation}
    h_{\lambda}(x):=\min _{x'\in\mathcal{X}} \, \left\{h(x')+\frac{1}{2\lambda}\left\|x-x'\right\|^2\right\}.
\end{equation}
\end{definition}

\begin{definition}\label{def:FOSP}
Given an $\ell$-weakly convex function $h$, we say that $x^{\star}$ is an $\epsilon$-first order stationary point ($\epsilon$-FOSP) if, $\|\nabla h_{\frac{1}{2\ell}}(x^\star)\|\leq \epsilon$, where $h_{\frac{1}{2\ell}}(x)$ is the Moreau envelope function of $h$ with parameter $\lambda=\frac{1}{2\ell}$.
\end{definition}

The following lemma connects $\ell$-weakly convex function and its Moreau envelope function and will be useful in our proofs.
\begin{lemma}\citep[Proposition 13.37]{rockafellar2009variational}\label{lem_sec2_4}
Assume $h\colon \mathcal{X}\rightarrow\mathbb{R}$ is a $\ell$-weakly convex function. Then, for $\lambda < \ell$, the Moreau envelope function $h_{\lambda}$ is $C^{1}$-smooth with the gradient given by,
\begin{equation*}
    \nabla h_{\lambda}(x)=\lambda^{-1}\left(x-\mathop{\arg\min}\limits_{x'} \left(h(x')+\frac{1}{2 \lambda}\left\|x-x'\right\|^{2} \right)\right).
\end{equation*}
\end{lemma}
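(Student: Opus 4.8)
\textbf{Proof plan for Lemma~\ref{lem_sec2_4} (the Moreau-envelope smoothness result).}

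The plan is to adapt the classical Moreau-envelope argument to the weakly convex setting by a standard ``completing-the-square'' reduction to the convex case. First I would observe that if $h$ is $\ell$-weakly convex, then the auxiliary function $g_\mu(x') := h(x') + \tfrac{1}{2\mu}\|x'\|^2$ is convex whenever $\mu \le 1/\ell$; indeed the weak-convexity inequality $h(x'') - h(x') \ge \langle \bm g, x''-x'\rangle - \tfrac{\ell}{2}\|x''-x'\|^2$ for $\bm g \in \partial h(x')$ is exactly the statement that $h(\cdot) + \tfrac{\ell}{2}\|\cdot\|^2$ is convex, and adding $\tfrac{1}{2\mu}\|\cdot\|^2 - \tfrac{\ell}{2}\|\cdot\|^2 \ge 0$ (as a convex function) preserves convexity. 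Consequently, for $\lambda < \ell$ — wait, the statement as written says $\lambda < \ell$, but the natural and intended condition is $\lambda < 1/\ell$; I would work with $\lambda$ small enough that $x' \mapsto h(x') + \tfrac{1}{2\lambda}\|x-x'\|^2$ is \emph{strongly convex}, which is the case precisely when $\tfrac{1}{\lambda} > \ell$. Strong convexity gives that the minimizer in the definition of $h_\lambda(x)$ exists and is unique; call it $\mathrm{prox}_{\lambda h}(x) := \arg\min_{x'}\{h(x') + \tfrac{1}{2\lambda}\|x-x'\|^2\}$.

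Next I would establish differentiability and compute the gradient. The cleanest route is to invoke the Danskin-type / envelope theorem for minimization: write $F(x,x') := h(x') + \tfrac{1}{2\lambda}\|x-x'\|^2$, note that for each $x$ the inner minimization has a unique solution $x'(x) = \mathrm{prox}_{\lambda h}(x)$, and that $F$ is jointly continuous with $\nabla_x F(x,x') = \tfrac{1}{\lambda}(x - x')$ continuous. Under these conditions $h_\lambda(x) = \min_{x'} F(x,x')$ is differentiable with $\nabla h_\lambda(x) = \nabla_x F(x, x'(x)) = \tfrac{1}{\lambda}\bigl(x - \mathrm{prox}_{\lambda h}(x)\bigr)$, which is the claimed formula. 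To upgrade this to $C^1$-smoothness, I would show $x \mapsto \mathrm{prox}_{\lambda h}(x)$ is continuous (in fact Lipschitz): this follows from strong monotonicity of the subdifferential of the strongly convex objective — if $y_i = \mathrm{prox}_{\lambda h}(x_i)$ then the optimality conditions $0 \in \partial h(y_i) + \tfrac{1}{\lambda}(y_i - x_i)$ combined with the $(\tfrac{1}{\lambda}-\ell)$-strong monotonicity of $\partial h + \tfrac{1}{\lambda}\mathrm{Id}$ yield $\|y_1 - y_2\| \le \tfrac{1}{1-\lambda\ell}\|x_1 - x_2\|$. Hence $\nabla h_\lambda = \tfrac{1}{\lambda}(\mathrm{Id} - \mathrm{prox}_{\lambda h})$ is continuous, so $h_\lambda \in C^1$.

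Since this lemma is quoted verbatim from \citet[Proposition~13.37]{rockafellar2009variational}, the most economical ``proof'' is simply to cite that reference; the sketch above is what one would supply if a self-contained argument were wanted. The main obstacle, and the only genuinely delicate point, is the interplay between the weak-convexity constant $\ell$ and the envelope parameter $\lambda$: one must be careful that the regularized inner problem is \emph{strongly} (not merely) convex, which requires $\lambda < 1/\ell$, and one must handle the possibility that the domain $\mathcal X$ is constrained rather than all of $\mathbb R^n$ (in which case the optimality condition becomes a variational inequality and the Lipschitz estimate for $\mathrm{prox}$ uses nonexpansiveness of the projection together with strong monotonicity). Everything else — existence/uniqueness of the prox point, the envelope-theorem gradient formula, and continuity of the prox map — is routine convex analysis once that parameter range is fixed.
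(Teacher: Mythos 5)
The paper offers no proof of this lemma at all --- it is quoted directly from \citet[Proposition~13.37]{rockafellar2009variational} --- so there is nothing internal to compare your argument against; what you have written is a correct, self-contained sketch of the standard argument. Two points are worth highlighting. First, your observation that the stated condition ``$\lambda<\ell$'' should read $\lambda<1/\ell$ is a genuine catch: the inner objective $x'\mapsto h(x')+\tfrac{1}{2\lambda}\|x-x'\|^{2}$ is $(\tfrac{1}{\lambda}-\ell)$-strongly convex exactly when $\tfrac{1}{\lambda}>\ell$, and this is consistent with how the paper actually invokes the lemma (always with $\lambda=\tfrac{1}{2\ell_{\bm{\pi}}}$). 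Second, your route --- strong convexity of the regularized inner problem giving a unique prox point, an envelope-theorem computation of $\nabla h_{\lambda}$, and the $(1-\lambda\ell)^{-1}$-Lipschitz estimate on the prox map via strong monotonicity of $\partial h+\tfrac{1}{\lambda}\mathrm{Id}$ --- is sound and the constant is right. The only step I would tighten is the appeal to a generic Danskin theorem, whose textbook form assumes a compact inner feasible set; the cleaner fully rigorous path is the two-sided ``sandwich'' inequality obtained by evaluating the inner objective at the prox points of nearby base points, which combined with your Lipschitz bound on $\mathrm{prox}_{\lambda h}$ yields differentiability and continuity of the gradient simultaneously (this is essentially the conjugate-duality proof in Rockafellar--Wets). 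Your remark about the constrained domain $\mathcal{X}$ (optimality becoming a variational inequality, with nonexpansiveness of the projection preserving the Lipschitz estimate) is also the right thing to say, since the paper applies the lemma with $\mathcal{X}=\Pi$.
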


\begin{lemma}\label{lem_sec2_3}
Assume the function  $h:\mathcal{X}\subseteq\mathbb{R}^{n}\rightarrow\mathbb{R}$ is $\ell$-weakly convex and not differentiable at any point. Let $\lambda < \frac{1}{\ell}$ and $\hat{\bm{x}}_{\lambda} = \arg\min_{\bm{\bm{x}}'\in\mathcal{X}} h(\bm{\bm{x}}^{\prime})+\frac{1}{2 \lambda}\left\|\bm{x}-\bm{x}'\right\|^{2}$. Then we have
\begin{equation*}
    \frac{1}{\lambda}\|\hat{\bm{x}}_{\lambda} - \bm{x}\|=\|\nabla h_{\lambda}(\bm{x})\|.
\end{equation*}
As a result, $\|\nabla h_{\lambda}(\bm{x})\|\leq \epsilon$ implies$\|\hat{x}_{\lambda} - \bm{x}\|\leq \lambda\epsilon$ and $\exists \bm{\xi}\in\partial h(\hat{\bm{x}}_{\lambda})$ such that
\begin{equation*}
-\bm{\xi}\in \mathcal{N}_{\mathcal{X}}(\hat{\bm{x}}_{\lambda}) + \frac{1}{ \lambda}\left(\hat{\bm{x}}_{\lambda}-\bm{x}\right) \subseteq \mathcal{N}_{\mathcal{X}}(\hat{\bm{x}}_{\lambda}) + \frac{1}{ \lambda}\left\|\hat{\bm{x}}_{\lambda}-\bm{x}\right\|\mathcal{B}(1),
\end{equation*}
where $\mathcal{N}_{\mathcal{X}}(\hat{\bm{x}}_{\lambda})$ denotes the normal cone of $\mathcal{X}$ at $\hat{\bm{x}}_{\lambda}$ and $\mathcal{B}(r):= \{\bm{x}\in\mathbb{R}^{n}: \|\bm{x}\|\leq r\}$.
\end{lemma}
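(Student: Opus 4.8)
The plan is to obtain the equality directly from \cref{lem_sec2_4} and then read off the two consequences from the first-order optimality condition of the proximal subproblem. Since $\lambda < 1/\ell$, \cref{lem_sec2_4} applies and gives $\nabla h_{\lambda}(\bm{x}) = \lambda^{-1}(\bm{x} - \hat{\bm{x}}_{\lambda})$; taking Euclidean norms on both sides immediately yields $\|\nabla h_{\lambda}(\bm{x})\| = \tfrac{1}{\lambda}\|\hat{\bm{x}}_{\lambda} - \bm{x}\|$, which is the displayed equality. Rearranging, $\|\nabla h_{\lambda}(\bm{x})\| \le \epsilon$ is exactly $\|\hat{\bm{x}}_{\lambda} - \bm{x}\| \le \lambda\epsilon$, giving the first ``as a result'' claim.

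For the subgradient inclusion, I would first observe that the proximal objective $g(\bm{x}') := h(\bm{x}') + \tfrac{1}{2\lambda}\|\bm{x}-\bm{x}'\|^{2}$ is $\bigl(\tfrac{1}{\lambda}-\ell\bigr)$-strongly convex on $\mathcal{X}$ — this is precisely where $\lambda < 1/\ell$ is used — so it admits the unique minimizer $\hat{\bm{x}}_{\lambda}$, justifying the $\arg\min$ notation, and its Fréchet subdifferential coincides with the ordinary convex subdifferential. Minimizing the convex $g$ over the convex set $\mathcal{X}$ is characterized by $0 \in \partial g(\hat{\bm{x}}_{\lambda}) + \mathcal{N}_{\mathcal{X}}(\hat{\bm{x}}_{\lambda})$; since the quadratic penalty is smooth, the sum rule gives $\partial g(\hat{\bm{x}}_{\lambda}) = \partial h(\hat{\bm{x}}_{\lambda}) + \tfrac{1}{\lambda}(\hat{\bm{x}}_{\lambda}-\bm{x})$. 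Hence there exists $\bm{\xi}\in\partial h(\hat{\bm{x}}_{\lambda})$ with $-\bm{\xi} - \tfrac{1}{\lambda}(\hat{\bm{x}}_{\lambda}-\bm{x}) \in \mathcal{N}_{\mathcal{X}}(\hat{\bm{x}}_{\lambda})$, i.e.\ $-\bm{\xi}\in \mathcal{N}_{\mathcal{X}}(\hat{\bm{x}}_{\lambda}) + \tfrac{1}{\lambda}(\hat{\bm{x}}_{\lambda}-\bm{x})$. Finally, $\tfrac{1}{\lambda}(\hat{\bm{x}}_{\lambda}-\bm{x})$ is a single vector of norm $\tfrac{1}{\lambda}\|\hat{\bm{x}}_{\lambda}-\bm{x}\|$, so it belongs to $\tfrac{1}{\lambda}\|\hat{\bm{x}}_{\lambda}-\bm{x}\|\,\mathcal{B}(1)$, which upgrades the equality to the stated set inclusion.

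I do not anticipate a substantive obstacle: the argument is essentially \cref{lem_sec2_4} combined with standard variational-analysis calculus once the strong convexity of the subproblem is noted. The only items requiring care are bookkeeping ones — verifying that the Fréchet subdifferential of the (genuinely convex) proximal objective agrees with the convex subdifferential so that the convex sum rule and the normal-cone optimality condition are legitimate, and that adding the smooth quadratic perturbs $\partial h$ only by the explicit shift $\tfrac{1}{\lambda}(\hat{\bm{x}}_{\lambda}-\bm{x})$. The hypothesis that $h$ is nowhere differentiable is not actually needed for this argument and may be left unused (it reflects the intended application to $\Phi$).
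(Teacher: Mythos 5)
Your proposal is correct and follows essentially the same route as the paper's proof: the norm identity comes from \cref{lem_sec2_4} applied to the (constrained) Moreau envelope, and the inclusion comes from the first-order optimality condition of the proximal subproblem, with the quadratic term contributing the explicit shift $\tfrac{1}{\lambda}(\hat{\bm{x}}_{\lambda}-\bm{x})$; the paper merely phrases the constraint via the indicator function $\mathbb{I}_{\mathcal{X}}$ rather than invoking $\mathcal{N}_{\mathcal{X}}$ directly, which is the same thing. Your added remarks — that $\lambda<1/\ell$ makes the subproblem strongly convex (justifying the unique minimizer) and that the "nowhere differentiable" hypothesis is not actually used — are accurate refinements rather than deviations.
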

\begin{proof}
Here, we consider the function $f(\bm{x}) = h(\bm{x}) + \mathbb{I}_{\mathcal{X}}(\bm{x})$ where $\mathbb{I}$ is the indicate function and here $f(\bm{x}):= \mathbb{R}^{n}\rightarrow\mathbb{R}$. The Moreau envelope function of $f(\bm{x})$ is defined as
\begin{align*}
    f_{\lambda}(\bm{x}) &= \min _{\bm{x}'\in\mathbb{R}^{n}} \, \left\{h(\bm{x}')+\mathbb{I}_{\mathcal{X}}(\bm{x}')+\frac{1}{2\lambda}\left\|\bm{x}-\bm{x}'\right\|^2\right\}, \quad\forall \bm{x}\in\mathbb{R}^{n}\\
    &= \min _{\bm{x}'\in\mathcal{X}} \, \left\{h(\bm{x}')+\frac{1}{2\lambda}\left\|\bm{x}-\bm{x}'\right\|^2\right\}, \quad\forall \bm{x}\in\mathbb{R}^{n}.
\end{align*}
The gradient of the moreau envelope $f_{\lambda}(\bm{x})$ is well defined (Lemma \ref{lem_sec2_4}) as
\begin{align*}
    \nabla f_{\lambda}(\bm{x})=\lambda^{-1}\left(\bm{x}-\hat{\bm{x}} \right),
\end{align*}
where
\begin{align*}
    \hat{\bm{x}} :&= \mathop{\arg\min}\limits_{\bar{\bm{x}}\in\mathbb{R}^{n}} \left(\underbrace{h(\bar{\bm{x}})+\mathbb{I}_{\mathcal{X}}(\bar{\bm{x}})+\frac{1}{2 \lambda}\left\|\bm{x}-\bar{\bm{x}}\right\|^{2}}_{:=\phi_{\bm{x}}(\bar{\bm{x}})} \right)\\
    &= \mathop{\arg\min}\limits_{\bar{\bm{x}}\in\mathcal{X}} \left(h(\bar{\bm{x}})+\frac{1}{2 \lambda}\left\|\bm{x}-\bar{\bm{x}}\right\|^{2} \right)
\end{align*}
Then, we consider the optimality of the function $\phi_{\bm{x}}(\bm{y}) = h(\bm{y})+\mathbb{I}_{\mathcal{X}}(\bm{y})+\frac{1}{2 \lambda}\left\|\bm{x}-\bm{y}\right\|^{2}$. Notice that, for any $\bm{x}\in\mathbb{R}^{n}$, $\hat{\bm{x}}$ is the optimal solution of $\phi_{\bm{x}}(\bm{y})$, then for some $\bm{\xi}\in\partial h(\hat{\bm{x}})$, we have
\begin{align}\label{eq:lem_station}
    \phi_{\bm{x}}(\hat{\bm{x}}(\bm{x})) = \min_{\bm{y}\in\mathbb{R}^{n}} \phi_{\bm{x}}(\bm{y}) &\Longleftrightarrow \phi_{\bm{x}}(\hat{\bm{x}}(\bm{x})) = \min_{\bm{y}\in\mathbb{R}^{n}} h(\bm{y})+\mathbb{I}_{\mathcal{X}}(\bm{y})+\frac{1}{2 \lambda}\left\|\bm{x}-\bm{y}\right\|^{2}\notag\\
    &\Longleftrightarrow 0 \in \partial\left(h(\bm{y})+\mathbb{I}_{\mathcal{X}}(\bm{y})+\frac{1}{2 \lambda}\left\|\bm{x}-\bm{y}\right\|^{2}\right)\Big\vert_{\bm{y}= \hat{\bm{x}}},\notag\\
    &\Longleftrightarrow 0 \in \bm{\xi}+\mathcal{N}_{\mathcal{X}}(\hat{\bm{x}})+\frac{1}{ \lambda}\left(\hat{\bm{x}}-\bm{x}\right)\notag\\
    &\Longleftrightarrow - \bm{\xi}\in\mathcal{N}_{\mathcal{X}}(\hat{\bm{x}})+\frac{1}{ \lambda}\left(\hat{\bm{x}}-\bm{x}\right).
\end{align}
The above equation (\ref{eq:lem_station}) implies that, for any $\bm{z}\in\mathbb{R}^{n}$,
\begin{align}
    \langle\bm{\xi}+\frac{1}{ \lambda}\left(\hat{\bm{x}}-\bm{x}\right), \bm{z}-\hat{\bm{x}}\rangle \geq 0
    &\Longleftrightarrow \langle-\bm{\xi}, \bm{z}-\hat{\bm{x}}\rangle \leq \langle\frac{1}{ \lambda}\left(\hat{\bm{x}}-\bm{x}\right), \bm{z}-\hat{\bm{x}}\rangle,\;\forall \bm{z}\in\mathbb{R}^{n}\notag\\
    &\Longleftrightarrow \langle-\bm{\xi}, \bm{z}-\hat{\bm{x}}\rangle \leq \frac{1}{ \lambda}\|\hat{\bm{x}}-\bm{x}\|\cdot\|\bm{z}-\hat{\bm{x}}\|,\;\forall \bm{z}\in\mathbb{R}^{n}\notag\\
    &\Longleftrightarrow \langle-\bm{\xi}, \bm{z}-\hat{\bm{x}}\rangle \leq \frac{1}{ \lambda}\|\hat{\bm{x}}-\bm{x}\|,\;\forall \bm{z}\in\mathbb{R}^{n},\;\|\bm{z}-\hat{\bm{x}}\| = 1.
\end{align}

\end{proof}

The above Lemma~\ref{lem_sec2_3} implies that if $\|\nabla h_{\lambda}(\bm{x})\|$ is small enough, then $\bm{x}$ is an approximate stationary point of the original constrained optimization $\min_{\mathcal{X}} h(x)$, by the definition of $\epsilon$-FOSP. This motivates us to consider the optimality of the Moreau envelope function of $\Phi(\bm{\pi})$ instead of the optimality of $\Phi(\bm{\pi})$ directly.

\section{Proofs of \cref{sec:main-results}}
\begin{proof}[Proof of \cref{lem_sec3_1}]
\label{app: Proof of sec3}
First, we first derive the form of partial derivative for $\pi_{sa}$ to obtain (\ref{eq:sec3_lem3.1}). While this form was known (Agarwal et al., 2019), we included a proof for the sake of completeness. Notice that,
\begin{equation*}
    \frac{\partial J_{\bm{\rho}}(\bm{\pi},\bm{p})}{\partial \pi_{sa}} = \sum_{\hat{s} \in \mathcal{S}} \frac{\partial v^{\bm{\pi},\bm{p}}_{\hat{s}}}{\partial \pi_{sa}} \rho_{\hat{s}}.
\end{equation*}
Then, we discuss $\frac{\partial v^{\bm{\pi},\bm{p}}_{\hat{s}}}{\partial \pi_{sa}}$ over two cases: $\hat{s}\neq s$ and $\hat{s}= s$
\begin{align*}
    \frac{\partial v^{\bm{\pi},\bm{p}}_{\hat{s}}}{\partial \pi_{sa}}\Big\vert_{\hat{s}\neq s} &= \frac{\partial}{\partial \pi_{sa}}\left[\sum_{\hat{a}}\pi_{\hat{s}\hat{a}}\sum_{s' \in \mathcal{S}} p_{\hat{s}\hat{a}s'}\left(c_{\hat{s}\hat{a}s'}+\gamma v^{\bm{\pi},\bm{p}}_{s'}\right)\right]
    = \gamma\sum_{\hat{a}}\pi_{\hat{s}\hat{a}}\sum_{s' \in \mathcal{S}} p_{\hat{s}\hat{a}s'} \frac{\partial v^{\bm{\pi},\bm{p}}_{s'}}{\partial \pi_{sa}}; \\
    \frac{\partial v^{\bm{\pi},\bm{p}}_{\hat{s}}}{\partial \pi_{sa}}\Big\vert_{\hat{s}= s} &= \frac{\partial}{\partial \pi_{sa}}\left[\sum_{\hat{a}}\pi_{s\hat{a}}\underbrace{\sum_{s' \in \mathcal{S}} p_{s\hat{a}s'}\left(c_{s\hat{a}s'}+\gamma v^{\bm{\pi},\bm{p}}_{s'}\right)}_{q^{\bm{\pi},\bm{p}}_{s\hat{a}}}\right]
    = q^{\bm{\pi},\bm{p}}_{sa}+\gamma\sum_{\hat{a}}\pi_{s\hat{a}}\sum_{s' \in \mathcal{S}} p_{s\hat{a}s'} \frac{\partial v^{\bm{\pi},\bm{p}}_{s'}}{\partial \pi_{sa}}; 
\end{align*}
Condense the notation
\begin{align}\label{inter_no_1}
    \sum_{\hat{a}}\pi_{s\hat{a}} p_{s\hat{a}s'} &= p^{\bm{\pi}}_{ss'}(1)\\
    p^{\bm{\pi}}_{ss'}(t-1)\cdot\sum_{a}&\pi_{s'a} p_{s'as''} = p^{\bm{\pi}}_{ss''}(t)
\end{align}
Then, combining these two equations, we can obtain,
\begin{align*}
    \frac{\partial v^{\bm{\pi},\bm{p}}_{\hat{s}}}{\partial \pi_{sa}}\Big\vert_{\hat{s}\neq s} 
    &= \gamma\sum_{s'\neq s}p^{\bm{\pi}}_{\hat{s}s'}(1)\frac{\partial v^{\bm{\pi},\bm{p}}_{s'}}{\partial \pi_{sa}}+ \gamma\sum_{s'= s}p^{\bm{\pi}}_{\hat{s}s'}(1)\frac{\partial v^{\bm{\pi},\bm{p}}_{s'}}{\partial \pi_{sa}}\\
    &=\gamma^{2}\sum_{s'\neq s}p^{\bm{\pi}}_{\hat{s}s'}(1)\sum_{\hat{a}}\pi_{s'\hat{a}}\sum_{s'' \in \mathcal{S}} p_{s'\hat{a}s''} \frac{\partial v^{\bm{\pi},\bm{p}}_{s''}}{\partial \pi_{sa}}\\
    &+ \gamma p^{\bm{\pi}}_{\hat{s}s}(1)\left(q^{\bm{\pi},\bm{p}}_{sa}+\gamma\sum_{\hat{a}}\pi_{s\hat{a}}\sum_{s' \in \mathcal{S}} p_{s\hat{a}s'} \frac{\partial v^{\bm{\pi},\bm{p}}_{s'}}{\partial \pi_{sa}}\right)
    \\
    &= \gamma p^{\bm{\pi}}_{\hat{s}s}(1)q^{\bm{\pi},\bm{p}}_{sa}+ \gamma^{2}\sum_{s'}p^{\bm{\pi}}_{\hat{s}s'}(2)\frac{\partial v^{\bm{\pi},\bm{p}}_{s'}}{\partial \pi_{sa}}\\
    &= \gamma p^{\bm{\pi}}_{\hat{s}s}(1)q^{\bm{\pi},\bm{p}}_{sa}+ \gamma^{2}p^{\bm{\pi}}_{\hat{s}s}(2)q^{\bm{\pi},\bm{p}}_{sa}+\gamma^{3}\sum_{s'}p^{\bm{\pi}}_{\hat{s}s'}(3)\frac{\partial v^{\bm{\pi},\bm{p}}_{s'}}{\partial \pi_{sa}}\\
    &= \cdots\\
    &= \sum_{t=1}^{\infty}\gamma^{t}p^{\bm{\pi}}_{\hat{s}s}(t)q^{\bm{\pi},\bm{p}}_{sa} = \sum_{t=0}^{\infty}\gamma^{t}p^{\bm{\pi}}_{\hat{s}s}(t)q^{\bm{\pi},\bm{p}}_{sa}.
\end{align*}
The last equality is from the initial assumption $\hat{s} \neq s$, \ie, $p^{\bm{\pi}}_{\hat{s}s}(0)=0$, and similarly for the case $\hat{s}= s$ we have,
\begin{equation*}
    \frac{\partial v^{\bm{\pi},\bm{p}}_{\hat{s}}}{\partial \pi_{sa}}\Big\vert_{\hat{s}= s} = \sum_{t=0}^{\infty}\gamma^{t}p^{\bm{\pi}}_{ss}(t)q^{\bm{\pi},\bm{p}}_{sa}.
\end{equation*}
Hence, the partial derivative is obtained
\begin{align*}
    \frac{\partial J_{\bm{\rho}}(\bm{\pi},\bm{p})}{\partial \pi_{sa}} &= \left(\frac{\partial v^{\bm{\pi},\bm{p}}_{s}}{\partial \pi_{sa}} \rho_{s}+\sum_{\hat{s} \neq s} \frac{\partial v^{\bm{\pi},\bm{p}}_{\hat{s}}}{\partial \pi_{sa}} \rho_{\hat{s}}\right)
    = \frac{1}{1-\gamma}\left(\underbrace{(1-\gamma) \sum_{\hat{s}\in\mathcal{S}}\sum_{t=0}^{\infty} \gamma^{t} \rho_{\hat{s}}p^{\bm{\pi}}_{\hat{s}s}(t)}_{d_{\bm{\rho}}^{\bm{\pi},\bm{p}}(s)}\right)q^{\bm{\pi},\bm{p}}_{sa}.
\end{align*}
After deriving the form of partial derivative, we next prove that $J_{\bm{\rho}}(\bm{\pi},\bm{p})$ is $L_{\bm{\pi}}$-Lipschitz in $\bm{\pi}$ by showing the boundedness of $\nabla_{\bm{\pi}}J_{\bm{\rho}}(\bm{\pi},\bm{p})$. The uniformly bounded cost $c_{sas'}$ implies that, the absolute value of the action value function is bounded for any policy $\bm{\pi}$ and transition kernel $\bm{p}$,
\begin{align*}
    \left| q^{\bm{\pi},\bm{p}}_{sa}\right| = \left|\mathbb{E}_{\bm{\pi},\bm{p}}\left[\sum_{t=0}^{\infty} \gamma^{t} c_{s_{t}a_{t}s_{t+1}} \mid s_{0}=s,a_{0}=a \right]\right| \leq \sum_{t=0}^{\infty}\gamma^{t} = \frac{1}{1-\gamma}.
\end{align*}

Then, by vectorizing the $\bm{\pi}$ as a $SA$-dimensional vector, we have
\begin{align*}
    \|\nabla_{\bm{\pi}}J_{\bm{\rho}}(\bm{\pi},\bm{p})\| &= \sqrt{\sum_{s,a}\left(\frac{\partial J_{\bm{\rho}}(\bm{\pi},\bm{p})}{\partial \pi_{sa}}\right)^{2}}\\
    &= \frac{1}{1-\gamma}\sqrt{\sum_{a}\sum_{s}\left(d_{\bm{\rho}}^{\bm{\pi},\bm{p}}(s)q^{\bm{\pi},\bm{p}}_{sa}\right)^{2}}\\
    &\leq \frac{1}{(1-\gamma)^{2}}\sqrt{\sum_{a}\sum_{s}(d_{\bm{\rho}}^{\bm{\pi},\bm{\xi}}(s))^{2}} \leq \frac{\sqrt{A}}{(1-\gamma)^{2}},
\end{align*}
where the last inequality holds since the discounted state occupancy measure satisfies
\begin{equation*}
    \sum_{s}(d_{\bm{\rho}}^{\bm{\pi},\bm{\xi}}(s))^{2} \leq \left(\sum_{s}(d_{\bm{\rho}}^{\bm{\pi},\bm{\xi}}(s))\right)^{2}=1.
\end{equation*}
About the smoothness of $J_{\bm{\rho}}(\bm{\pi},\bm{p})$, it can be immediately proved by~\citep[Lemma 54]{agarwal2021theory}.
Finally, we turn to derive the continuity of $\Phi(\bm{\pi})$.
1. We first show $\Phi(\bm{\pi})$ is $L_{\bm{\pi}}$-Lipschitz if $J_{\bm{\rho}}(\bm{\pi},\bm{p})$ is $L_{\bm{\pi}}$-Lipschitz in $\bm{\pi}$. For any $\bm{\pi}_{1},\bm{\pi}_{2}\in\Pi$, we let $\bm{p}_{1}:=\arg\max_{\bm{p}\in\mathcal{P}}J_{\bm{\rho}}(\bm{\pi}_{1},\bm{p})$ and $\bm{p}_{2}:=\arg\max_{\bm{p}\in\mathcal{P}}J_{\bm{\rho}}(\bm{\pi}_{2},\bm{p})$, then
\begin{align*}
    \Phi(\bm{\pi}_{1})- \Phi(\bm{\pi}_{2})&=\max_{\bm{p}\in\mathcal{P}}J_{\bm{\rho}}(\bm{\pi}_{1},\bm{p}) - \max_{\bm{p}\in\mathcal{P}}J_{\bm{\rho}}(\bm{\pi}_{2},\bm{p})\\
    &= J_{\bm{\rho}}(\bm{\pi}_{1},\bm{p}_{1})-J_{\bm{\rho}}(\bm{\pi}_{2},\bm{p}_{2})\\
    &\leq
    J_{\bm{\rho}}(\bm{\pi}_{1},\bm{p}_{1})-J_{\bm{\rho}}(\bm{\pi}_{2},\bm{p}_{1})\\
    &\leq L_{\bm{\pi}}\|\bm{\pi}_{1}-\bm{\pi}_{2}\|.
\end{align*}
2. Then, \citep[Lemma 3]{thekumparampil2019efficient}
shows that, $\Phi(\bm{\pi}) = \max_{\bm{p}\in\mathcal{P}}J_{\bm{\rho}}(\bm{\pi},\bm{p})$ is $\ell_{\bm{\pi}}$-weakly convex if $J_{\bm{\rho}}(\bm{\pi},\bm{p})$ is $\ell_{\bm{\pi}}$-smooth.
Combining the results of these two parts, this lemma is proved.
\end{proof}
The following lemma is helpful throughout in the convergence analysis of policy optimization.\\
\begin{lemma}(The performance difference lemma)\label{lem_sec3_3}
For any $\bm{\pi},\bm{\pi}'\in\Pi$, $\bm{p}\in\mathcal{P}$ and  $\bm{\rho}\in\Delta^{S}$, we have
\begin{equation}
   J_{\bm{\rho}}(\bm{\pi},\bm{p})-J_{\bm{\rho}}(\bm{\pi}',\bm{p}) =\frac{1}{1-\gamma} \sum_{s, a} d_{\bm{\rho}}^{\bm{\pi},\bm{p}}(s) \pi_{sa} \left(q^{\bm{\pi}',\bm{p}}_{sa} - v^{\bm{\pi}',\bm{p}}_{s}\right).
\end{equation}
\end{lemma}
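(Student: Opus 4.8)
\textbf{Proof proposal for the performance difference lemma (Lemma~\ref{lem_sec3_3}).}

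The plan is to unroll the value function $v^{\bm{\pi},\bm{p}}_s$ along trajectories generated by $\bm{\pi}$ under $\bm{p}$ while comparing against the value $v^{\bm{\pi}',\bm{p}}$ of the reference policy at each step. First I would fix $s_0 \sim \bm{\rho}$ and write $J_{\bm{\rho}}(\bm{\pi},\bm{p}) - J_{\bm{\rho}}(\bm{\pi}',\bm{p}) = \mathbb{E}_{\bm{\pi},\bm{p}}\!\left[\sum_{t=0}^\infty \gamma^t c_{s_t a_t s_{t+1}} \mid s_0 \sim \bm{\rho}\right] - J_{\bm{\rho}}(\bm{\pi}',\bm{p})$, and then insert a telescoping sum: since $\mathbb{E}_{\bm{\pi},\bm{p}}\!\left[\sum_t \gamma^t\left(v^{\bm{\pi}',\bm{p}}_{s_t} - v^{\bm{\pi}',\bm{p}}_{s_t}\right)\right]=0$, I can add and subtract $\sum_{t\ge 0}\gamma^t v^{\bm{\pi}',\bm{p}}_{s_t}$ inside the expectation so that each one-step term becomes $c_{s_t a_t s_{t+1}} + \gamma v^{\bm{\pi}',\bm{p}}_{s_{t+1}} - v^{\bm{\pi}',\bm{p}}_{s_t}$, where $s_{t+1}\sim \bm{p}_{s_t a_t}$. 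Taking the conditional expectation over $a_t \sim \bm{\pi}_{s_t}$ and $s_{t+1}\sim \bm{p}_{s_t a_t}$, the term $\mathbb{E}[c_{s_t a_t s_{t+1}} + \gamma v^{\bm{\pi}',\bm{p}}_{s_{t+1}} \mid s_t]$ equals $\sum_a \pi_{s_t a} q^{\bm{\pi}',\bm{p}}_{s_t a}$, so each summand reduces to $\sum_a \pi_{s_t a}\big(q^{\bm{\pi}',\bm{p}}_{s_t a} - v^{\bm{\pi}',\bm{p}}_{s_t}\big)$.

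Next I would exchange the (absolutely convergent, since costs lie in $[0,1]$ and $\gamma<1$) sum and expectation, and recognize that $\sum_{t=0}^\infty \gamma^t \Pr_{\bm{\pi},\bm{p}}[s_t = s \mid s_0 \sim \bm{\rho}] = \frac{1}{1-\gamma} d_{\bm{\rho}}^{\bm{\pi},\bm{p}}(s)$ by Definition~\ref{def:occu}. This converts $\sum_{t\ge 0}\gamma^t \mathbb{E}_{\bm{\pi},\bm{p}}\big[\sum_a \pi_{s_t a}(q^{\bm{\pi}',\bm{p}}_{s_t a} - v^{\bm{\pi}',\bm{p}}_{s_t})\big]$ into $\frac{1}{1-\gamma}\sum_{s,a} d_{\bm{\rho}}^{\bm{\pi},\bm{p}}(s)\,\pi_{sa}\,(q^{\bm{\pi}',\bm{p}}_{sa} - v^{\bm{\pi}',\bm{p}}_{s})$, which is exactly the claimed identity. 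A small bookkeeping check is that the leftover "boundary" terms of the telescoping match: the added term at $t=0$ contributes $-\mathbb{E}_{s_0\sim\bm{\rho}}[v^{\bm{\pi}',\bm{p}}_{s_0}] = -J_{\bm{\rho}}(\bm{\pi}',\bm{p})$, cancelling the second term on the left-hand side, and the $\gamma v^{\bm{\pi}',\bm{p}}_{s_{t+1}}$ pieces shift to reassemble the subtracted series with no residual because $\gamma^t \to 0$.

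The only mildly delicate point — and the step I would treat most carefully — is justifying the interchange of the infinite sum with the expectation and the rearrangement of the telescoping series; this is routine given Assumption~\ref{assump_1} (so $|q|,|v|\le \frac{1}{1-\gamma}$) and $\gamma\in(0,1)$, which give a dominating geometric series, but it should be stated. Everything else is algebraic. Alternatively, one can give a purely linear-algebraic proof: write $J_{\bm{\rho}}(\bm{\pi},\bm{p}) = \frac{1}{1-\gamma}\langle \bm{d}_{\bm{\rho}}^{\bm{\pi},\bm{p}}, \bm{r}^{\bm{\pi},\bm{p}}\rangle$ for the expected one-step cost vector $\bm{r}^{\bm{\pi},\bm{p}}$, use $\bm{v}^{\bm{\pi}',\bm{p}} = \bm{r}^{\bm{\pi}',\bm{p}} + \gamma \bm{P}^{\bm{\pi}',\bm{p}} \bm{v}^{\bm{\pi}',\bm{p}}$ and the occupancy-measure identity $(1-\gamma)\bm{\rho}^\top + \gamma\, (\bm{d}_{\bm{\rho}}^{\bm{\pi},\bm{p}})^\top \bm{P}^{\bm{\pi},\bm{p}} = (\bm{d}_{\bm{\rho}}^{\bm{\pi},\bm{p}})^\top$, and expand $\bm{\rho}^\top \bm{v}^{\bm{\pi},\bm{p}} - \bm{\rho}^\top \bm{v}^{\bm{\pi}',\bm{p}}$ accordingly; I would present the trajectory-unrolling version since it is the most transparent and standard.
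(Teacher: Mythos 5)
Your proposal is correct and is essentially the same argument as the paper's: the paper inserts $\sum_{a}\pi_{sa}\sum_{s'}p_{sas'}(c_{sas'}+\gamma v^{\bm{\pi}',\bm{p}}_{s'})$ into $v^{\bm{\pi},\bm{p}}_{s}-v^{\bm{\pi}',\bm{p}}_{s}$ and unrolls the resulting Bellman recursion, which is exactly the state-space form of your trajectory telescoping of $\gamma^{t}v^{\bm{\pi}',\bm{p}}_{s_{t}}$; both then collapse the time sum via the occupancy measure. Your explicit remark on justifying the sum--expectation interchange (dominated by a geometric series under Assumption~\ref{assump_1}) is a point the paper glosses over, but otherwise the two proofs coincide.
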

Generally, the term $q^{\bm{\pi},\bm{p}}_{sa} - v^{\bm{\pi},\bm{p}}_{s}$ is defined as the~\emph{advantage function}.
\begin{proof}[Proof of \cref{lem_sec3_3}]
By the definition of $J_{\bm{\rho}}(\bm{\pi},\bm{p})$ in (\ref{prob_RMDP}), we have
\begin{equation*}
    J_{\bm{\rho}}(\bm{\pi},\bm{p})-J_{\bm{\rho}}(\bm{\pi}',\bm{p}) = \sum_{s}\rho_{s}\left( v^{\bm{\pi},\bm{p}}_{s}-v^{\bm{\pi}',\bm{p}}_{s}\right).
\end{equation*}
We introduce the \emph{advantage function}$A^{\bm{\pi},\bm{p}}_{sa}:=q^{\bm{\pi},\bm{p}}_{sa}- v^{\bm{\pi},\bm{p}}_{s}$ for convenience, and observe that, for any $s\in\mathcal{S}$,
\begin{align*} 
&v^{\bm{\pi},\bm{p}}_{s}-v^{\bm{\pi}',\bm{p}}_{s} \\
&= v^{\bm{\pi},\bm{p}}_{s}-  \sum_{a}\pi_{sa}\sum_{s'}p_{sas'}\left(c_{sas'}+\gamma v^{\bm{\pi}',\bm{p}}_{s'}\right)
+\sum_{a}\pi_{sa}\sum_{s'}p_{sas'}\left(c_{sas'}+\gamma v^{\bm{\pi}',\bm{p}}_{s'}\right)-
v^{\bm{\pi}',\bm{p}}_{s}\\
&=\sum_{a}\pi_{sa}\sum_{s'}p_{sas'}\left(c_{sas'}+\gamma v^{\bm{\pi},\bm{p}}_{s'}\right)-  \sum_{a}\pi_{sa}\sum_{s'}p_{sas'}\left(c_{sas'}+\gamma v^{\bm{\pi}',\bm{p}}_{s'}\right)\\
&\;\;\;\;+\sum_{a}\pi_{sa}\underbrace{\sum_{s'}p_{sas'}\left(c_{sas'}+\gamma v^{\bm{\pi}',\bm{p}}_{s'}\right)}_{q^{\bm{\pi}',\bm{p}}_{sa}}-
v^{\bm{\pi}',\bm{p}}_{s}\\
&= \gamma\sum_{a}\pi_{sa}\sum_{s'}p_{sas'}\left(v^{\bm{\pi},\bm{p}}_{s'}-v^{\bm{\pi}',\bm{p}}_{s'} \right) + \sum_{a}\pi_{sa} \left(q^{\bm{\pi}',\bm{p}}_{sa}- v^{\bm{\pi}',\bm{p}}_{s}\right)\\
&= \gamma\sum_{a}\pi_{sa}\sum_{s'}p_{sas'}\left(v^{\bm{\pi},\bm{p}}_{s'}-v^{\bm{\pi}',\bm{p}}_{s'} \right) + \sum_{a}\pi_{sa} A^{\bm{\pi}',\bm{p}}_{sa}\\
&\overset{(a)}{=} \gamma\sum_{s'} p^{\bm{\pi}}_{ss'}(1)\left(\gamma\sum_{s''}p^{\bm{\pi}}_{s's''}(1)\left(v^{\bm{\pi},\bm{p}}_{s''}-v^{\bm{\pi}',\bm{p}}_{s''} \right)+\sum_{a'}\pi_{s'a'} A^{\bm{\pi}',\bm{p}}_{s'a'} \right) + \sum_{a}\pi_{sa} A^{\bm{\pi}',\bm{p}}_{sa}\\
&= \sum_{a}\pi_{sa} A^{\bm{\pi}',\bm{p}}_{sa} + \gamma\sum_{s'} p^{\bm{\pi}}_{ss'}(1)\sum_{a'}\pi_{s'a'} A^{\bm{\pi}',\bm{p}}_{s'a'} + \gamma^{2}\sum_{s'} p^{\bm{\pi}}_{ss'}(2)\left(v^{\bm{\pi},\bm{p}}_{s'}-v^{\bm{\pi}',\bm{p}}_{s'} \right)\\
&= \cdots\\
&= \sum_{t=0}^{\infty}\gamma^{t}\sum_{s'}p^{\bm{\pi}}_{ss'}(t)\left( \sum_{a'}\pi_{s'a'} A^{\bm{\pi}',\bm{p}}_{s'a'}\right),
\end{align*}
where $p^{\bm{\pi}}_{ss'}(t)$ is defined in (\ref{inter_no_1}), and $(a)$ uses the recursion. We then obtain 
\begin{align*}
    J_{\bm{\rho}}(\bm{\pi},\bm{p})-J_{\bm{\rho}}(\bm{\pi}',\bm{p}) &= \sum_{s}\rho_{s}\left( v^{\bm{\pi},\bm{p}}_{s}-v^{\bm{\pi}',\bm{p}}_{s}\right)\\
    &= \sum_{s}\rho_{s}\sum_{t=0}^{\infty}\gamma^{t}\sum_{s'}p^{\bm{\pi}}_{ss'}(t)\left( \sum_{a'}\pi_{s'a'} A^{\bm{\pi}',\bm{p}}_{s'a'}\right)\\
    &= \sum_{s'}\left(\sum_{s}\sum_{t=0}^{\infty}\gamma^{t}\rho_{s}p^{\bm{\pi}}_{ss'}(t)\right)\left( \sum_{a'}\pi_{s'a'} A^{\bm{\pi}',\bm{p}}_{s'a'}\right)\\
    &= \frac{1}{1-\gamma} \sum_{s, a} d_{\bm{\rho}}^{\bm{\pi},\bm{p}}(s) \pi_{sa} A^{\bm{\pi}',\bm{p}}_{sa}.
\end{align*}
The last equality is obtained by the definition of state occupancy measure (See Definition~\ref{def:occu}).
\end{proof}

Then, we introduce the gradient dominance condition for non-RMDPs proposed in~\citep{agarwal2021theory}, which will be used in the proof of Theorem~\ref{the_sec3_1_GD}.
\begin{lemma}[Gradient dominance]\label{lem_sec3_4}
For any $\bm{p}\in\mathcal{P}$ and  $\bm{\rho}\in\Delta^{S}$, we have
\begin{equation}
    J_{\bm{\rho}}(\bm{\pi},\bm{p})-J_{\bm{\rho}}(\bm{\pi}^{\star},\bm{p}) \leq \frac{D}{1-\gamma} \max _{\bar{\bm{\pi}}\in\Pi}(\bm{\pi}-\bar{\bm{\pi}})^{\top} \nabla_{\bm{\pi}} J_{\bm{\rho}}(\bm{\pi},\bm{p}),
\end{equation}
where $\bm{\pi}^{\star}$ is one of optimal policies over $\bm{p}$, \ie, $\bm{\pi}^{\star}\in \arg\min_{\bm{\pi}\in\Pi}J_{\bm{\rho}}(\bm{\pi},\bm{p})$.
\end{lemma}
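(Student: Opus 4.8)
The plan is to reproduce, in the cost-minimization convention, the classical gradient-dominance argument of \citet{agarwal2021theory}, combining the performance difference lemma (Lemma~\ref{lem_sec3_3}) with the closed-form policy gradient~\eqref{eq:sec3_lem3.1}. First I would invoke Lemma~\ref{lem_sec3_3} with the two policies taken in the order $(\bm{\pi}^{\star},\bm{\pi})$ and rewrite the advantage terms using $v^{\bm{\pi},\bm{p}}_{s}=\sum_{a}\pi^{\star}_{sa}v^{\bm{\pi},\bm{p}}_{s}$, which gives
\[
J_{\bm{\rho}}(\bm{\pi},\bm{p})-J_{\bm{\rho}}(\bm{\pi}^{\star},\bm{p}) = \frac{1}{1-\gamma}\sum_{s} d_{\bm{\rho}}^{\bm{\pi}^{\star},\bm{p}}(s)\Bigl(v^{\bm{\pi},\bm{p}}_{s}-\sum_{a}\pi^{\star}_{sa}q^{\bm{\pi},\bm{p}}_{sa}\Bigr).
\]

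Next, since $\bm{\pi}^{\star}_{s}\in\Delta^{A}$ and $d_{\bm{\rho}}^{\bm{\pi}^{\star},\bm{p}}(s)\ge 0$, I would bound each bracket from above by $g_{s}:=\max_{\bm{\pi}'_{s}\in\Delta^{A}}\sum_{a}(\pi_{sa}-\pi'_{sa})q^{\bm{\pi},\bm{p}}_{sa}$, which is nonnegative (choose $\bm{\pi}'_{s}=\bm{\pi}_{s}$). The nonnegativity of every $g_{s}$ is exactly what permits a change of occupancy measure in the correct direction: using $d_{\bm{\rho}}^{\bm{\pi},\bm{p}}(s)\ge(1-\gamma)\rho_{s}>0$ (keeping only the $t=0$ term in Definition~\ref{def:occu}, with $\min_{s}\rho_{s}>0$), one gets $\sum_{s}d_{\bm{\rho}}^{\bm{\pi}^{\star},\bm{p}}(s)g_{s}\le \bigl\|\bm{d}_{\bm{\rho}}^{\bm{\pi}^{\star},\bm{p}}/\bm{d}_{\bm{\rho}}^{\bm{\pi},\bm{p}}\bigr\|_{\infty}\sum_{s}d_{\bm{\rho}}^{\bm{\pi},\bm{p}}(s)g_{s}$, together with $\bigl\|\bm{d}_{\bm{\rho}}^{\bm{\pi}^{\star},\bm{p}}/\bm{d}_{\bm{\rho}}^{\bm{\pi},\bm{p}}\bigr\|_{\infty}\le\tfrac{1}{1-\gamma}\bigl\|\bm{d}_{\bm{\rho}}^{\bm{\pi}^{\star},\bm{p}}/\bm{\rho}\bigr\|_{\infty}\le\tfrac{D}{1-\gamma}$.

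Finally I would identify $\sum_{s}d_{\bm{\rho}}^{\bm{\pi},\bm{p}}(s)g_{s}$ with the right-hand side of the claim, up to a factor $1-\gamma$. Because $\Pi=(\Delta^{A})^{S}$ is a product set the per-state maxima decouple, so $\sum_{s}d_{\bm{\rho}}^{\bm{\pi},\bm{p}}(s)g_{s}=\max_{\bar{\bm{\pi}}\in\Pi}\sum_{s,a}d_{\bm{\rho}}^{\bm{\pi},\bm{p}}(s)(\pi_{sa}-\bar\pi_{sa})q^{\bm{\pi},\bm{p}}_{sa}$, and substituting $\partial J_{\bm{\rho}}/\partial\pi_{sa}=\tfrac{1}{1-\gamma}d_{\bm{\rho}}^{\bm{\pi},\bm{p}}(s)q^{\bm{\pi},\bm{p}}_{sa}$ from Lemma~\ref{lem_sec3_1} turns this into $(1-\gamma)\max_{\bar{\bm{\pi}}\in\Pi}(\bm{\pi}-\bar{\bm{\pi}})^{\top}\nabla_{\bm{\pi}}J_{\bm{\rho}}(\bm{\pi},\bm{p})$. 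Chaining the three estimates, the prefactors collapse to $\tfrac{1}{1-\gamma}\cdot\tfrac{D}{1-\gamma}\cdot(1-\gamma)=\tfrac{D}{1-\gamma}$, which is the stated bound. I expect the only delicate point to be the orientation of the change-of-measure step: it is valid only because each $g_{s}\ge 0$, which is why one must pass to the simplex-wide maximizer $g_{s}$ \emph{before}, not after, replacing $\bm{d}_{\bm{\rho}}^{\bm{\pi}^{\star},\bm{p}}$ by $\bm{d}_{\bm{\rho}}^{\bm{\pi},\bm{p}}$; everything else is routine bookkeeping.
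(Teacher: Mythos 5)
Your proposal is correct and follows essentially the same route as the paper's proof: the performance difference lemma applied to $(\bm{\pi}^{\star},\bm{\pi})$, an upper bound by a per-state maximum (you maximize over $\Delta^{A}$, the paper over actions via $\max_{\bar a}(-A^{\bm{\pi},\bm{p}}_{s\bar a})$, which coincide since the maximum of a linear function over the simplex is attained at a vertex), the change of occupancy measure justified by nonnegativity and $d^{\bm{\pi},\bm{p}}_{\bm{\rho}}(s)\geq(1-\gamma)\rho_{s}$, and the identification with $\nabla_{\bm{\pi}}J_{\bm{\rho}}$ via~\eqref{eq:sec3_lem3.1}. The bookkeeping of the $(1-\gamma)$ and $D$ factors matches the paper's.
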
 
\begin{proof}[Proof of \cref{lem_sec3_4}]
From the Lemma~\ref{lem_sec3_3}, we have
\begin{align*}
    J_{\bm{\rho}}(\bm{\pi}^{\star},\bm{p})-J_{\bm{\rho}}(\bm{\pi},\bm{p})&= \frac{1}{1-\gamma} \sum_{s, a} d_{\bm{\rho}}^{\bm{\pi}^{\star},\bm{p}}(s) \pi^{\star}_{sa} \left(q^{\bm{\pi},\bm{p}}_{sa} - v^{\bm{\pi},\bm{p}}_{s}\right) \\
    &= \frac{1}{1-\gamma} \sum_{s, a} d_{\bm{\rho}}^{\bm{\pi}^{\star},\bm{p}}(s) \pi^{\star}_{sa} A^{\bm{\pi},\bm{p}}_{sa}\\
    &\geq \frac{1}{1-\gamma} \sum_{s, a} d_{\bm{\rho}}^{\bm{\pi}^{\star},\bm{p}}(s) \pi^{\star}_{sa} \min_{\bar{a}}A^{\bm{\pi},\bm{p}}_{s\bar{a}}\\
    &= \frac{1}{1-\gamma} \sum_{s} d_{\bm{\rho}}^{\bm{\pi}^{\star},\bm{p}}(s)  \min_{\bar{a}}A^{\bm{\pi},\bm{p}}_{s\bar{a}}.\\
\end{align*}
Then, we multiply $-1$ on both sides
\begin{align}\label{eq:lem3.4_1}
    0\leq J_{\bm{\rho}}(\bm{\pi},\bm{p})-J_{\bm{\rho}}(\bm{\pi}^{\star},\bm{p}) &\leq \frac{1}{1-\gamma} \sum_{s} d_{\bm{\rho}}^{\bm{\pi}^{\star},\bm{p}}(s)  -(\min_{\bar{a}}A^{\bm{\pi},\bm{p}}_{s\bar{a}})\notag\\
    &=\frac{1}{1-\gamma} \sum_{s} d_{\bm{\rho}}^{\bm{\pi}^{\star},\bm{p}}(s)  \max_{\bar{a}}(-A^{\bm{\pi},\bm{p}}_{s\bar{a}})\notag\\
    &=\frac{1}{1-\gamma} \sum_{s}
    \frac{d_{\bm{\rho}}^{\bm{\pi}^{\star},\bm{p}}(s)}{d_{\bm{\rho}}^{\bm{\pi},\bm{p}}(s)} d_{\bm{\rho}}^{\bm{\pi},\bm{p}}(s) \max_{\bar{a}}(-A^{\bm{\pi},\bm{p}}_{s\bar{a}})\notag\\
    &\leq \frac{1}{1-\gamma}
    \left(\max_{s}\frac{d_{\bm{\rho}}^{\bm{\pi}^{\star},\bm{p}}(s)}{d_{\bm{\rho}}^{\bm{\pi},\bm{p}}(s)}\right) \sum_{s}d_{\bm{\rho}}^{\bm{\pi},\bm{p}}(s) \max_{\bar{a}}(-A^{\bm{\pi},\bm{p}}_{s\bar{a}})\notag\\
    &= \left\|\frac{d_{\bm{\rho}}^{\bm{\pi}^{\star},\bm{p}}}{d_{\bm{\rho}}^{\bm{\pi},\bm{p}}}\right\|_{\infty} \frac{1}{1-\gamma}\sum_{s}d_{\bm{\rho}}^{\bm{\pi},\bm{p}}(s) \max_{\bar{a}}(-A^{\bm{\pi},\bm{p}}_{s\bar{a}})\notag\\
    &\leq \frac{D}{1-\gamma} \left(\frac{1}{1-\gamma}\sum_{s}d_{\bm{\rho}}^{\bm{\pi},\bm{p}}(s) \max_{\bar{a}}(-A^{\bm{\pi},\bm{p}}_{s\bar{a}})\right)
\end{align}
The last inequality (\ref{eq:lem3.4_1}) is due to the fact~\citep{kakade2002approximately}
\begin{equation*}
\left\|\frac{d_{\bm{\rho}}^{\bm{\pi}^{\star},\bm{p}}}{d_{\bm{\rho}}^{\bm{\pi},\bm{p}}}\right\|_{\infty}
\leq\frac{1}{1-\gamma}\left\|\frac{d_{\bm{\rho}}^{\bm{\pi}^{\star},\bm{p}}}{\bm{\rho}}\right\|_{\infty}
\leq\frac{D}{1-\gamma}.
\end{equation*}
Notice that, the term in (\ref{eq:lem3.4_1}) is equivalent to
\begin{align*}
    \frac{1}{1-\gamma}
    \sum_{s}d_{\bm{\rho}}^{\bm{\pi},\bm{p}}(s) \max_{\bar{a}}(-A^{\bm{\pi},\bm{p}}_{s\bar{a}}) &= \max_{\bar{\bm{\pi}}\in\Pi} \frac{1}{1-\gamma}
    \sum_{s,a}d_{\bm{\rho}}^{\bm{\pi},\bm{p}}(s)\bar{\pi}_{sa} (-A^{\bm{\pi},\bm{p}}_{sa})\\
    &= \max_{\bar{\bm{\pi}}\in\Pi} \frac{1}{1-\gamma}
    \sum_{s,a}d_{\bm{\rho}}^{\bm{\pi},\bm{p}}(s)(\bar{\pi}_{sa}-\pi_{sa}) (-A^{\bm{\pi},\bm{p}}_{sa})\\
    &= \max_{\bar{\bm{\pi}}\in\Pi} \frac{1}{1-\gamma}
    \sum_{s,a}d_{\bm{\rho}}^{\bm{\pi},\bm{p}}(s)(\pi_{sa}-\bar{\pi}_{sa}) q^{\bm{\pi},\bm{p}}_{sa}\\
    &= \max _{\bar{\bm{\pi}}\in\Pi}(\bm{\pi}-\bar{\bm{\pi}})^{\top} \nabla_{\bm{\pi}} J_{\bm{\rho}}(\bm{\pi},\bm{p}).
\end{align*}
The first equality holds since the optimal $\bar{\bm{\pi}}$ is a deterministic policy, \ie, for some $\bar{a}\in\mathcal{A}$, $\bar{\pi}_{s\bar{a}}=1$. The second step is supported by the property $\sum_{a}\pi_{sa}A^{\bm{\pi},\bm{p}}_{sa}=0$. The third step follows as $\sum_{a}(\pi_{sa}-\bar{\pi}_{sa})v_{s}^{\bm{\pi},\bm{p}}=0$ and the last equation is obtained from Lemma~\ref{lem_sec3_1}. Thus, we obtain that
\begin{equation*}
    J_{\bm{\rho}}(\bm{\pi},\bm{p})-J_{\bm{\rho}}(\bm{\pi}^{\star},\bm{p}) \leq \frac{D}{1-\gamma} \max _{\bar{\bm{\pi}}\in\Pi}(\bm{\pi}-\bar{\bm{\pi}})^{\top} \nabla_{\bm{\pi}} J_{\bm{\rho}}(\bm{\pi},\bm{p}) .
\end{equation*}
\end{proof}

Before providing the proof of \cref{the_sec3_1}, we introduce the below intermediate results which are helpful to our proof. We first introduce a common property for strongly convex functions.
\begin{lemma}\label{lem:appB1}
Let $h:\mathcal{X}\rightarrow\mathbb{R}$ be a $\ell$-strongly convex function. Then for any $\bm{x},\bm{y}\in\mathcal{X}$, $h(\bm{x})$, we have
\begin{equation}\label{eq1:lem:appB1}
    h(\bm{y})-h(\bm{x}) \leq \nabla h(\bm{y})^{\top}(\bm{y}-\bm{x})-\frac{\ell}{2}\|\bm{x}-\bm{y}\|^{2}.
\end{equation}
Moreover, by taking $\bm{y}=\bm{x}^\star:=\arg\min_{\bm{x}\in\mathcal{X}}h(\bm{x})$ as the minimum point of $h(\bm{x})$, we get 
\begin{equation}\label{eq2:lem:appB1}
    h(\bm{x})-\min_{\bm{x}\in\mathcal{X}}h(\bm{x}) \geq \frac{\ell}{2}\|\bm{x}-\bm{x}^\star\|^{2}.
\end{equation}
\end{lemma}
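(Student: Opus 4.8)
The plan is to reduce both inequalities to the standard first-order (gradient) characterization of strong convexity. Recall that $h$ being $\ell$-strongly convex on $\mathcal{X}$ is equivalent to $\bm{x}\mapsto h(\bm{x})-\tfrac{\ell}{2}\|\bm{x}\|^{2}$ being convex; since $h$ is differentiable here (the statement invokes $\nabla h$), this is in turn equivalent to
\[
h(\bm{x}) \;\geq\; h(\bm{y}) + \langle \nabla h(\bm{y}),\, \bm{x}-\bm{y}\rangle + \frac{\ell}{2}\|\bm{x}-\bm{y}\|^{2}
\qquad \forall\, \bm{x},\bm{y}\in\mathcal{X}.
\]
First I would establish this gradient inequality explicitly: setting $g(\bm{x}):=h(\bm{x})-\tfrac{\ell}{2}\|\bm{x}\|^{2}$, convexity of $g$ gives $g(\bm{x})\geq g(\bm{y})+\langle\nabla g(\bm{y}),\bm{x}-\bm{y}\rangle$, and substituting $\nabla g(\bm{y})=\nabla h(\bm{y})-\ell\bm{y}$ together with the elementary identity $\|\bm{x}\|^{2}-\|\bm{y}\|^{2}-2\langle\bm{y},\bm{x}-\bm{y}\rangle=\|\bm{x}-\bm{y}\|^{2}$ reproduces the displayed bound. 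Transposing $h(\bm{x})$ to the left-hand side and the inner-product term to the right is then exactly \eqref{eq1:lem:appB1}.

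For the second claim, I would substitute $\bm{y}=\bm{x}^{\star}:=\arg\min_{\bm{x}\in\mathcal{X}}h(\bm{x})$ into \eqref{eq1:lem:appB1}, which yields
\[
h(\bm{x}^{\star}) - h(\bm{x}) \;\leq\; \langle \nabla h(\bm{x}^{\star}),\, \bm{x}^{\star}-\bm{x}\rangle \;-\; \frac{\ell}{2}\|\bm{x}-\bm{x}^{\star}\|^{2}.
\]
The only point needing care is the inner-product term: if $\mathcal{X}=\mathbb{R}^{n}$ then $\nabla h(\bm{x}^{\star})=0$ and it vanishes, while for a general convex feasible set $\mathcal{X}$ the first-order optimality condition at the minimizer reads $\langle\nabla h(\bm{x}^{\star}),\bm{x}-\bm{x}^{\star}\rangle\geq 0$ for all $\bm{x}\in\mathcal{X}$, hence $\langle\nabla h(\bm{x}^{\star}),\bm{x}^{\star}-\bm{x}\rangle\leq 0$. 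In either case the term is non-positive, so $h(\bm{x}^{\star})-h(\bm{x})\leq-\tfrac{\ell}{2}\|\bm{x}-\bm{x}^{\star}\|^{2}$; since $\min_{\bm{x}\in\mathcal{X}}h(\bm{x})=h(\bm{x}^{\star})$, rearranging gives \eqref{eq2:lem:appB1}.

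There is essentially no substantive obstacle — this is a textbook fact — so the ``hard part'' is purely bookkeeping: pinning down which form of the definition of $\ell$-strong convexity is being used (and, if it is the ``$h-\tfrac{\ell}{2}\|\cdot\|^{2}$ convex'' form, deriving the gradient inequality as above), and remembering to invoke the variational inequality rather than $\nabla h(\bm{x}^{\star})=0$ when $\mathcal{X}$ is a proper subset of $\mathbb{R}^{n}$. Both \eqref{eq1:lem:appB1} and \eqref{eq2:lem:appB1} then follow by elementary rearrangement.
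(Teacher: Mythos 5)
Your proposal is correct and follows essentially the same route as the paper: the paper likewise takes the gradient inequality \eqref{eq1:lem:appB1} as the standard first-order characterization of strong convexity and obtains \eqref{eq2:lem:appB1} by substituting $\bm{y}=\bm{x}^{\star}$ and invoking the variational inequality $\nabla h(\bm{x}^{\star})^{\top}(\bm{x}-\bm{x}^{\star})\geq 0$. You simply supply more detail (deriving the gradient inequality from the ``$h-\tfrac{\ell}{2}\|\cdot\|^{2}$ convex'' form) where the paper asserts it as a basic fact.
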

\begin{proof}[Proof of \cref{lem:appB1}]
The inequality (\ref{eq1:lem:appB1}) is a basic property that the strongly convex function hold, whereas the second inequality is obtained by the first-order optimality condition for the convex optimization problem, \ie, $\nabla h(\bm{x}^\star)^{\top}(\bm{x}-\bm{x}^\star)\geq0$.
\end{proof}
We also need to introduce the following Danskin's Theorem, which helps prove our global convergence theorem. 
\begin{proposition}\citep[Proposition B.25]{Bertsekas2016}\label{prop1}
Let $\mathcal{Z}\subseteq\mathbb{R}^{m}$ be a compact set, and let $\phi:\mathbb{R}^{n}\times \mathcal{Z}\rightarrow\mathbb{R}$ be continuous function and such that $\phi(\cdot,z):\mathbb{R}^{n}\rightarrow\mathbb{R}$ is convex for each $z\in\mathcal{Z}$. If $\phi(\cdot,z)$ is differentiable for all $z\in\mathcal{Z}$ and $\nabla \phi(\bm{x},\cdot)$ is continuous on $\mathcal{Z}$ for each $\bm{x}$, then for $f(\bm{x}):=\max_{z\in\mathcal{Z}}\phi(\bm{x},z)$ and any $\bm{x}\in\mathbb{R}^{n}$,
\begin{equation*}
\partial f(\bm{x}) = \operatorname{conv}\left\{\nabla_{x}\phi(\bm{x},z) \mid z \in \mathop{\arg\max}\limits_{z\in\mathcal{Z}}\phi(\bm{x},z)\right\}.
\end{equation*}
\end{proposition}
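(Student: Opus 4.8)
The plan is to prove the two inclusions separately, exploiting that $f$ is a pointwise maximum of convex functions (hence convex, so its Fr\'echet subdifferential of Definition~\ref{def:F-subdiff} agrees with the ordinary convex subdifferential) together with compactness of $\mathcal{Z}$ and joint continuity of $(\bm{x},z)\mapsto\nabla_{x}\phi(\bm{x},z)$. Write $Z^{\star}(\bm{x}):=\arg\max_{z\in\mathcal{Z}}\phi(\bm{x},z)$ and $C(\bm{x}):=\operatorname{conv}\{\nabla_{x}\phi(\bm{x},z)\mid z\in Z^{\star}(\bm{x})\}$. First I would record the structural facts: since $\phi(\bm{x},\cdot)$ is continuous on the compact set $\mathcal{Z}$, the maximum defining $f$ is attained, $f$ is finite and convex, and $Z^{\star}(\bm{x})$ is nonempty and compact; consequently $\{\nabla_{x}\phi(\bm{x},z):z\in Z^{\star}(\bm{x})\}$ is compact as a continuous image of a compact set, and its convex hull $C(\bm{x})$ is a nonempty compact convex subset of $\mathbb{R}^{n}$ (Carath\'eodory).

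The inclusion $C(\bm{x})\subseteq\partial f(\bm{x})$ is immediate. For any $z^{\star}\in Z^{\star}(\bm{x})$ and any $\bm{y}\in\mathbb{R}^{n}$, convexity and differentiability of $\phi(\cdot,z^{\star})$ give $\phi(\bm{y},z^{\star})\geq\phi(\bm{x},z^{\star})+\langle\nabla_{x}\phi(\bm{x},z^{\star}),\bm{y}-\bm{x}\rangle$; since $f(\bm{y})\geq\phi(\bm{y},z^{\star})$ and $f(\bm{x})=\phi(\bm{x},z^{\star})$, this yields $f(\bm{y})\geq f(\bm{x})+\langle\nabla_{x}\phi(\bm{x},z^{\star}),\bm{y}-\bm{x}\rangle$, i.e. $\nabla_{x}\phi(\bm{x},z^{\star})\in\partial f(\bm{x})$. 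Since $\partial f(\bm{x})$ is convex, it contains $C(\bm{x})$.

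For the reverse inclusion I would route through the directional derivative $f'(\bm{x};\bm{d}):=\lim_{t\downarrow0}t^{-1}(f(\bm{x}+t\bm{d})-f(\bm{x}))$, which exists because the difference quotient is nondecreasing in $t>0$ by convexity, and prove the identity $f'(\bm{x};\bm{d})=\max_{z\in Z^{\star}(\bm{x})}\langle\nabla_{x}\phi(\bm{x},z),\bm{d}\rangle$. The $\geq$ direction is the one-line convexity estimate of the previous paragraph applied at $\bm{x}+t\bm{d}$. For $\leq$, take any $t_{k}\downarrow0$, pick $z_{k}\in Z^{\star}(\bm{x}+t_{k}\bm{d})$, and extract a convergent subsequence $z_{k}\to\bar z$ by compactness of $\mathcal{Z}$; continuity of $\phi$ forces $\bar z\in Z^{\star}(\bm{x})$, and the chain $f(\bm{x}+t_{k}\bm{d})-f(\bm{x})\leq\phi(\bm{x}+t_{k}\bm{d},z_{k})-\phi(\bm{x},z_{k})$ combined with the mean value theorem and continuity of $\nabla_{x}\phi$ shows the difference quotient tends to $\langle\nabla_{x}\phi(\bm{x},\bar z),\bm{d}\rangle\leq\max_{z\in Z^{\star}(\bm{x})}\langle\nabla_{x}\phi(\bm{x},z),\bm{d}\rangle$. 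Finally, for the finite convex function $f$ one has $\partial f(\bm{x})=\{\bm{g}:\langle\bm{g},\bm{d}\rangle\leq f'(\bm{x};\bm{d})\ \forall\bm{d}\}$, so $f'(\bm{x};\cdot)$ is the support function of $\partial f(\bm{x})$; the right-hand side of the identity just established is exactly the support function of $C(\bm{x})$ (the support function of a compact set equals that of its convex hull). Two nonempty closed convex sets with equal support functions coincide, hence $\partial f(\bm{x})=C(\bm{x})$.

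The main obstacle is the $\leq$ half of the directional-derivative identity: that is the only place where compactness of $\mathcal{Z}$ and joint continuity of $\nabla_{x}\phi$ are genuinely used, and it requires the subsequence-extraction-plus-mean-value argument rather than a single convexity manipulation; convexity of $f$, the easy inclusion, and the support-function wrap-up are routine. Since this statement is precisely \citep[Proposition~B.25]{Bertsekas2016}, one may of course simply invoke it and omit the argument above.
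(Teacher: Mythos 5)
The paper gives no proof of this proposition at all---it is imported verbatim as \citep[Proposition B.25]{Bertsekas2016}---so there is nothing internal to compare against; your reconstruction is the standard textbook argument for Danskin's theorem and its architecture (easy inclusion via the subgradient inequality, then the directional-derivative identity $f'(\bm{x};\bm{d})=\max_{z\in Z^{\star}(\bm{x})}\langle\nabla_{x}\phi(\bm{x},z),\bm{d}\rangle$, then matching support functions of compact convex sets) is sound.

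The one step that needs more care is the limit passage in the $\leq$ half. After writing $\phi(\bm{x}+t_{k}\bm{d},z_{k})-\phi(\bm{x},z_{k})=t_{k}\langle\nabla_{x}\phi(\bm{x}+\theta_{k}t_{k}\bm{d},z_{k}),\bm{d}\rangle$ via the mean value theorem, you invoke ``joint continuity of $\nabla_{x}\phi$'' to send this to $\langle\nabla_{x}\phi(\bm{x},\bar z),\bm{d}\rangle$. But the hypothesis of the proposition only grants continuity of $\nabla_{x}\phi(\bm{x},\cdot)$ in $z$ for each \emph{fixed} $\bm{x}$, together with joint continuity of $\phi$ itself; joint continuity of the gradient is not assumed. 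There are two standard repairs: (i) avoid the mean value theorem entirely by using monotonicity of convex difference quotients, bounding $t_{k}^{-1}\bigl(\phi(\bm{x}+t_{k}\bm{d},z_{k})-\phi(\bm{x},z_{k})\bigr)$ above by $t^{-1}\bigl(\phi(\bm{x}+t\bm{d},z_{k})-\phi(\bm{x},z_{k})\bigr)$ for any fixed $t\geq t_{k}$, letting $k\to\infty$ using joint continuity of $\phi$, and only then sending $t\downarrow0$ using differentiability of $\phi(\cdot,\bar z)$ at $\bm{x}$; or (ii) note that the convex differentiable functions $\phi(\cdot,z_{k})$ converge pointwise to the convex differentiable $\phi(\cdot,\bar z)$, so their gradients converge locally uniformly (Rockafellar, \emph{Convex Analysis}, Theorem~25.7), which supplies exactly the sequential joint continuity your argument uses. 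With either fix the proof is complete; everything else (the easy inclusion, the compactness of $C(\bm{x})$ via Carath\'eodory, and the support-function identification of $\partial f(\bm{x})$ with $C(\bm{x})$) is correct as written.
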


Notice that, Lemma~\ref{lem_sec3_1} successfully proves that $J_{\bm{\rho}}(\bm{\pi},\bm{p})$ is $\ell_{\bm{\pi}}$-smooth and $L_{\bm{\pi}}$-Lipschitz in $\bm{\pi}$. We want to emphasize that, these results also leads to the fact that $J_{\bm{\rho}}(\bm{\pi},\bm{p})$ is $\ell_{\bm{\pi}}$-weakly convex in $\bm{\pi}$ by applying the Lemma~\ref{lem_sec2_1}. 

Now, we are ready to prove Theorem~\ref{the_sec3_1_GD} and Theorem~\ref{the_sec3_1}.
\begin{proof}[Proof of \cref{the_sec3_1_GD}]
Since $J_{\bm{\rho}}(\bm{\pi},\bm{p})$ is non-concave in $\bm{p}$ and the ambiguity set $\mathcal{P}$ is only assumed as a compact set, there may exists multiple inner maxima. In particular, we denote $\bm{p}^{(k)}$ as the $k$-th element of the set $\arg\max_{\bm{p}\in\mathcal{P}}J_{\bm{\rho}}(\bm{\pi},\bm{p})$ for fixed policy $\bm{\pi}\in\Pi$. 
Then, we apply Lemma~\ref{lem_sec3_4} to obtain
\begin{align}\label{eq:part2_1}
    \Phi(\bm{\pi}) - \Phi(\bm{\pi}^{\star}) &= J(\bm{\pi},\bm{p}^{(k)}) - J(\bm{\pi}^{\star},\bm{p}^{\star}) \notag\\
    &= J(\bm{\pi},\bm{p}^{(k)}) - \min_{\bm{\pi}\in\Pi}\max_{\bm{p}\in\mathcal{P}}J(\bm{\pi},\bm{p}) \notag\\
    &\leq J(\bm{\pi},\bm{p}^{(k)}) - \min_{\bm{\pi}\in\Pi}J(\bm{\pi},\bm{p}^{(k)})\notag\\
    &\leq \frac{D}{1-\gamma} \max _{\bar{\bm{\pi}}}(\bm{\pi}-\bar{\bm{\pi}})^{\top} \nabla_{\bm{\pi}} J(\bm{\pi},\bm{p}^{(k)}).
\end{align}
As we mentioned before this proof that, $J_{\bm{\rho}}(\bm{\pi},\bm{p})$ is $\ell_{\bm{\pi}}$-weakly convex in $\bm{\pi}$, it implies that $\tilde{J}_{\bm{\rho}}(\bm{\pi},\bm{p}):=J_{\bm{\rho}}(\bm{\pi},\bm{p}) + \frac{\ell_{\bm{\pi}}}{2}\|\bm{\pi}\|^{2}$ is convex in $\bm{\pi}$ and $\nabla_{\bm{\pi}}\tilde{J}_{\bm{\rho}}(\bm{\pi},\bm{p}) = \nabla_{\bm{\pi}}J_{\bm{\rho}}(\bm{\pi},\bm{p})+\ell_{\bm{\pi}}\bm{\pi}$, referring to~\citep[Corollary 1.12.2]{kruger2003frechet}. Let $\tilde{\Phi}(\bm{\pi}):=\max_{\bm{p}\in\mathcal{P}}\tilde{J}_{\bm{\rho}}(\bm{\pi},\bm{p})$. Due to the convexity of $\tilde{J}_{\bm{\rho}}(\bm{\pi},\bm{p})$ and the compactness of $\mathcal{P}$, we can apply Proposition~\ref{prop1} to attain
\begin{align}\label{eq:Danskin}
    \partial \tilde{\Phi}(\bm{\pi})&=\operatorname{conv}\left\{\nabla_{\bm{\pi}}\tilde{J}_{\bm{\rho}}(\bm{\pi},\bm{p}) \mid \bm{p} \in \mathop{\arg\max}\limits_{\bm{p}\in\mathcal{P}}\tilde{J}_{\bm{\rho}}(\bm{\pi},\bm{p})\right\}\notag\\
    \Longrightarrow \partial \Phi(\bm{\pi})+\ell_{\bm{\pi}}\bm{\pi}&=\operatorname{conv}\left\{\nabla_{\bm{\pi}}J_{\bm{\rho}}(\bm{\pi},\bm{p})+\ell_{\bm{\pi}}\bm{\pi} \mid \bm{p} \in \mathop{\arg\max}\limits_{\bm{p}\in\mathcal{P}}J_{\bm{\rho}}(\bm{\pi},\bm{p})\right\}\notag\\
   \Longrightarrow \partial \Phi(\bm{\pi})&=\operatorname{conv}\left\{\nabla_{\bm{\pi}}J_{\bm{\rho}}(\bm{\pi},\bm{p}) \mid \bm{p} \in \mathop{\arg\max}\limits_{\bm{p}\in\mathcal{P}}J_{\bm{\rho}}(\bm{\pi},\bm{p})\right\}.
\end{align}
Assume the set $\arg\max_{\bm{p}\in\mathcal{P}}J_{\bm{\rho}}(\bm{\pi},\bm{p})$ contains $N$ finite components, then, Proposition~\ref{prop1} implies that, for any $\bm{\pi}\in\Pi$, there exists a sequence $\{\beta_{k}\}^{N}_{k=1}$ with $\sum_{k}\beta_{k}=1$ such that for any sub-gradient $\bm{\xi}\in\partial \Phi(\bm{\pi})$, it can be represented by a convex combination, \ie,
\begin{equation*}
    \bm{\xi} = \sum_{k=1}^{N} \beta_{k} \nabla_{\bm{\pi}}J_{\bm{\rho}}(\bm{\pi},\bm{p}^{(k)})\quad\bm{p}^{(k)}\in \mathop{\arg\max}\limits_{\bm{p}\in\mathcal{P}}J_{\bm{\rho}}(\bm{\pi},\bm{p}), \;k=1,2,\cdots,N
\end{equation*}
Let us define $\tilde{\bm{\pi}} = \arg\min_{\hat{\bm{\pi}}\in\Pi} \Phi(\hat{\bm{\pi}})+\ell_{\bm{\pi}}\|\bm{\pi}-\hat{\bm{\pi}}\|^{2}$ and Lemma~\ref{lem_sec2_3} implies that there exists $\bar{\bm{\xi}}\in\partial \Phi(\tilde{\bm{\pi}})$ such that it satisfies $-\bar{\bm{\xi}} \subseteq \mathcal{N}_{\mathcal{X}}(\tilde{\bm{\pi}}) + 2\ell_{\bm{\pi}}\left\|\tilde{\bm{\pi}}-\bm{\pi}\right\|\cdot\mathcal{B}(1)$. Then by assuming $\arg\max_{\bm{p}\in\mathcal{P}}J_{\bm{\rho}}(\tilde{\bm{\pi}},\bm{p})$ contains $\bar{N}$ finite components, there exists a specific sequence $\{\bar{\beta}_{k}\}_{k=1}^{\bar{N}}$ with $\sum_{k}\bar{\beta}_{k}=1$ such that
\begin{equation}\label{def_part2}
    \bar{\bm{\xi}} = \sum_{k}^{\bar{N}} \bar{\beta}_{k}\nabla_{\bm{\pi}} J_{\bm{\rho}}(\tilde{\bm{\pi}},\tilde{\bm{p}}^{(k)}), \quad\tilde{\bm{p}}^{(k)}\in \mathop{\arg\max}\limits_{\bm{p}\in\mathcal{P}}J_{\bm{\rho}}(\tilde{\bm{\pi}},\bm{p}), \;k=1,2,\cdots,\bar{N}
\end{equation}
Then, we have
\begin{align}\label{eq:part2_2}
    \Phi(\tilde{\bm{\pi}}) - \Phi(\bm{\pi}^{\star})&=\sum_{k=1}^{\bar{N}}\bar{\beta}_{k}\left(\Phi(\tilde{\bm{\pi}}) - \Phi(\bm{\pi}^{\star}) \right)\notag\\
    &\leq \frac{D}{1-\gamma} \sum_{k=1}^{\bar{N}}\bar{\beta}_{k}\left(\max _{\bar{\bm{\pi}}\in\Pi}(\tilde{\bm{\pi}}-\bar{\bm{\pi}})^{\top} \nabla_{\bm{\pi}} J(\tilde{\bm{\pi}},\tilde{\bm{p}}^{(k)})\right)\notag\\
    &\leq \frac{D}{1-\gamma} \sum_{k=1}^{\bar{N}}\bar{\beta}_{k}\langle\max _{\bar{\bm{\pi}}\in\Pi}(\bar{\bm{\pi}} - \tilde{\bm{\pi}}), -\nabla_{\bm{\pi}} J(\tilde{\bm{\pi}},\tilde{\bm{p}}^{(k)})\rangle\notag\\
    &\leq \frac{D}{1-\gamma} \sum_{k=1}^{\bar{N}}\bar{\beta}_{k}\langle(\bar{\bm{\pi}}_{k}-\tilde{\bm{\pi}}), -\nabla_{\bm{\pi}} J(\tilde{\bm{\pi}},\tilde{\bm{p}}^{(k)})\rangle,
\end{align}
where $\bar{\bm{\pi}}_{k}:=\arg\max _{\bar{\bm{\pi}}\in\Pi}\langle(\bar{\bm{\pi}}-\tilde{\bm{\pi}}), -\nabla_{\bm{\pi}} J(\tilde{\bm{\pi}},\tilde{\bm{p}}^{(k)})\rangle$, and the second step is obtained by using (\ref{eq:part2_1}).
Since the cost function is bounded, \ie, $0\leq c_{sas'}\leq1$ for any $(s,a,s')\in\mathcal{S}\times\mathcal{A}\times\mathcal{S}$, it implies 
the action value function $q^{\bm{\pi},\bm{p}}_{s,a}$ and the partial gradient $\nabla_{\bm{\pi}} J(\bm{\pi},\bm{p})$ are non-negative. Since $\Phi(\tilde{\bm{\pi}}) - \Phi(\bm{\pi}^{\star})$ and the partial gradient $\nabla_{\bm{\pi}} J(\bm{\pi},\bm{p})$ are both non-negative, we can denote the maximum element of the vector sequence $\{\bar{\bm{\pi}}_{k}-\tilde{\bm{\pi}}\}_{k=1}^{\bar{N}}$ as $\bar{\pi}_{sa}$ which satisfies $0<\bar{\pi_{sa}}\leq 1$. Then we get
\begin{align}\label{eq_part2_3}
    (\ref{eq:part2_2})
    &\leq \frac{D}{1-\gamma} \sum_{k=1}^{\bar{N}}\bar{\beta}_{k}\langle\bar{\pi}_{sa}\bm{e}, -\nabla_{\bm{\pi}} J(\tilde{\bm{\pi}},\tilde{\bm{p}}^{(k)})\rangle\\
    &= \frac{D}{1-\gamma}\langle\bar{\pi}_{sa}\bm{e}, \sum_{k=1}^{\bar{N}}\bar{\beta}_{k}\left(-\nabla_{\bm{\pi}} J(\tilde{\bm{\pi}},\tilde{\bm{p}}^{(k)})\right)\rangle\notag\\
    &\leq \frac{D}{1-\gamma}\langle\bm{e}, -\hat{\bm{\xi}}\rangle\leq \frac{D\sqrt{SA}}{1-\gamma}\|\nabla \Phi_{\frac{1}{2\ell_{\bm{\pi}}}}(\bm{\pi})\|.
\end{align}
Here, the last inequality follows from the definition of $\bar{\bm{d}}(\tilde{\bm{\pi}}_{t})$ which is mentioned in (\ref{def_part2}) and $\bm{e}$ is all-one vector defined in Section~\ref{sec1}. Remind that, Lemma~\ref{lem_sec3_1} implies $J_{\bm{\rho}}(\bm{\pi},\bm{p})$ is $L_{\bm{\pi}}$-Lipschitz in $\bm{\pi}$, and Lemma~\ref{lem_sec3_1} also shows that $\Phi(\bm{\pi})$ is $L_{\bm{\pi}}$-Lipschitz. Thus, combine this Lipschitz property and the above equation (\ref{eq_part2_3}), we get 
\begin{align}\label{eq:the1_2}
    \Phi(\bm{\pi}) - \Phi(\bm{\pi}^{\star}) 
    &=  \Phi(\bm{\pi})-\Phi(\tilde{\bm{\pi}})+\Phi(\tilde{\bm{\pi}})-\Phi(\bm{\pi}^{\star})\notag\\
    &\leq \frac{D\sqrt{SA}}{1-\gamma}\|\nabla \Phi_{\frac{1}{2\ell_{\bm{\pi}}}}(\bm{\pi})\| + \Phi(\bm{\pi})-\Phi(\tilde{\bm{\pi}})\notag\\
    &\leq \frac{D\sqrt{SA}}{1-\gamma}\|\nabla \Phi_{\frac{1}{2\ell_{\bm{\pi}}}}(\bm{\pi})\| + L_{\bm{\pi}}\|\bm{\pi}-\tilde{\bm{\pi}}\|\notag\\
    &= \frac{D\sqrt{SA}}{1-\gamma}\|\nabla \Phi_{\frac{1}{2\ell_{\bm{\pi}}}}(\bm{\pi})\| + L_{\bm{\pi}}\cdot \frac{\|\nabla \Phi_{\frac{1}{2\ell_{\bm{\pi}}}}(\bm{\pi})\|}{2\ell_{\bm{\pi}}},
\end{align}
where (\ref{eq:the1_2}) holds by using arguments of Lemma~\ref{lem_sec2_4} and Lemma~\ref{lem_sec2_3}.

\end{proof}

\begin{proof}[Proof of \cref{the_sec3_1}]
The proof is split into two parts. We first show our algorithm can reach a $\epsilon$-first stationary point of $\Phi(\bm{\pi}):=\max_{\bm{p}\in\mathcal{P}}J_{\bm{\rho}}(\bm{\pi},\bm{p})$. Then, we next prove that this $\epsilon$-first stationary point is close enough to the global minimum of $\Phi(\bm{\pi})$.

We begin by defining a policy $ \tilde{\bm{\pi}}_{t} = \arg\min_{\tilde{\bm{\pi}}\in\Pi} \Phi(\tilde{\bm{\pi}})+\ell_{\bm{\pi}}\|\bm{\pi}_{t}-\tilde{\bm{\pi}}\|^{2}$ where $\Phi(\bm{\pi})$ has been well defined as the objective function $J_{\bm{\rho}}(\bm{\pi},\bm{p})$ taking the worst-case transition probability, then, we have
\begin{align}\label{eq:mid1}
    \Phi_{\frac{1}{2\ell_{\bm{\pi}}}}(\bm{\pi}_{t+1}) &= \min_{\bm{\pi}}\Phi(\bm{\pi})+\ell_{\bm{\pi}}\|\bm{\pi}_{t+1}-\bm{\pi}\|^{2} \notag\\
    &\leq \Phi(\tilde{\bm{\pi}}_{t})+\ell_{\bm{\pi}}\|\bm{\pi}_{t+1}-\tilde{\bm{\pi}}_{t}\|^{2} \notag\\
    &= \Phi(\tilde{\bm{\pi}}_{t})+\ell_{\bm{\pi}}\|\mathcal{P}_{\Pi}(\bm{\pi}_{t}-\alpha\nabla_{\bm{\pi}}J_{\bm{\rho}}(\bm{\pi}_{t},\bm{p}_{t}))-\mathcal{P}_{\Pi}(\tilde{\bm{\pi}}_{t})\|^{2}\notag\\
    &\overset{(a)}{\leq} \Phi(\tilde{\bm{\pi}}_{t})+\ell_{\bm{\pi}}\|\bm{\pi}_{t}-\alpha\nabla_{\bm{\pi}}J_{\bm{\rho}}(\bm{\pi}_{t},\bm{p}_{t})-\tilde{\bm{\pi}}_{t}\|^{2} \notag\\
    &= \Phi(\tilde{\bm{\pi}}_{t})+\ell_{\bm{\pi}}\|\bm{\pi}_{t}-\tilde{\bm{\pi}}_{t}\|^{2} - 2\ell_{\bm{\pi}}\alpha\langle\nabla_{\bm{\pi}}J_{\bm{\rho}}(\bm{\pi}_{t},\bm{p}_{t}),\bm{\pi}_{t}-\tilde{\bm{\pi}}_{t}\rangle + \alpha^{2}\ell_{\bm{\pi}}\|\nabla_{\bm{\pi}}J_{\bm{\rho}}(\bm{\pi}_{t},\bm{p}_{t})\|^{2}\notag\\
    &\leq \Phi_{\frac{1}{2\ell_{\bm{\pi}}}}(\bm{\pi}_{t}) +2\ell_{\bm{\pi}}\alpha\left(\Phi(\tilde{\bm{\pi}}_{t}) - \Phi(\bm{\pi}_{t}) + \epsilon_{t} + \frac{\ell_{\bm{\pi}}}{2}\|\bm{\pi}_{t}-\tilde{\bm{\pi}}_{t}\|^{2}\right) + \alpha^{2}\ell_{\bm{\pi}}L_{\bm{\pi}}^{2},
\end{align}
where $(\bm{\pi}_{t},\bm{p}_{t})$ is produced from the DRPG scheme at iteration step $t$ and  $\Phi_{\frac{1}{2\ell_{\bm{\pi}}}}$ is the Moreau envelope fucntion of $\Phi$ with parameter $\lambda = \frac{1}{2\ell_{\bm{\pi}}}$. The inequality $(a)$ follows the basic projection property~\citep{rockafellar1976monotone}, \ie, for any $\bm{x}_{1},\bm{x}_{2}\in\mathbb{R}^{n}$, 
\begin{equation*}
    \|\mathcal{P}_{\mathcal{X}}(\bm{x}_{1})-\mathcal{P}_{\mathcal{X}}(\bm{x}_{2})\|\leq\|\bm{x}_{1}-\bm{x}_{2}\|,
\end{equation*}
and the last inequality holds due to the fact that $J_{\bm{\rho}}(\bm{\pi},\bm{p})$ is $\ell_{\bm{\pi}}$-weakly convex in $\bm{\pi}$, in the sense that, for the $\tilde{\bm{\pi}}_{t}$,
\begin{align*}
    \Phi(\tilde{\bm{\pi}}_{t}) \geq J_{\bm{\rho}}(\tilde{\bm{\pi}}_{t},\bm{p}_{t}) &\geq J_{\bm{\rho}}(\bm{\pi}_{t},\bm{p}_{t}) + \langle\nabla_{\bm{\pi}}J_{\bm{\rho}}(\bm{\pi}_{t},\bm{p}_{t}),\tilde{\bm{\pi}}_{t}-\bm{\pi}_{t}\rangle - \frac{\ell_{\bm{\pi}}}{2}\|\bm{\pi}_{t}-\tilde{\bm{\pi}}_{t}\|^{2}\\
    &\geq \underbrace{\max_{\bm{p}\in\mathcal{P}}J_{\bm{\rho}}(\bm{\pi}_{t},\bm{p})}_{\Phi(\bm{\pi}_{t})} - \epsilon_{t} + \langle\nabla_{\bm{\pi}}J_{\bm{\rho}}(\bm{\pi}_{t},\bm{p}_{t}),\tilde{\bm{\pi}}_{t}-\bm{\pi}_{t}\rangle - \frac{\ell_{\bm{\pi}}}{2}\|\bm{\pi}_{t}-\tilde{\bm{\pi}}_{t}\|^{2}.
\end{align*}
Next, by summing (\ref{eq:mid1}) up over $t$, we obtain,
\begin{equation*}
   \Phi_{\frac{1}{2\ell_{\bm{\pi}}}}(\bm{\pi}_{T-1}) \leq \Phi_{\frac{1}{2\ell_{\bm{\pi}}}}(\bm{\pi}_{0}) +2\ell_{\bm{\pi}}\alpha\sum^{T-1}_{t=0}\left(\Phi(\tilde{\bm{\pi}}_{t}) - \Phi(\bm{\pi}_{t}) + \epsilon_{t} + \frac{\ell_{\bm{\pi}}}{2}\|\bm{\pi}_{t}-\tilde{\bm{\pi}}_{t}\|^{2}\right) + T\alpha^{2}\ell_{\bm{\pi}}L_{\bm{\pi}}^{2}.
\end{equation*}
Rearranging this inequality yields
\begin{equation}\label{eq:the_sum1}
    \sum^{T-1}_{t=0}\left( \Phi(\bm{\pi}_{t}) -\Phi(\tilde{\bm{\pi}}_{t}) - \frac{\ell_{\bm{\pi}}}{2}\|\bm{\pi}_{t}-\tilde{\bm{\pi}}_{t}\|^{2}\right) \leq \frac{\Phi_{\frac{1}{2\ell_{\bm{\pi}}}}(\bm{\pi}_{0})-\Phi_{\frac{1}{2\ell_{\bm{\pi}}}}(\bm{\pi}_{T-1})}{2\ell_{\bm{\pi}}\alpha} + \frac{T\alpha L_{\bm{\pi}}^{2}}{2} + \sum^{T-1}_{t=0}\epsilon_{t}.
\end{equation}
Then, we have
\begin{align} \label{eq:part1_1}
    &\Phi(\bm{\pi}_{t})-\Phi(\tilde{\bm{\pi}}_{t}) - \frac{\ell_{\bm{\pi}}}{2}\|\bm{\pi}_{t}-\tilde{\bm{\pi}}_{t}\|^{2} \notag\\
    &= \Phi(\bm{\pi}_{t})+ \ell_{\bm{\pi}}\|\bm{\pi}_{t}-\bm{\pi}_{t}\|^{2}-\Phi(\tilde{\bm{\pi}}_{t}) -\ell_{\bm{\pi}}\|\bm{\pi}_{t}-\tilde{\bm{\pi}}_{t}\|^{2} + \frac{\ell_{\bm{\pi}}}{2}\|\bm{\pi}_{t}-\tilde{\bm{\pi}}_{t}\|^{2}\notag\\
    & = \Phi(\bm{\pi}_{t})+ \ell_{\bm{\pi}}\|\bm{\pi}_{t}-\bm{\pi}_{t}\|^{2}-\min_{\bm{\pi}\in\Pi}\left\{\Phi(\bm{\pi}) +\ell_{\bm{\pi}}\|\bm{\pi}_{t}-\bm{\pi}\|^{2}\right\} + \frac{\ell_{\bm{\pi}}}{2}\|\bm{\pi}_{t}-\tilde{\bm{\pi}}_{t}\|^{2}\notag\\
    &\overset{(a)}{\geq} \ell_{\bm{\pi}}\|\bm{\pi}_{t}-\tilde{\bm{\pi}}_{t}\|^{2} = \frac{1}{4\ell_{\bm{\pi}}}\|\nabla \Phi_{\frac{1}{2\ell_{\bm{\pi}}}}(\bm{\pi}_{t})\|^{2}.
\end{align}
The inequality $(a)$ is obtained by the Lemma~\ref{lem:appB1} and the last equality in (\ref{eq:part1_1}) is obtained by using the gradient of Moreau envelope function proposed in Lemma~\ref{lem_sec2_4}, \ie,
\begin{equation*}
    \nabla \Phi_{\frac{1}{2\ell_{\bm{\pi}}}}(\bm{\pi}_{t}) =  2\ell_{\bm{\pi}}\left(\bm{\pi}_{t}-\mathop{\arg\max}\limits_{\bm{\pi}\in\Pi} \left(\Phi(\bm{\pi})+\ell_{\bm{\pi}}\left\|\bm{\pi}_{t}-\bm{\pi}\right\|^{2} \right)\right)=2\ell_{\bm{\pi}}\left(\bm{\pi}_{t}-\tilde{\bm{\pi}}_{t}\right).
\end{equation*}
Let $\bar{\bm{\pi}}_{1}:= \arg\min_{\bar{\bm{\pi}}\in\Pi} \Phi(\bar{\bm{\pi}})+\ell_{\bm{\pi}}\|\bm{\pi}_{1}-\bar{\bm{\pi}}\|^{2}$ and $\bar{\bm{\pi}}_{2}:= \arg\min_{\bar{\bm{\pi}}\in\Pi} \Phi(\bar{\bm{\pi}})+\ell_{\bm{\pi}}\|\bm{\pi}_{2}-\bar{\bm{\pi}}\|^{2}$ for any $\bm{\pi}_{1},\bm{\pi}_{2}\in\Pi$, and then we have
\begin{align}\label{eq:set_dis}
\Phi_{\frac{1}{2\ell_{\bm{\pi}}}}(\bm{\pi}_{1})-\Phi_{\frac{1}{2\ell_{\bm{\pi}}}}(\bm{\pi}_{2}) &= \min_{\bar{\bm{\pi}}\in\Pi} \left(\Phi(\bar{\bm{\pi}})+\ell_{\bm{\pi}}\|\bm{\pi}_{1}-\bar{\bm{\pi}}\|^{2}\right) - \min_{\bar{\bm{\pi}}\in\Pi} \left(\Phi(\bar{\bm{\pi}})+\ell_{\bm{\pi}}\|\bm{\pi}_{2}-\bar{\bm{\pi}}\|^{2}\right)\\
&= \Phi(\bar{\bm{\pi}}_{1})+\ell_{\bm{\pi}}\|\bm{\pi}_{1}-\bar{\bm{\pi}}_{1}\|^{2} - \Phi(\bar{\bm{\pi}}_{2})-\ell_{\bm{\pi}}\|\bm{\pi}_{2}-\bar{\bm{\pi}}_{2}\|^{2}\\
&\leq \Phi(\bar{\bm{\pi}}_{2})+\ell_{\bm{\pi}}\|\bm{\pi}_{1}-\bar{\bm{\pi}}_{2}\|^{2} - \Phi(\bar{\bm{\pi}}_{2})-\ell_{\bm{\pi}}\|\bm{\pi}_{2}-\bar{\bm{\pi}}_{2}\|^{2}\\
&= \ell_{\bm{\pi}}\left(\|\bm{\pi}_{1}-\bar{\bm{\pi}}_{2}\|^{2} - \|\bm{\pi}_{2}-\bar{\bm{\pi}}_{2}\|^{2}\right)\\
&\leq2\ell_{\bm{\pi}}S.
\end{align}
Plug (\ref{eq:set_dis}) and (\ref{eq:part1_1}) into (\ref{eq:the_sum1}) and reach the first result that 
\begin{equation}\label{eq:the_sum2}
   \sum^{T-1}_{t=0}\|\nabla \Phi_{\frac{1}{2\ell_{\bm{\pi}}}}(\bm{\pi}_{t})\|^{2} \leq \frac{4\ell_{\bm{\pi}}S}{\alpha} + 2T\alpha\ell_{\bm{\pi}} L_{\bm{\pi}}^{2} + 4\ell_{\bm{\pi}}\sum^{T-1}_{t=0}\epsilon_{t}.
\end{equation}
Notice that, when the LHS is smaller than $T\epsilon^{2}$, \ie,
\begin{equation*}
    T\cdot\min_{t}\|\nabla \Phi_{\frac{1}{2\ell_{\bm{\pi}}}}(\bm{\pi}_{t})\|^{2}\leq\sum^{T-1}_{t=0}\|\nabla \Phi_{\frac{1}{2\ell_{\bm{\pi}}}}(\bm{\pi}_{t})\|^{2}\leq T\epsilon^{2},
\end{equation*}
there exists one $\hat{t}$ such that $\|\nabla \Phi_{\frac{1}{2\ell_{\bm{\pi}}}}(\bm{\pi}_{\hat{t}})\|\leq \epsilon$ and $\bm{\pi}_{\hat{t}}$ is a $\epsilon$-first order stationary point for $\Phi(\bm{\pi})$.\

We finished the first part of the proof, and the next step is to show this approximate stationary point is close to the global minimum of $\Phi(\bm{\pi})$. Formally, we next to show there exists some $t$ such that
\begin{equation}\label{eq;mid2}
    J_{\bm{\rho}}(\bm{\pi}^{\star},\bm{p}^{\star}) - \max_{\bm{p}\in\mathcal{P}}J_{\bm{\rho}}(\bm{\pi}_{t},\bm{p}) =  \Phi(\bm{\pi}^\star) - \Phi(\bm{\pi}_{t}) \leq \epsilon.
\end{equation}

Applying the result in Theorem~\ref{the_sec3_1_GD} for the iterative policy $\bm{\pi}_{t}$, we have
\begin{align}\label{eq:the_2}
    J(\bm{\pi}_{t},\bm{p}_{t}) - \min_{\bm{\pi}\in\Pi}\max_{\bm{p}\in\mathcal{P}}J_{\bm{\rho}}(\bm{\pi},\bm{p})
    &\leq\Phi(\bm{\pi}_{t}) - \Phi(\bm{\pi}^{\star})
    \leq \frac{D\sqrt{SA}}{1-\gamma}\|\nabla \Phi_{\frac{1}{2\ell_{\bm{\pi}}}}(\bm{\pi}_{t})\| + L_{\bm{\pi}}\cdot \frac{\|\nabla \Phi_{\frac{1}{2\ell_{\bm{\pi}}}}(\bm{\pi}_{t})\|}{2\ell_{\bm{\pi}}}.
\end{align}

Combined this two parts, we finally state the global convergence guarantee. Equation (\ref{eq:the_2}) implies that
\begin{align}\label{eq:the_3}
\min_{t\in\{0,\cdots,T-1\}}\left\{J(\bm{\pi}_{t},\bm{p}_{t}) - \min_{\bm{\pi}\in\Pi}\max_{\bm{p}\in\mathcal{P}}J_{\bm{\rho}}(\bm{\pi},\bm{p})\right\}
&\leq\frac{1}{T}\sum_{t=0}^{T-1}\left(J(\bm{\pi}_{t},\bm{p}_{t}) - \min_{\bm{\pi}\in\Pi}\max_{\bm{p}\in\mathcal{P}}J_{\bm{\rho}}(\bm{\pi},\bm{p})\right)\notag\\
&\leq\frac{1}{T}\sum_{t=0}^{T-1}\left(\Phi(\bm{\pi}_{t}) - \Phi(\bm{\pi}^{\star})\right)\notag\\
&\leq \frac{1}{T} \left(\frac{D\sqrt{SA}}{1-\gamma}+\frac{L_{\bm{\pi}}}{2\ell_{\bm{\pi}}}\right)\sum^{T-1}_{t=0}\left\|\nabla \Phi_{\frac{1}{2\ell_{\bm{\pi}}}}(\bm{\pi}_{t})\right\|
\end{align}
By Cauchy–Schwarz inequality, we can obtain
\begin{equation*}
    \frac{1}{\sqrt{T}}\sum^{T-1}_{t=0}\left\|\nabla \Phi_{\frac{1}{2\ell_{\bm{\pi}}}}(\bm{\pi}_{t})\right\| \leq \sqrt{\sum^{T-1}_{t=0}\|\nabla \Phi_{\frac{1}{2\ell_{\bm{\pi}}}}(\bm{\pi}_{t})\|^{2}}.
\end{equation*}

We then multiply the constant $\frac{D\sqrt{SA}}{1-\gamma}+\frac{L_{\bm{\pi}}}{2\ell_{\bm{\pi}}}$ on both sides and combine the inequality (\ref{eq:the_sum2}) to obtain the result that, if the iteration time $T$ satisfies
\begin{align*}
    (\ref{eq:the_3})&\leq\frac{1}{\sqrt{T}} \left(\frac{D\sqrt{SA}}{1-\gamma}+\frac{L_{\bm{\pi}}}{2\ell_{\bm{\pi}}}\right)\sqrt{\sum^{T-1}_{t=0}\|\nabla \Phi_{\frac{1}{2\ell_{\bm{\pi}}}}(\bm{\pi}_{t})\|^{2}} \\
    &= \frac{1}{\sqrt{T}} \left(\frac{D\sqrt{SA}}{1-\gamma}+\frac{L_{\bm{\pi}}}{2\ell_{\bm{\pi}}}\right)\sqrt{\left(\frac{4\ell_{\bm{\pi}}S}{\alpha} + 2T\alpha\ell_{\bm{\pi}} L_{\bm{\pi}}^{2} + 4\ell_{\bm{\pi}}\sum^{T-1}_{t=0}\epsilon_{t}\right)}\\ 
    &\overset{(a)}{\leq} \frac{1}{\sqrt{T}} \left(\frac{D\sqrt{SA}}{1-\gamma}+\frac{L_{\bm{\pi}}}{2\ell_{\bm{\pi}}}\right) \sqrt{\left(\frac{4\ell_{\bm{\pi}}S\sqrt{T}}{\delta} + 2\sqrt{T}\delta\ell_{\bm{\pi}} L_{\bm{\pi}}^{2} + \frac{4\ell_{\bm{\pi}}\epsilon_{0}}{1-\gamma}\right)}\\
    &\leq \frac{1}{\sqrt{T}} \left(\frac{D\sqrt{SA}}{1-\gamma}+\frac{L_{\bm{\pi}}}{2\ell_{\bm{\pi}}}\right) \sqrt{\left(\frac{4\ell_{\bm{\pi}}S\sqrt{T}}{\delta} + 2\sqrt{T}\delta\ell_{\bm{\pi}} L_{\bm{\pi}}^{2} + \frac{4\ell_{\bm{\pi}}\sqrt{T}}{1-\gamma}\right)}\\
    &= \epsilon
\end{align*}
where the inequality $(a)$ holds due to the adaptive tolerance sequence, in the sense that,
\begin{equation*}
    \sum^{T-1}_{t=0}\epsilon_{t} \leq \sum^{\infty}_{t=0}\epsilon_{t} \leq \epsilon_{0}\cdot\left(1+\gamma+\gamma^{2}+\cdots\right) \leq\frac{\epsilon_{0}}{1-\gamma},
\end{equation*}
which implies that
\begin{equation*}
    T \geq \frac{\left(\frac{D\sqrt{SA}}{1-\gamma}+\frac{L_{\bm{\pi}}}{2\ell_{\bm{\pi}}}\right)^{4}\left(\frac{4\ell_{\bm{\pi}}S}{\delta} + 2\delta\ell_{\bm{\pi}} L_{\bm{\pi}}^{2} + \frac{4\ell_{\bm{\pi}}}{1-\gamma}\right)^{2}}{\epsilon^{4}} = \mathcal{O}(\epsilon^{-4}),
\end{equation*}
then, we have
\begin{equation*}
\min_{t\in\{0,\cdots,T-1\}}\left\{J(\bm{\pi}_{t},\bm{p}_{t}) - \min_{\bm{\pi}\in\Pi}\max_{\bm{p}\in\mathcal{P}}J_{\bm{\rho}}(\bm{\pi},\bm{p})\right\} \leq \epsilon.
\end{equation*}
Intuitively, we have
\begin{equation*}
    \min_{t\in\{0,\cdots,T-1\}}\left\{\Phi(\bm{\pi}_{t}) - \min_{\bm{\pi}\in\Pi}\Phi(\bm{\pi})\right\}\leq \epsilon.
\end{equation*}
\end{proof}

\section{Discussion on R-contamination ambiguity set}\label{sec:R_contamination}
Recall that the R-contamination ambiguity set is a kind of $(s,a)$-rectangular set $\mathcal{P}=\underset{s \in \mathcal{S},a\in\mathcal{A}}{\times} \mathcal{P}_{s,a}$ where $\mathcal{P}_{s,a}$ is defined as
\begin{equation}
    \mathcal{P}_{s,a} := \{(1-R)\bar{\bm{p}}_{sa}+R\bm{q} \mid \bm{q}\in\Delta(S)\},s \in \mathcal{S},a\in\mathcal{A}.
\end{equation}

We have the following property of the R-contamination sets which illustrates their limited applicability.
\begin{proposition}\label{prop:r-contamination-useless}
Any RMDP with an R-contamination ambiguity set has the same optimal robust policy as a corresponding ordinary MDP with a reduced discount factor.
\end{proposition}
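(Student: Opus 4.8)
The plan is to reduce everything to the robust Bellman optimality operator, exploiting the $(s,a)$-rectangularity of the R-contamination set $\mathcal{P}_{s,a}=\{(1-R)\bar{\bm p}_{sa}+R\bm q:\bm q\in\Delta^S\}$, and then to show that its fixed point differs from the optimal value function of the reduced-discount MDP only by a state-independent constant. Since greedy action selection is invariant under adding a constant vector to the value, the two problems then share the same optimal policies. I work in the standard R-contamination cost model where the cost depends only on the state--action pair, $c_{sas'}=c_{sa}$ (as in \citep{pmlr-v162-wang22at}), so that all value functions are nonnegative by Assumption~\ref{assump_1}. The first step is to simplify the operator. For an $(s,a)$-rectangular set the robust Bellman optimality operator (obtained from $\mathcal{T}_{\bm\pi}$ by also minimizing over $\bm\pi_s$) is $(\mathcal{T}^\star\bm v)_s=\min_{a\in\mathcal A}\max_{\bm p_{sa}\in\mathcal{P}_{sa}}\bm p_{sa}^\top(c_{sa}\bm e+\gamma\bm v)$. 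Substituting the R-contamination structure and using $\max_{\bm q\in\Delta^S}\bm q^\top\bm x=\max_i x_i$ together with $\bm v\ge\bm 0$, the inner maximum equals $(1-R)\bar{\bm p}_{sa}^\top(c_{sa}\bm e+\gamma\bm v)+R\bigl(c_{sa}+\gamma\max_{s'}v_{s'}\bigr)$, whence
\[
(\mathcal{T}^\star\bm v)_s=\min_{a\in\mathcal A}\bigl\{c_{sa}+\gamma(1-R)\,\bar{\bm p}_{sa}^\top\bm v\bigr\}+\gamma R\max_{s'}v_{s'} .
\]
Writing $\gamma':=\gamma(1-R)$ and letting $\hat{\mathcal{T}}^\star$ be the ordinary Bellman optimality operator of the MDP $\langle\mathcal S,\mathcal A,\bar{\bm p},\bm c,\gamma',\bm\rho\rangle$, this reads $(\mathcal{T}^\star\bm v)_s=(\hat{\mathcal{T}}^\star\bm v)_s+\gamma R\|\bm v\|_\infty$ for $\bm v\ge\bm 0$; since $|\max_i v_i-\max_i w_i|\le\|\bm v-\bm w\|_\infty$, $\mathcal{T}^\star$ is a $\gamma$-contraction, so it has a unique fixed point $\bm v^\star$ (the robust value function).

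The key step is a constant-shift identity. Let $\bm u^\star$ be the unique fixed point of $\hat{\mathcal{T}}^\star$ and put $\kappa:=\gamma R\|\bm u^\star\|_\infty/(1-\gamma)\ge 0$. I claim $\bm v^\star=\bm u^\star+\kappa\bm e$. Because $\bar{\bm p}_{sa}^\top\bm e=1$ we have $(\hat{\mathcal{T}}^\star(\bm u^\star+\kappa\bm e))_s=u^\star_s+\gamma'\kappa$, and $\|\bm u^\star+\kappa\bm e\|_\infty=\|\bm u^\star\|_\infty+\kappa$, so $(\mathcal{T}^\star(\bm u^\star+\kappa\bm e))_s=u^\star_s+\gamma'\kappa+\gamma R(\|\bm u^\star\|_\infty+\kappa)=u^\star_s+(\gamma'+\gamma R)\kappa+\gamma R\|\bm u^\star\|_\infty$. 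Using $\gamma'+\gamma R=\gamma$ and $\gamma R\|\bm u^\star\|_\infty=(1-\gamma)\kappa$ this equals $u^\star_s+\kappa$, so $\bm u^\star+\kappa\bm e$ is a fixed point of $\mathcal{T}^\star$, hence equals $\bm v^\star$ by uniqueness.

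Finally, I translate this into optimal policies. A policy $\bm\pi$ is robust-optimal iff, for every $s$, $\bm\pi_s$ is supported on $\arg\min_{a}\{c_{sa}+\gamma'\bar{\bm p}_{sa}^\top\bm v^\star\}$ — the additive $\gamma R\|\bm v^\star\|_\infty$ term is independent of $a$ and does not affect the minimizer. By the shift identity and $\bar{\bm p}_{sa}^\top\bm e=1$, $c_{sa}+\gamma'\bar{\bm p}_{sa}^\top\bm v^\star=c_{sa}+\gamma'\bar{\bm p}_{sa}^\top\bm u^\star+\gamma'\kappa$, whose $\arg\min$ over $a$ is exactly $\arg\min_a\{c_{sa}+\gamma'\bar{\bm p}_{sa}^\top\bm u^\star\}$, i.e.\ the greedy actions of the MDP $\langle\mathcal S,\mathcal A,\bar{\bm p},\bm c,\gamma',\bm\rho\rangle$. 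Therefore the RMDP and that MDP have the same set of optimal (stationary deterministic) policies, which is the claim.

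The only genuinely nontrivial step is spotting the constant-shift identity of the second paragraph; once $\mathcal{T}^\star$ is written in the form $\hat{\mathcal{T}}^\star+\gamma R\|\cdot\|_\infty$ the rest is a one-line fixed-point verification plus the shift-invariance of $\arg\min$, both routine. One caveat worth stating explicitly in the write-up: with genuinely triple-indexed costs $c_{sas'}$ the adversary's jump term becomes $R\max_{s'}(c_{sas'}+\gamma v_{s'})$, which is action-dependent and breaks the clean reduction, so the proposition is naturally phrased in the state--action cost model used by the R-contamination literature.
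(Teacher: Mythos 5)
Your proof is correct and follows essentially the same route as the paper's: both reduce the R-contamination robust Bellman operator to the ordinary Bellman operator with discount $\gamma(1-R)$ plus the action-independent term $\gamma R\|\bm{v}\|_\infty$, deduce that the two fixed points differ by a constant multiple of $\bm{e}$, and conclude that the greedy (hence optimal) policies coincide. The only difference is in the middle step: you verify the constant shift directly by substituting $\bm{u}^{\star}+\kappa\bm{e}$ into the operator, whereas the paper iterates $\mathcal{T}^{r}$ from its fixed point and sums a geometric series --- your version is a bit cleaner, and your explicit caveat about state--action versus triple-indexed costs makes precise an assumption the paper leaves implicit.
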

\begin{proof}[Proof of \cref{prop:r-contamination-useless}]
    The robust optimal bellman operator of a RMDP with R-contamination ambiguity can be written as 
\begin{align*}
    (\mathcal{T}^{r}\bm{v})_{s} :&= \min_{a\in\mathcal{A}}\max_{\bm{p}_{sa}\in\mathcal{P}_{s,a}}(c_{sa}+\gamma \bm{p}_{sa}^{\top}\bm{v})\\
    &=\min_{a\in\mathcal{A}}c_{sa}+\gamma\left[(1-R)\bar{\bm{p}}_{sa}^{\top}\bm{v} + R\max_{s'}v_{s'}\right]\\
    &=\left[\min_{a\in\mathcal{A}}c_{sa}+\gamma(1-R)\bar{\bm{p}}_{sa}^{\top}\bm{v}\right] + R\gamma\max_{s'}v_{s'}
\end{align*}
Consider an ordinary MDP with the same reward function, transition kernel $\bm{p}:=(\bar{\bm{p}}_{sa})_{s\in\mathcal{S},a\in\mathcal{A}}\in(\Delta^{S})^{S\times A}$ and discount factor $\gamma(1-R)$. The optimal bellman operator is defined as 
\begin{equation*}
    (\mathcal{T}\bm{v})_{s} := \min_{a\in\mathcal{A}}c_{sa}+\gamma(1-R)\bar{\bm{p}}_{sa}^{\top}\bm{v}.
\end{equation*}
Then, we have that
\begin{equation}
    (\mathcal{T}^{r}\bm{v})_{s} = (\mathcal{T}\bm{v})_{s} + R\gamma\|\bm{v}\|_{\infty}
\end{equation}
We define optimal value functions for $\mathcal{T}^{r}$ and $\mathcal{T}$ as follow
\begin{align*}
    \mathcal{T}^{r}\bm{v}^{r} = \bm{v}^{r},\quad
    \mathcal{T}\bm{v}^{nr} = \bm{v}^{nr},
\end{align*}
and consider the value iteration with the given initial value functions $\bm{v}^{r}$ first. Then we have that
\begin{align*}
    \mathcal{T}^{r}\bm{v}^{r} &= \mathcal{T}\bm{v}^{r} + R\gamma\|\bm{v}^{r}\|_{\infty}\bm{e}\\
    \Longleftrightarrow\;\;\; (\mathcal{T}^{r})^{2}\bm{v}^{r} &= (\mathcal{T})^{2}\bm{v}^{r} + R\gamma \|\mathcal{T}^{r}\bm{v}^{r}\|_{\infty}\bm{e}+R\gamma^{2}(1-R)\|\bm{v}^{r}\|_{\infty}\bm{e}\\
    &=(\mathcal{T})^{2}\bm{v}^{r} + \left[R\gamma +R\gamma^{2}(1-R)\right]\cdot\|\bm{v}^{r}\|_{\infty}\cdot\bm{e}\\
    \Longleftrightarrow\;\;\;(\mathcal{T}^{r})^{k}\bm{v}^{r} &= (\mathcal{T})^{k}\bm{v}^{r} + \left[R\gamma +R\gamma^{2}(1-R) + R\gamma^{3}(1-R)^{2}+\cdots\right]\cdot\|\bm{v}^{r}\|_{\infty}\cdot\bm{e}\\
    &=(\mathcal{T})^{k}\bm{v}^{r} + \sum_{n=1}^{k}R\gamma^{k}(1-R)^{k-1}\cdot\|\bm{v}^{r}\|_{\infty}\cdot\bm{e}.
\end{align*}
By taking the limitation for both side, we obtain
\begin{align*}
    \lim_{k\rightarrow\infty}(\mathcal{T}^{r})^{k}\bm{v}^{r} &= \bm{v}^{r}\\
    &=\lim_{k\rightarrow\infty}\left[(\mathcal{T})^{k}\bm{v}^{r}+ \sum_{n=1}^{k}R\gamma^{n}(1-R)^{n-1}\cdot\|\bm{v}^{r}\|_{\infty}\cdot\bm{e}\right]\\
    &= \bm{v}^{nr} + \lim_{k\rightarrow\infty}\left[\sum_{n=1}^{k}R\gamma^{n}(1-R)^{n-1}\cdot\|\bm{v}^{r}\|_{\infty}\cdot\bm{e}\right]\\
    &= \bm{v}^{nr} +  \lim_{k\rightarrow\infty}\left[\frac{1-(\gamma(1-R))^{k}}{1-\gamma(1-R)}\right]\cdot\|\bm{v}^{r}\|_{\infty}\cdot\bm{e}\\
    &=\bm{v}^{nr} + \frac{1}{1-\gamma(1-R)}\cdot\|\bm{v}^{r}\|_{\infty}\cdot\bm{e}.
\end{align*}
Each operation $\mathcal{T}^{r}$ on $\bm{v}^{r}$ will take the same optimal action due to the definition of $\bm{v}^{r}$, which implies operation $\mathcal{T}^{r}$ on $\bm{v}^{r}$ works with the same action is taken. This intuitive result shows that the RMDP with R-contamination ambiguity and its corresponding ordinary MDP with discount factor $\gamma(1-R)$ has the same optimal policy.
\end{proof}

\section{Proofs of \cref{sec:inner-loop-maxim}}
\begin{proof}[Proof of \cref{lem_sec4_1}]
\label{app: Proof of sec4}
Notice that
\begin{equation*}
    \frac{\partial J_{\bm{\rho}}(\bm{\pi},\bm{p})}{\partial p_{sas'}} = \sum_{\hat{s} \in \mathcal{S}} \frac{\partial v^{\bm{\pi},\bm{p}}_{\hat{s}}}{\partial p_{sas'}} \rho_{\hat{s}}.
\end{equation*}
Then, we discuss $\frac{\partial v^{\bm{\pi},\bm{p}}_{\hat{s}}}{\partial p_{sas'}}$ over two cases: $\hat{s}\neq s$ and $\hat{s}= s$
\begin{align*}
    \frac{\partial v^{\bm{\pi},\bm{p}}_{\hat{s}}}{\partial p_{sas'}}\Big\vert_{\hat{s}\neq s} &= \frac{\partial}{\partial p_{sas'}}\left[\sum_{\hat{a}}\pi_{\hat{s}\hat{a}}\sum_{\hat{s}' \in \mathcal{S}} p_{\hat{s}\hat{a}\hat{s}'}\left(c_{\hat{s}\hat{a}\hat{s}'}+\gamma v^{\bm{\pi},\bm{p}}_{\hat{s}'}\right)\right] = \gamma\sum_{\hat{a}}\pi_{\hat{s}\hat{a}}\sum_{\hat{s}' \in \mathcal{S}} p_{\hat{s}\hat{a}\hat{s}'} \frac{\partial v^{\bm{\pi},\bm{p}}_{\hat{s}'}}{\partial p_{sas'}};\\
    \frac{\partial v^{\bm{\pi},\bm{p}}_{\hat{s}}}{\partial p_{sas'}}\Big\vert_{\hat{s}= s} &= \gamma\sum_{\hat{a}}\pi_{s\hat{a}}\sum_{\hat{s}' \in \mathcal{S}} p_{s\hat{a}\hat{s}'} \frac{\partial v^{\bm{\pi},\bm{p}}_{\hat{s}'}}{\partial p_{sas'}} + \pi_{sa}\left(c_{sas'}+\gamma v^{\bm{\pi},\bm{p}}_{s'}\right);
\end{align*}
By condensing $\sum_{\hat{a}}\pi_{\hat{s}\hat{a}} p_{\hat{s}\hat{a}\hat{s}'}=p^{\bm{\pi}}_{\hat{s}\hat{s}'}(1)$, we can obtain,
\begin{align*}
     \frac{\partial v^{\bm{\pi},\bm{p}}_{\hat{s}}}{\partial p_{sas'}}\Big\vert_{\hat{s}\neq s} &= \gamma\sum_{\hat{s}' \neq s } p^{\bm{\pi}}_{\hat{s}\hat{s}'}(1) \frac{\partial v^{\bm{\pi},\bm{p}}_{\hat{s}'}}{\partial p_{sas'}} + \gamma\sum_{\hat{s}' = s } p^{\bm{\pi}}_{\hat{s}\hat{s}'}(1) \frac{\partial v^{\bm{\pi},\bm{p}}_{\hat{s}'}}{\partial p_{sas'}}\\
     &= \gamma\sum_{\hat{s}' \neq s } p^{\bm{\pi}}_{\hat{s}\hat{s}'}(1) \cdot \gamma\sum_{\hat{a}}\pi_{\hat{s}'\hat{a}}\sum_{\hat{s}'' \in \mathcal{S}} p_{\hat{s}'\hat{a}\hat{s}''} \frac{\partial v^{\bm{\pi},\bm{p}}_{\hat{s}''}}{\partial p_{sas'}} \\
     &\;\;\;\;+ \gamma p^{\bm{\pi}}_{\hat{s}s}(1) \cdot \left( \gamma\sum_{\hat{a}}\pi_{s\hat{a}}\sum_{\hat{s}' \in \mathcal{S}} p_{s\hat{a}\hat{s}'} \frac{\partial v^{\bm{\pi},\bm{p}}_{\hat{s}'}}{\partial p_{sas'}} + \pi_{sa}\left(c_{sas'}+\gamma v^{\bm{\pi},\bm{p}}_{s'}\right)\right)\\
     &= \gamma p^{\bm{\pi}}_{\hat{s}s}(1)\pi_{sa}\left(c_{sas'}+\gamma v^{\bm{\pi},\bm{p}}_{s'}\right) + \gamma^{2}\sum_{\hat{s}'} p^{\bm{\pi}}_{\hat{s}\hat{s}'}(2)\frac{\partial v^{\bm{\pi},\bm{p}}_{\hat{s}'}}{\partial p_{sas'}}\\
     &= \gamma p^{\bm{\pi}}_{\hat{s}s}(1)\pi_{sa}\left(c_{sas'}+\gamma v^{\bm{\pi},\bm{p}}_{s'}\right) + \gamma^{2} p^{\bm{\pi}}_{\hat{s}s}(2)\pi_{sa}\left(c_{sas'}+\gamma v^{\bm{\pi},\bm{p}}_{s'}\right) + \gamma^{3}\sum_{\hat{s}'} p^{\bm{\pi}}_{\hat{s}\hat{s}'}(3)\frac{\partial v^{\bm{\pi},\bm{p}}_{\hat{s}'}}{\partial p_{sas'}}\\
     &= \cdots\\
     &= \sum_{t=1}^{\infty}\gamma^{t}p^{\bm{\pi}}_{\hat{s}s}(t)\pi_{sa}\left(c_{sas'}+\gamma v^{\bm{\pi},\bm{p}}_{s'}\right) = \sum_{t=0}^{\infty}\gamma^{t}p^{\bm{\pi}}_{\hat{s}s}(t)\pi_{sa}\left(c_{sas'}+\gamma v^{\bm{\pi},\bm{p}}_{s'}\right).
\end{align*}
The last equality is from the initial assumption $\hat{s} \neq s$, \ie, $p^{\bm{\pi}}_{\hat{s}s}(0)=0$, and similarly for the case $\hat{s}= s$ we have,
\begin{equation*}
    \frac{\partial v^{\bm{\pi},\bm{p}}_{\hat{s}}}{\partial p_{sas'}}\Big\vert_{\hat{s}= s} = \sum_{t=0}^{\infty}\gamma^{t}p^{\bm{\pi}}_{ss}(t)\pi_{sa}\left(c_{sas'}+\gamma v^{\bm{\pi},\bm{p}}_{s'}\right).
\end{equation*}
Hence, the partial derivative for transition probability is obtained
\begin{align*}
    \frac{\partial J_{\bm{\rho}}(\bm{\pi},\bm{p})}{\partial p_{sas'}}  &= \frac{1}{1-\gamma}\left(\underbrace{(1-\gamma) \sum_{\hat{s}\in\mathcal{S}}\sum_{t=0}^{\infty} \gamma^{t} \rho_{\hat{s}}p^{\bm{\pi}}_{\hat{s}s}(t)}_{d_{\bm{\rho}}^{\bm{\pi},\bm{p}}(s)}\right)\pi_{sa}\left(c_{sas'}+\gamma v^{\bm{\pi},\bm{p}}_{s'}\right) \\
    &= \frac{1}{1-\gamma}d_{\bm{\rho}}^{\bm{\pi},\bm{p}}(s)\pi_{sa}\left(c_{sas'}+\gamma v^{\bm{\pi},\bm{p}}_{s'}\right).
\end{align*}
The uniformly bounded cost $c_{sas'}$ implies that, the absolute value of the value function is bounded for any policy $\bm{\pi}$ and transition kernel $\bm{p}$,
\begin{align*}
    \left| v^{\bm{\pi},\bm{p}}_{s}\right| = \left|\mathbb{E}_{\bm{\pi},\bm{p}}\left[\sum_{t=0}^{\infty} \gamma^{t} c_{s_{t}a_{t}s_{t+1}} \mid s_{0}=s \right]\right| \leq \sum_{t=0}^{\infty}\gamma^{t} = \frac{1}{1-\gamma},
\end{align*}
then we obtain that
\begin{equation*}
    \left|\pi_{sa}\left(c_{sas'}+\gamma v^{\bm{\pi},\bm{p}}_{s'}\right)\right| \leq \left|\pi_{sa}\right|\cdot\left|c_{sas'}+\gamma v^{\bm{\pi},\bm{p}}_{s'}\right| \leq 1 + \frac{\gamma1}{1-\gamma}\leq\frac{1}{1-\gamma}.
\end{equation*}
Therefore, by vectorizing the $\bm{p}$ as a $S^{2}A$-dimensional vector, we have
\begin{align*}
    \|\nabla_{\bm{p}}J_{\bm{\rho}}(\bm{\pi},\bm{p})\| &= \sqrt{\sum_{s,a,s'}\left(\frac{\partial J_{\bm{\rho}}(\bm{\pi},\bm{p})}{\partial p_{sas'}}\right)^{2}}\\
    &= \frac{1}{1-\gamma}\sqrt{\sum_{s,a,s'}\left[d_{\bm{\rho}}^{\bm{\pi},\bm{p}}(s)\pi_{sa}\left(c_{sas'}+\gamma v^{\bm{\pi},\bm{p}}_{s'}\right)\right]^{2}}\\
    &\leq \frac{1}{(1-\gamma)^{2}}\sqrt{\sum_{a,s'}\sum_{s}(d_{\bm{\rho}}^{\bm{\pi},\bm{\xi}}(s))^{2}} \leq \frac{\sqrt{SA}}{(1-\gamma)^{2}},
\end{align*}
where the last inequality holds since the discounted state occupancy measure satisfies
\begin{equation*}
    \sum_{s}(d_{\bm{\rho}}^{\bm{\pi},\bm{\xi}}(s))^{2} \leq \left(\sum_{s}(d_{\bm{\rho}}^{\bm{\pi},\bm{\xi}}(s))\right)^{2}=1.
\end{equation*}
\end{proof}

Notice that, the objective function $J_{\bm{\rho}}(\bm{\pi},\bm{p})$ is twice differentiable on $\bm{p}$. Hence, to prove the smoothness condition in Lemma~\ref{lem_sec4_2} is equal to show that there exists a constant $L\leq\infty$ such that
\begin{equation*}
    \nabla^{2}_{\bm{p}}J_{\bm{\rho}}(\bm{\pi},\bm{p}) \preceq L\bm{I} \;\;\Longleftrightarrow \;\;\forall \bm{x}\in\mathbb{R}^{AS^{2}},\; \bm{x}^{\top}\nabla^{2}_{\bm{p}}J_{\bm{\rho}}(\bm{\pi},\bm{p})\bm{x} \leq L \bm{x}^{\top}\bm{x}.
\end{equation*}
\begin{proof}[Proof of \cref{lem_sec4_2}]
Denote $\bm{p}(\alpha) := \bm{p} + \alpha\bm{z} \in\mathcal{P}$ where $\alpha\in\mathbb{R}$ is a small scalar, whereas $\bm{z}\in(\mathbb{R}^{\mathcal{S}})^{\mathcal{S\times\mathcal{A}}}$. Since $J_{\bm{\rho}}(\bm{\pi},\bm{p}) = \sum_{s}\rho_{s}v^{\bm{\pi},\bm{p}(\alpha)}_{s}$ with a known initial distribution $\bm{\rho}$, we turn to consider the derivative of value function $v^{\bm{\pi},\bm{p}(\alpha)}_{s}$ of the transition kernel $\bm{p}(\alpha)$ over $\alpha$,
\begin{equation}\label{le_valuef}
    v^{\bm{\pi},\bm{p}(\alpha)}_{s} = \sum_{a}\pi_{sa}\sum_{s'}[\bm{p}(\alpha)]_{sas'}c_{sas'}+\gamma\cdot\sum_{a}\pi_{sa}\sum_{s'}[\bm{p}(\alpha)]_{sas'} v^{\bm{\pi},\bm{p}(\alpha)}_{s'},
\end{equation}
First, let us simplify the form of $v^{\bm{\pi},\bm{p}(\alpha)}_{s}$. We define $\bm{P}(\alpha)\in(\Delta^{S})^{S}$ as the state transition kernel and for any $s,s'\in\mathcal{S}$,
\begin{equation}\label{le_def_P:eq2}
    [\bm{P}(\alpha)]_{ss'} = \sum_{a}\pi_{sa}[\bm{p}(\alpha)]_{sas'},
\end{equation}
and $\bm{c}(\alpha)\in\mathbb{R}^{S}$ where for any $s\in\mathcal{S}$,
\begin{equation}
    \left|[\bm{c}(\alpha)]_{s}\right| = \left|\sum_{a}\pi_{sa}\sum_{s'}[\bm{p}(\alpha)]_{sas'}c_{sas'}\right| \leq 1.
\end{equation}
Then, the value function (\ref{le_valuef}) can be written as,
\begin{align}\label{le_valuef2}
    v^{\bm{\pi},\bm{p}(\alpha)}_{s} &= \bm{e}^{\top}_{s}\underbrace{\left(\bm{I}- \gamma \bm{P}(\alpha)\right)^{-1}}_{\bm{M}(\alpha)}\bm{c}(\alpha),
\end{align}
where $\bm{e}_{s}:= \left[0,\cdots,1,\cdots,0\right]^{\top}\in\mathbb{R}^{S}$ is a vector whose $s$-th element is $1$ and others are $0$. By using power series expansion technique~\citep{agarwal2021theory,mei2020global}, we can obtain that,
\begin{equation}
    \bm{M}(\alpha) = \left(\bm{I}- \gamma \bm{P}(\alpha)\right)^{-1} = \sum_{t}^{\infty}\gamma^{t}\bm{P}(\alpha)^{t},
\end{equation}
which implies that, for any $s,s'\in\mathcal{S}$, $[\bm{M}(\alpha)]_{ss'}\geq0$, and we have
\begin{equation}
    \bm{e} = \frac{1}{1-\gamma}\cdot\left(\bm{I}- \gamma \bm{P}(\alpha)\right)\bm{e} \Longleftrightarrow\bm{M}(\alpha)\bm{e} = \frac{1}{1-\gamma}\cdot\bm{e},
\end{equation}
which implies each row of $\bm{M}(\alpha)$ sums to $1/(1-\gamma)$. Therefore, for any vector $x\in\mathbb{R}^{S}$, we have
\begin{equation}\label{le_M:bound}
    \left\|\bm{M}(\alpha)\bm{x}\right\|_{\infty} = \max_{i}\left|\left[\bm{M}(\alpha)\bm{x}\right]_{i}\right| \leq \frac{1}{1-\gamma}\cdot\|\bm{x}\|_{\infty}.
\end{equation}
Taking derivative with respect to $\alpha$ on $v^{\bm{\pi},\bm{p}(\alpha)}_{s}$ defined in (\ref{le_valuef2}),
\begin{equation}
    \frac{\partial v^{\bm{\pi},\bm{p}(\alpha)}_{s}}{\partial\alpha} = \bm{e}^{\top}_{s}\bm{M}(\alpha)\frac{\partial\bm{c}(\alpha)}{\partial\alpha} + \gamma\bm{e}^{\top}_{s}\bm{M}(\alpha)\frac{\partial\bm{P}(\alpha)}{\partial\alpha}\bm{M}(\alpha)\bm{c}(\alpha).
\end{equation}
Then taking the twice derivative with respect to $\alpha$,
\begin{align}\label{le_p:eq1}
    \frac{\partial^{2} v^{\bm{\pi},\bm{p}(\alpha)}_{s}}{(\partial\alpha)^{2}} &= \bm{e}^{\top}_{s}\bm{M}(\alpha)\frac{\partial^{2}\bm{c}(\alpha)}{(\partial\alpha)^{2}} + 2\gamma\bm{e}^{\top}_{s}\bm{M}(\alpha)\frac{\partial\bm{P}(\alpha)}{\partial\alpha}\bm{M}(\alpha)\frac{\partial\bm{c}(\alpha)}{\partial\alpha}\notag\\
    &\;\;\;\;+ 2\gamma^{2}\bm{e}^{\top}_{s}\bm{M}(\alpha)\frac{\partial\bm{P}(\alpha)}{\partial\alpha}\bm{M}\frac{\partial\bm{P}(\alpha)}{\partial\alpha}\bm{M}(\alpha)\bm{c}(\alpha) + \gamma\bm{e}^{\top}_{s}\bm{M}(\alpha)\frac{\partial^{2}\bm{P}(\alpha)}{(\partial\alpha)^{2}}\bm{M}(\alpha)\bm{c}(\alpha).
\end{align}
Notice that, above two form of derivatives are obtained by using matrix calculus techniques, \ie, for any matrix $\bm{A},\bm{B}, \bm{U}(x)$ and scalar $x$,
\begin{align*}
    \frac{\partial \bm{A}\bm{U}(x)\bm{B}}{\partial x} &= \bm{A}\frac{\partial\bm{U}(x)}{\partial x}\bm{B}\quad\text{and}\quad\frac{\partial \bm{U}(x)^{-1}}{\partial x} = - \bm{U}(x)^{-1}\frac{\partial\bm{U}(x)}{\partial x}\bm{U}(x)^{-1}.
\end{align*}

So far, we get the derivative form of the value function. Then we'd like to bound $\left|\frac{\partial^{2} v^{\bm{\pi},\bm{p}(\alpha)}_{s}}{(\partial\alpha)^{2}}\Big\vert_{\alpha=0}\right|$.

For the first term in (\ref{le_p:eq1}), we have,
\begin{align}\label{le_term1:eq1}
    \left|\bm{e}^{\top}_{s}\bm{M}(\alpha)\frac{\partial^{2}\bm{c}(\alpha)}{(\partial\alpha)^{2}}\Big\vert_{\alpha=0}\right| &\leq \left\|\bm{e}^{\top}_{s}\right\|_{1}\cdot\left\|\bm{M}(\alpha)\frac{\partial^{2}\bm{c}(\alpha)}{(\partial\alpha)^{2}}\Big\vert_{\alpha=0}\right\|_{\infty}\notag\\
    &\leq \frac{1}{1-\gamma}\cdot\left\|\frac{\partial^{2}\bm{c}(\alpha)}{(\partial\alpha)^{2}}\Big\vert_{\alpha=0}\right\|_{\infty}\notag\\
    &= 0,
\end{align}
where the last but one inequality is obtained from (\ref{le_M:bound}) and the last equality holds since for any $\alpha\in\mathbb{R}$,
\begin{align}
    \left\|\frac{\partial^{2}\bm{c}(\alpha)}{(\partial\alpha)^{2}}\right\|_{\infty} &= \max_{s}\left|\frac{\partial}{\partial\alpha}\left(\frac{\partial[\bm{c}(\alpha)]_{s}}{\partial\alpha}\right)\right|\notag\\
    & = \max_{s}\left|\frac{\partial}{\partial\alpha}\left(\frac{\partial\left(\sum_{a}\pi_{sa}\sum_{s'}[\bm{p}(\alpha)]_{sas'}c_{sas'}\right)}{\partial\alpha}\right)\right|\notag\\
    &= \max_{s}\left|\frac{\partial}{\partial\alpha}\left(\sum_{a}\pi_{sa}\sum_{s'}z_{sas'}c_{sas'}\right)\right|\notag\\
    &= 0.
\end{align}
For the second term in (\ref{le_p:eq1}), we have
\begin{align}
\left|\bm{e}^{\top}_{s}\bm{M}(\alpha)\frac{\partial\bm{P}(\alpha)}{\partial\alpha}\bm{M}(\alpha)\frac{\partial\bm{c}(\alpha)}{\partial\alpha}\Big\vert_{\alpha=0}\right| &\leq \left\|\bm{e}^{\top}_{s}\right\|_{1}\cdot\left\|\bm{M}(\alpha)\frac{\partial\bm{P}(\alpha)}{\partial\alpha}\bm{M}(\alpha)\frac{\partial\bm{c}(\alpha)}{\partial\alpha}\Big\vert_{\alpha=0}\right\|_{\infty}\notag\\
&\leq \frac{1}{1-\gamma}\cdot \left\|\frac{\partial\bm{P}(\alpha)}{\partial\alpha}\bm{M}(\alpha)\frac{\partial\bm{c}(\alpha)}{\partial\alpha}\Big\vert_{\alpha=0}\right\|_{\infty}.
\end{align}
According to (\ref{le_def_P:eq2}), for any $\bm{x}\in\mathbb{R}^{S}$ and $s\in\mathcal{S}$, we have,
\begin{align*}
    \left[\frac{\partial\bm{P}(\alpha)}{\partial\alpha}\bm{x}\right]_{s} = \sum_{s'}\sum_{a}\pi_{sa}\frac{\partial [\bm{p}(\alpha)]_{sas'}}{\partial\alpha}x_{s'},
\end{align*}
and its $\ell_{\infty}$ norm can be upper bounded as
\begin{align}
    \left\|\frac{\partial\bm{P}(\alpha)}{\partial\alpha}\Big\vert_{\alpha=0}\bm{x}\right\|_{\infty} &= \max_{s}\left|\sum_{s'}\sum_{a}\pi_{sa}\frac{\partial [\bm{p}(\alpha)]_{sas'}}{\partial\alpha}\Big\vert_{\alpha=0}x_{s'}\right|\notag\\
    &\leq \max_{s}\sum_{s'}\sum_{a}\pi_{sa}\left|z_{sas'}\right| \left|x_{s'}\right|\notag\\
    &\leq \max_{s}\sum_{s'}\sum_{a}\pi_{sa}\left|z_{sas'}\right|\cdot\|x\|_{\infty}\notag\\
    &= \sum_{s'}\sum_{a}\pi_{\bar{s}a}\left|z_{\bar{s}as'}\right|\cdot\|x\|_{\infty}\notag\\
    &\leq \sum_{s'}\sum_{a}\pi_{\bar{s}a} \max_{s,a,s'}\left|z_{sas'}\right|\cdot\|x\|_{\infty}\notag\\
    &=\max_{s,a,s'}\left|z_{sas'}\right|\cdot\sum_{s'} \|x\|_{\infty}\notag\\
    &\leq S\cdot\|\bm{z}\|_{\infty}\cdot\|x\|_{\infty}\notag\\
    &\leq S\cdot\|\bm{z}\|_{2}\cdot\|x\|_{\infty}
\end{align}
Similarly, for any $\alpha\in\mathbb{R}$, we have
\begin{align}
    \left\|\frac{\partial\bm{c}(\alpha)}{\partial\alpha}\right\|_{\infty} &= \max_{s}\left|\frac{\partial\left(\sum_{a}\pi_{sa}\sum_{s'}[\bm{p}(\alpha)]_{sas'}c_{sas'}\right)}{\partial\alpha}\right| \notag\\
    &= \max_{s}\left|\sum_{a}\pi_{sa}\sum_{s'}z_{sas'}c_{sas'}\right|\notag\\
    &\leq S\cdot\|\bm{z}\|_{2}.
\end{align}
Then, we obtain an upper bound of the second term,
\begin{align}\label{le_term2:eq1}
    \left|\bm{e}^{\top}_{s}\bm{M}(\alpha)\frac{\partial\bm{P}(\alpha)}{\partial\alpha}\bm{M}(\alpha)\frac{\partial\bm{c}(\alpha)}{\partial\alpha}\Big\vert_{\alpha=0}\right| &\leq \frac{S}{1-\gamma}\cdot\left\|\bm{M}(\alpha)\frac{\partial\bm{c}(\alpha)}{\partial\alpha}\Big\vert_{\alpha=0}\right\|_{\infty}\cdot\|\bm{z}\|_{2}\notag\\
    &\leq \frac{S}{(1-\gamma)^{2}}\cdot\left\|\frac{\partial\bm{c}(\alpha)}{\partial\alpha}\Big\vert_{\alpha=0}\right\|_{\infty}\cdot\|\bm{z}\|_{2}\notag\\
    &\leq \frac{S^{2}}{(1-\gamma)^{2}}\cdot\|\bm{z}\|^{2}_{2}.
\end{align}
For the third term of in (\ref{le_p:eq1}), we can similarly bound it as
\begin{align}\label{le_term3:eq1}
    \left|\bm{e}^{\top}_{s}\bm{M}(\alpha)\frac{\partial\bm{P}(\alpha)}{\partial\alpha}\bm{M}(\alpha)\frac{\partial\bm{P}(\alpha)}{\partial\alpha}\bm{M}(\alpha)\bm{c}(\alpha)\Big\vert_{\alpha=0}\right| &\leq \left\|\bm{M}(\alpha)\frac{\partial\bm{P}(\alpha)}{\partial\alpha}\bm{M}(\alpha)\frac{\partial\bm{P}(\alpha)}{\partial\alpha}\bm{M}(\alpha)\bm{c}(\alpha)\Big\vert_{\alpha=0}\right\|_{\infty}\notag\\
    &\leq \frac{1}{1-\gamma}\cdot S\cdot \|\bm{z}\|_{2}\cdot\frac{1}{1-\gamma}\cdot S\cdot \|\bm{z}\|_{2}\cdot\frac{1}{1-\gamma}\notag\\
    &= \frac{S^{2}}{(1-\gamma)^{3}}\cdot\|\bm{z}\|^{2}_{2}.
\end{align}
Denote that, for any $\bm{x}\in\mathbb{R}^{S}$,
\begin{align}\label{le_term4:eq1}
    \left\|\frac{\partial^{2}\bm{P}(\alpha)}{(\partial\alpha)^{2}}\Big\vert_{\alpha=0}\bm{x}\right\|_{\infty} &= \max_{s}\left|\sum_{s'}\sum_{a}\pi_{sa}\frac{\partial^{2} [\bm{p}(\alpha)]_{sas'}}{\partial(\alpha)^{2}}\Big\vert_{\alpha=0}x_{s'}\right|=0.
\end{align}
Therefore, we combine (\ref{le_term1:eq1}), (\ref{le_term2:eq1}), (\ref{le_term3:eq1}) and (\ref{le_term4:eq1}),
\begin{align}
    \left|\frac{\partial^{2} v^{\bm{\pi},\bm{p}(\alpha)}_{s}}{(\partial\alpha)^{2}}\Big\vert_{\alpha=0}\right| &= \left|\bm{e}^{\top}_{s}\bm{M}(\alpha)\frac{\partial^{2}\bm{c}(\alpha)}{(\partial\alpha)^{2}}\Big\vert_{\alpha=0}\right| + 2\gamma^{2}\cdot\left|\bm{e}^{\top}_{s}\bm{M}(\alpha)\frac{\partial\bm{P}(\alpha)}{\partial\alpha}\bm{M}(\alpha)\frac{\partial\bm{P}(\alpha)}{\partial\alpha}\bm{M}(\alpha)\bm{c}(\alpha)\Big\vert_{\alpha=0}\right|\notag\\
    &\;\;\;\;+ 2\gamma\cdot\left|\bm{e}^{\top}_{s}\bm{M}(\alpha)\frac{\partial\bm{P}(\alpha)}{\partial\alpha}\bm{M}(\alpha)\frac{\partial\bm{c}(\alpha)}{\partial\alpha}\Big\vert_{\alpha=0}\right| + \gamma\cdot\left|\bm{e}^{\top}_{s}\bm{M}(\alpha)\frac{\partial^{2}\bm{P}(\alpha)}{(\partial\alpha)^{2}}\bm{M}(\alpha)\bm{c}(\alpha)\Big\vert_{\alpha=0}\right|\notag\\
    &\leq 2\gamma\cdot\frac{S^{2}}{(1-\gamma)^{2}}\cdot\|\bm{z}\|^{2}_{2} + 2\gamma^{2}\cdot\frac{S^{2}}{(1-\gamma)^{3}}\|\bm{z}\|^{2}_{2}\notag\\
    &= \frac{2\gamma S^{2}}{(1-\gamma)^{3}}\cdot\|\bm{z}\|^{2}_{2}.
\end{align}
Then, for any $\bm{y}\in\mathbb{R}^{AS^{2}}$, we have
\begin{align}
    \left|\bm{y}^{\top}\nabla_{\bm{p}}^{2}J_{\rho}(\bm{\pi},\bm{p})\bm{y}\right| &\leq \sum_{s}\rho_{s}\cdot \left|\bm{y}^{\top}\frac{\partial^{2} v^{\bm{\pi},\bm{p}}_{s}}{(\partial\bm{p})^{2}}\bm{y}\right|\notag\\
    &= \sum_{s}\rho_{s}\cdot \left|(\frac{\bm{y}}{\|\bm{y}\|_{2}})^{\top}\frac{\partial^{2} v^{\bm{\pi},\bm{p}}_{s}}{(\partial\bm{p})^{2}}(\frac{\bm{y}}{\|\bm{y}\|_{2}})\right|\cdot\|\bm{y}\|^{2}_{2}\notag\\
    &\leq \sum_{s}\rho_{s}\cdot \max_{\|\bm{z}\|_{2}=1}\left|\left\langle \frac{\partial^{2} v^{\bm{\pi},\bm{p}}_{s}}{(\partial\bm{p})^{2}}\bm{z}, \bm{z}\right\rangle\right|\cdot\|\bm{y}\|^{2}_{2}\notag\\
    &= \sum_{s}\rho_{s}\cdot \max_{\|\bm{z}\|_{2}=1}\left|\left\langle \frac{\partial^{2} v^{\bm{\pi},\bm{p}(\alpha)}_{s}}{(\partial\bm{p}(\alpha))^{2}}\Big\vert_{\alpha=0}\frac{\partial \bm{p}(\alpha)}{\partial\alpha}, \frac{\partial \bm{p}(\alpha)}{\partial\alpha}\right\rangle\right|\cdot\|\bm{y}\|^{2}_{2}\notag\\
    &= \sum_{s}\rho_{s}\cdot \max_{\|\bm{z}\|_{2}=1}\left|\frac{\partial^{2} v^{\bm{\pi},\bm{p}(\alpha)}_{s}}{(\partial\alpha)^{2}}\Big\vert_{\alpha=0}\right|\cdot\|\bm{y}\|^{2}_{2}\notag\\
    &\leq \frac{2\gamma S^{2}}{(1-\gamma)^{3}}\cdot\|\bm{y}\|^{2}_{2}.
\end{align}
\end{proof}

\begin{proof}[Proof of \cref{the_sec4_GD}]
By the definition of $J_{\bm{\rho}}(\bm{\pi},\bm{p})$, we have
\begin{equation*}
    J_{\bm{\rho}}(\bm{\pi},\bm{p})-J_{\bm{\rho}}(\bm{\pi},\bm{p}') = \sum_{s}\rho_{s}\left( v^{\bm{\pi},\bm{p}}_{s}-v^{\bm{\pi},\bm{p}'}_{s}\right).
\end{equation*}
For any $s\in\mathcal{S}$ and $\bm{p},\bm{p}' \in\mathcal{P}$, we have
\begin{align*} 
&v^{\bm{\pi},\bm{p}}_{s}-v^{\bm{\pi},\bm{p}'}_{s} \\
&= v^{\bm{\pi},\bm{p}}_{s}
-  \sum_{a}\pi_{sa}\sum_{s'}p'_{sas'}\left(c_{sas'}+\gamma v^{\bm{\pi},\bm{p}}_{s'}\right)
+\sum_{a}\pi_{sa}\sum_{s'}p'_{sas'}\left(c_{sas'}+\gamma v^{\bm{\pi},\bm{p}}_{s'}\right)-
v^{\bm{\pi},\bm{p}'}_{s}\\
&=\sum_{a}\pi_{sa}\sum_{s'}p_{sas'}\left(c_{sas'}+\gamma v^{\bm{\pi},\bm{p}}_{s'}\right)-  \sum_{a}\pi_{sa}\sum_{s'}p'_{sas'}\left(c_{sas'}+\gamma v^{\bm{\pi},\bm{p}}_{s'}\right)\\
&\;\;\;\;+\sum_{a}\pi_{sa}\sum_{s'}p'_{sas'}\left(c_{sas'}+\gamma v^{\bm{\pi},\bm{p}}_{s'}\right)-
\sum_{a}\pi_{sa}\sum_{s'}p'_{sas'}\left(c_{sas'}+\gamma v^{\bm{\pi},\bm{p}'}_{s'}\right)\\
&= \sum_{a}\pi_{sa}\sum_{s'}\left(p_{sas'}-p'_{sas'}\right)\left(c_{sas'}+\gamma v^{\bm{\pi},\bm{p}}_{s'}\right) + \gamma\sum_{a}\pi_{sa}\sum_{s'}p'_{sas'}\left(v^{\bm{\pi},\bm{p}}_{s'}-v^{\bm{\pi},\bm{p}'}_{s'}\right)\\
&= \cdots\\
&= \sum_{t=0}^{\infty}\gamma^{t}\sum_{s'}p'^{\bm{\pi}}_{ss'}(t)\left(\sum_{a'}\pi_{s'a'}\sum_{s''}\left(p_{s'a's''}-p'_{s'a's''}\right)\left(c_{s'a's''}+\gamma v^{\bm{\pi},\bm{p}}_{s''}\right)\right).
\end{align*}
Here, the last equation is obtained by the recursion and we then obtain 
\begin{align*}
    J_{\bm{\rho}}(\bm{\pi},\bm{p})-J_{\bm{\rho}}(\bm{\pi},\bm{p}') &= \sum_{s}\rho_{s}\left( v^{\bm{\pi},\bm{p}}_{s}-v^{\bm{\pi},\bm{p}'}_{s}\right)\\
    &= \sum_{s}\rho_{s}\sum_{t=0}^{\infty}\gamma^{t}\sum_{s'}p'^{\bm{\pi}}_{ss'}(t)\left(\sum_{a'}\pi_{s'a'}\sum_{s''}\left(p_{s'a's''}-p'_{s'a's''}\right)\left(c_{s'a's''}+\gamma v^{\bm{\pi},\bm{p}}_{s''}\right)\right)\\
    &= \sum_{s'}\left(\sum_{s}\sum_{t=0}^{\infty}\gamma^{t}\rho_{s}p'^{\bm{\pi}}_{ss'}(t)\right)\left(\sum_{a'}\pi_{s'a'}\sum_{s''}\left(p_{s'a's''}-p'_{s'a's''}\right)\left(c_{s'a's''}+\gamma v^{\bm{\pi},\bm{p}}_{s''}\right)\right)\\
    &= \frac{1}{1-\gamma} \sum_{s} d_{\bm{\rho}}^{\bm{\pi},\bm{p}'}(s) \left(\sum_{a}\pi_{sa}\sum_{s'}\left(p_{sas'}-p'_{sas'}\right)\left(c_{sas'}+\gamma v^{\bm{\pi},\bm{p}}_{s'}\right)\right).
\end{align*}
Let $\bm{p}' = \bm{p}^{\star}$ and then, we have
\begin{align}
0\leq J_{\bm{\rho}}(\bm{\pi},\bm{p}^{\star})-J_{\bm{\rho}}(\bm{\pi},\bm{p})
    &= \frac{1}{1-\gamma} \sum_{s} d_{\bm{\rho}}^{\bm{\pi},\bm{p}^{\star}}(s) \left(\sum_{a}\pi_{sa}\sum_{s'}\left(p^{\star}_{sas'}-p_{sas'}\right)\left(c_{sas'}+\gamma v^{\bm{\pi},\bm{p}}_{s'}\right)\right)\notag\\
    & = \frac{1}{1-\gamma} \sum_{s}\frac{d_{\bm{\rho}}^{\bm{\pi},\bm{p}^{\star}}(s)}{d_{\bm{\rho}}^{\bm{\pi},\bm{p}}(s)}\cdot d_{\bm{\rho}}^{\bm{\pi},\bm{p}}(s) \left(\sum_{a}\pi_{sa}\sum_{s'}\left(p^{\star}_{sas'}-p_{sas'}\right)\left(c_{sas'}+\gamma v^{\bm{\pi},\bm{p}}_{s'}\right)\right)\notag\\
    &\overset{(a)}{\leq} \frac{1}{1-\gamma}\cdot \left\|\frac{d_{\bm{\rho}}^{\bm{\pi},\bm{p}^{\star}}}{d_{\bm{\rho}}^{\bm{\pi},\bm{p}}}\right\|_{\infty}\cdot \sum_{s} d_{\bm{\rho}}^{\bm{\pi},\bm{p}}(s) \left(\sum_{a}\pi_{sa}\sum_{s'}\left(p^{\star}_{sas'}-p_{sas'}\right)\left(c_{sas'}+\gamma v^{\bm{\pi},\bm{p}}_{s'}\right)\right)\notag\\
    &= \left\|\frac{d_{\bm{\rho}}^{\bm{\pi},\bm{p}^{\star}}}{d_{\bm{\rho}}^{\bm{\pi},\bm{p}}}\right\|_{\infty}\cdot \sum_{s,a,s'}\left(\frac{1}{1-\gamma}d_{\bm{\rho}}^{\bm{\pi},\bm{p}}(s)\pi_{sa}\left(c_{sas'}+\gamma v^{\bm{\pi},\bm{p}}_{s'}\right)\right)\cdot\left(p^{\star}_{sas'}-p_{sas'}\right)\notag\\
    &\leq \left\|\frac{d_{\bm{\rho}}^{\bm{\pi},\bm{p}^{\star}}}{d_{\bm{\rho}}^{\bm{\pi},\bm{p}}}\right\|_{\infty}\cdot\max_{\bar{\bm{p}}\in\mathcal{P}} \left[\sum_{s,a,s'}\left(\frac{1}{1-\gamma}d_{\bm{\rho}}^{\bm{\pi},\bm{p}}(s)\pi_{sa}\left(c_{sas'}+\gamma v^{\bm{\pi},\bm{p}}_{s'}\right)\right)\cdot(\bar{p}_{sas'}-p_{sas'})\right]\notag\\
    &= \left\|\frac{d_{\bm{\rho}}^{\bm{\pi},\bm{p}^{\star}}}{d_{\bm{\rho}}^{\bm{\pi},\bm{p}}}\right\|_{\infty}\cdot\max_{\bar{\bm{p}}\in\mathcal{P}}\left\langle\bar{\bm{p}}-\bm{p},\frac{\partial J_{\bm{\rho}}(\bm{\pi},\bm{p})}{\partial \bm{p}}\right\rangle\tag{by Lemma~\ref{lem_sec4_1}}\\
    &\leq \frac{D}{1-\gamma}\max_{\bar{\bm{p}}\in\mathcal{P}} \left\langle\bar{\bm{p}}-\bm{p},\frac{\partial J_{\bm{\rho}}(\bm{\pi},\bm{p})}{\partial \bm{p}} \right\rangle.\notag
\end{align}
which completes the proof. The first inequality $(a)$ is obtained due to the fact that for any $s\in\mathcal{S}$,
\begin{equation*}
    \sum_{a}\pi_{sa}\sum_{s'}\left(p^{\star}_{sas'}-p_{sas'}\right)\left(c_{sas'}+\gamma v^{\bm{\pi},\bm{p}}_{s'}\right) \geq 0
\end{equation*}
holds under the $s$-rectangularity assumption.
\end{proof}

Now, we proceed to prove main theorem in section 4. Here we can define $f_{\bm{\pi}}(\bm{p}):=J_{\bm{\rho}}(\bm{\pi},\bm{p})$ for a fixed policy $\bm{\pi}\in\Pi$ and define the gradient mapping
\begin{equation}
    G^{\beta}(\bm{p}):= \frac{1}{\beta}\left(\text{Proj}_{\mathcal{P}}(\bm{p} + \beta\nabla f_{\bm{\pi}}(\bm{p})) - \bm{p}\right).
\end{equation}
Notice that $\mathcal{P}$ is convex and $f_{\bm{\pi}}(\bm{p})$ is $\ell_{\bm{p}}$-smooth, then the following lemma can be derived directly using existing classic results:
\begin{lemma}\citep[Theorem 10.15]{beck2017first}\label{sec4_lem_gradmap}
Let $\{\bm{p}_{t}\}_{t\geq0}$ be the sequence generated by Algorithm~\ref{alg:PGD_inner} for solving the inner problem with the constant step size $\beta := \frac{1}{\ell_{\bm{p}}}$, then
\begin{equation}
    \min_{t\in\{0,\cdots,T-1\}}\|G^{\beta}(\bm{p}_{t})\| \leq \sqrt{\frac{2\ell_{\bm{p}}\left(f_{\bm{\pi}}^{\star} - f_{\bm{\pi}}(\bm{p}_{0})\right)}{T}}
\end{equation}
\end{lemma}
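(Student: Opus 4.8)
The plan is to reprove this classical projected-gradient bound in a self-contained way, using only the two facts already established about the inner objective: that $f_{\bm{\pi}}(\bm{p}) = J_{\bm{\rho}}(\bm{\pi},\bm{p})$ is $\ell_{\bm{p}}$-smooth in $\bm{p}$ (Lemma~\ref{lem_sec4_2}) and that $\mathcal{P}$ is convex (with $f_{\bm{\pi}}^{\star} := \max_{\bm{p}\in\mathcal{P}} f_{\bm{\pi}}(\bm{p})$ finite by compactness and continuity). Since Algorithm~\ref{alg:PGD_inner} performs projected gradient \emph{ascent}, I would write one step as $\bm{p}_{t+1} = \bm{p}_t + \beta\, G^{\beta}(\bm{p}_t)$ with $\beta = 1/\ell_{\bm{p}}$, i.e.\ the update moves along $\beta$ times the gradient mapping.

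First I would establish a per-iteration increase. The ``ascent'' form of the descent lemma for $\ell_{\bm{p}}$-smooth $f_{\bm{\pi}}$ gives $f_{\bm{\pi}}(\bm{p}_{t+1}) \ge f_{\bm{\pi}}(\bm{p}_t) + \langle \nabla f_{\bm{\pi}}(\bm{p}_t), \bm{p}_{t+1} - \bm{p}_t\rangle - \tfrac{\ell_{\bm{p}}}{2}\|\bm{p}_{t+1} - \bm{p}_t\|^2$. The crux is lower-bounding the inner-product term, and here I would use the variational inequality characterizing the projection $\bm{p}_{t+1} = \text{Proj}_{\mathcal{P}}(\bm{p}_t + \beta\nabla f_{\bm{\pi}}(\bm{p}_t))$: for all $\bm{q}\in\mathcal{P}$, $\langle \bm{p}_t + \beta\nabla f_{\bm{\pi}}(\bm{p}_t) - \bm{p}_{t+1}, \bm{q} - \bm{p}_{t+1}\rangle \le 0$. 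Taking $\bm{q} = \bm{p}_t$ and substituting $\bm{p}_{t+1} - \bm{p}_t = \beta\, G^{\beta}(\bm{p}_t)$ yields $\langle \nabla f_{\bm{\pi}}(\bm{p}_t), G^{\beta}(\bm{p}_t)\rangle \ge \|G^{\beta}(\bm{p}_t)\|^2$. Plugging this back in and using $\bm{p}_{t+1} - \bm{p}_t = \beta\,G^{\beta}(\bm{p}_t)$ again, the coefficient of $\|G^{\beta}(\bm{p}_t)\|^2$ becomes $\beta - \tfrac{\ell_{\bm{p}}\beta^2}{2}$, which equals $\tfrac{1}{2\ell_{\bm{p}}}$ precisely when $\beta = 1/\ell_{\bm{p}}$; hence $f_{\bm{\pi}}(\bm{p}_{t+1}) - f_{\bm{\pi}}(\bm{p}_t) \ge \tfrac{1}{2\ell_{\bm{p}}}\|G^{\beta}(\bm{p}_t)\|^2$.

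Then I would telescope over $t = 0,\dots,T-1$. The left-hand sides collapse to $f_{\bm{\pi}}(\bm{p}_T) - f_{\bm{\pi}}(\bm{p}_0)$, which is at most $f_{\bm{\pi}}^{\star} - f_{\bm{\pi}}(\bm{p}_0)$ since $f_{\bm{\pi}}^{\star}$ dominates every iterate, while the right-hand sides sum to at least $\tfrac{T}{2\ell_{\bm{p}}}\min_{0\le t\le T-1}\|G^{\beta}(\bm{p}_t)\|^2$. Rearranging gives $\min_t \|G^{\beta}(\bm{p}_t)\|^2 \le \tfrac{2\ell_{\bm{p}}(f_{\bm{\pi}}^{\star} - f_{\bm{\pi}}(\bm{p}_0))}{T}$, and taking square roots yields the claimed bound.

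I do not expect a serious obstacle: the statement is a textbook fact and the derivation is short. The only care needed is bookkeeping — keeping the sign conventions straight for an ascent (rather than descent) iteration, invoking the projection variational inequality / nonexpansiveness of $\text{Proj}_{\mathcal{P}}$ exactly as was already done in the proof of Theorem~\ref{the_sec3_1}, and checking that $\beta = 1/\ell_{\bm{p}}$ is what forces the increase coefficient to be $\tfrac{1}{2\ell_{\bm{p}}}$ rather than something smaller. Alternatively, one can simply cite \citep[Theorem 10.15]{beck2017first} verbatim, since ``$f_{\bm{\pi}}$ is $\ell_{\bm{p}}$-smooth on the convex set $\mathcal{P}$'' is exactly its hypothesis.
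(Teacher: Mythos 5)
Your proof is correct, and it is essentially the standard argument behind the cited result: the paper itself gives no proof for this lemma and simply invokes \citep[Theorem 10.15]{beck2017first}, whose proof is exactly your combination of the ascent form of the descent lemma, the projection variational inequality giving $\langle\nabla f_{\bm{\pi}}(\bm{p}_t), G^{\beta}(\bm{p}_t)\rangle \ge \|G^{\beta}(\bm{p}_t)\|^2$, and telescoping. The bookkeeping (the coefficient $\beta - \tfrac{\ell_{\bm{p}}\beta^2}{2} = \tfrac{1}{2\ell_{\bm{p}}}$ at $\beta = 1/\ell_{\bm{p}}$, and bounding $f_{\bm{\pi}}(\bm{p}_T)$ by $f_{\bm{\pi}}^{\star}$) all checks out.
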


\begin{proof}[Proof of \cref{the_sec4_global}]
It has been shown in Lemma 3 in~\citep{ghadimi2016accelerated} that if $\|G^{\beta}(\bm{p})\|\leq \epsilon$, then
\begin{equation}\label{sec4_lem_imp1}
    \nabla f_{\bm{\pi}}(\bm{p}^{+}) \in \mathcal{N}_{\mathcal{P}}(\bm{p}^{+}) + 2\epsilon\mathcal{B}(1),
\end{equation}
where $\bm{p}^{+} := \bm{p} + \beta G^{\beta}(\bm{p})$, $\mathcal{N}_{\mathcal{P}}$ is the norm cone of the set $\mathcal{P}$ and $\mathcal{B}(r):= \{\bm{x}\in\mathbb{R}^{n}: \|\bm{x}\|\leq r\}$. By the gradient dominance condition established in Lemma~\ref{the_sec4_GD}, 
\begin{align}\label{sec4_mid2}
    \min_{t\in\{0,\cdots,T-1\}}\left\{ f_{\bm{\pi}}(\bm{p}^{\star}) - f_{\bm{\pi}}(\bm{p}_{t})\right\} &\leq \frac{D}{1-\gamma}\min_{t\in\{0,\cdots,T-1\}}\max_{\bar{\bm{p}}\in\mathcal{P}} \left\langle\bar{\bm{p}}-\bm{p}_{t},\nabla f_{\bm{\pi}}(\bm{p}_{t})\right\rangle \notag\\
    &\leq \frac{D}{1-\gamma}\max_{\bar{\bm{p}}\in\mathcal{P}} \left\langle\bar{\bm{p}}-\bm{p}_{\hat{t}},\nabla f_{\bm{\pi}}(\bm{p}_{\hat{t}})\right\rangle,
\end{align}
where $\hat{t}:=1 + \arg\min_{t\leq T-1} \|G^{\beta}(\bm{p}_{t})\|$. Recall Lemma~\ref{sec4_lem_gradmap}, we showed that
\begin{equation*}
    \|G^{\beta}(\bm{p}_{\hat{t}-1})\| \leq \sqrt{\frac{2\ell_{\bm{p}}\left(f_{\bm{\pi}}^{\star} - f_{\bm{\pi}}(\bm{p}_{0})\right)}{T}} \leq\sqrt{\frac{2\ell_{\bm{p}}}{(1-\gamma)T}},
\end{equation*}
where the last inequality holds due to 
\begin{equation}
     v^{\bm{\pi}}_{s} = \mathbb{E}_{\bm{\pi},\bm{p}}\left[\sum_{t=0}^{\infty} \gamma^{t} c_{s_{t}a_{t}s_{t+1}} \mid s_{0}=s\right] \leq \sum_{t=0}^{\infty}\gamma^{t} = \frac{1}{1-\gamma}.
\end{equation}
If we set that
\begin{align*}
\sqrt{\frac{2\ell_{\bm{p}}}{(1-\gamma)T}} \leq \frac{(1-\gamma)\epsilon}{4D\sqrt{SA}} \Longleftrightarrow T \geq \frac{32\ell_{\bm{p}}D^{2}SA}{(1-\gamma)^{3}\epsilon^{2}} = \mathcal{O}(\epsilon^{-2}),
\end{align*}
then
\begin{equation*}
    \|G^{\beta}(\bm{p}_{\hat{t}-1})\| \leq \frac{(1-\gamma)\epsilon}{4D\sqrt{SA}}.
\end{equation*}
Hence, by applying the equation (\ref{sec4_lem_imp1}), we have
\begin{align*}
    (\ref{sec4_mid2}) \leq \frac{D}{1-\gamma}\max_{\bar{\bm{p}}\in\mathcal{P}}\|\bar{\bm{p}}-\bm{p}_{\hat{t}}\|\cdot2\cdot\frac{(1-\gamma)\epsilon}{4D\sqrt{SA}} = \epsilon,
\end{align*}
where for any $\bm{p}_{1},\bm{p}_{2}\in\mathcal{P}$,
\begin{equation}
    \|\bm{p}_{1}-\bm{p}_{2}\| \leq  \|\bm{p}_{1}\|+ \|\bm{p}_{2}\| \leq2\sqrt{SA}.
\end{equation}
\end{proof}

Then, we provide the standard proof of \cref{the_sec4_parame}.
\begin{proof}[Proof of \cref{the_sec4_parame}]
We first show that the inner problem gradient form. Notice that,
\begin{equation*}
    \frac{\partial J_{\bm{\rho}}(\bm{\pi},\bm{\xi})}{\partial \bm{\xi}} = \sum_{s \in \mathcal{S}} \frac{\partial v^{\bm{\pi},\bm{p}}_{s}}{\partial \bm{\xi}} \rho_{s}.
\end{equation*}
Then we consider the $\frac{\partial v^{\bm{\pi},\bm{p}}_{s}}{\partial \bm{\xi}}$ directly.
\begin{align*}
    \frac{\partial v^{\bm{\pi},\bm{p}}_{s}}{\partial \bm{\xi}} &= \frac{\partial}{\partial \bm{\xi}}\left[\sum_{a}\pi_{sa}Q^{\bm{\pi}_{sa},\bm{\xi}}\right]\\
    &= \sum_{a}\pi_{sa}\frac{\partial}{\partial \bm{\xi}}\left[\sum_{s'}p^{\bm{\xi}}_{sas'}\left(c_{sas'}+\gamma v^{\bm{\pi},\bm{\xi}}_{s'}\right)\right]\\
    &=\sum_{a}\pi_{sa}\sum_{s'}\left[\frac{\partial p^{\bm{\xi}}_{sas'}}{\partial \bm{\xi}}\left(c_{sas'}+\gamma v^{\bm{\pi},\bm{\xi}}_{s'}\right)+ \gamma p^{\bm{\xi}}_{sas'}\frac{\partial v^{\bm{\pi},\bm{\xi}}_{s'}}{\partial \bm{\xi}}\right]\\
    &=\sum_{a}\pi_{sa}\sum_{s'}\frac{\partial p^{\bm{\xi}}_{sas'}}{\partial \bm{\xi}}\left(c_{sas'}+\gamma v^{\bm{\pi},\bm{\xi}}_{s'}\right) + \gamma\sum_{a}\pi_{sa}\sum_{s'}p^{\bm{\xi}}_{sas'}\frac{\partial v^{\bm{\pi},\bm{\xi}}_{s'}}{\partial \bm{\xi}}.
\end{align*}
By condensing $\sum_{a}\pi_{sa}p^{\bm{\xi}}_{sas'}=p^{\bm{\pi},\bm{\xi}}_{ss'}(1)$, we can obtain,
\begin{align*}
     \frac{\partial v^{\bm{\pi},\bm{p}}_{s}}{\partial \bm{\xi}} &= \sum_{a}\pi_{sa}\sum_{s'}\frac{\partial p^{\bm{\xi}}_{sas'}}{\partial \bm{\xi}}\left(c_{sas'}+\gamma v^{\bm{\pi},\bm{\xi}}_{s'}\right) + \gamma\sum_{s'}p^{\bm{\pi},\bm{\xi}}_{ss'}(1)\frac{\partial v^{\bm{\pi},\bm{\xi}}_{s'}}{\partial \bm{\xi}}\\
     &= \sum_{a}\pi_{sa}\sum_{s'}\frac{\partial p^{\bm{\xi}}_{sas'}}{\partial \bm{\xi}}\left(c_{sas'}+\gamma v^{\bm{\pi},\bm{\xi}}_{s'}\right)\\
     &\;\;\;\;+ \gamma\sum_{s'}p^{\bm{\pi},\bm{\xi}}_{ss'}(1)\left[\sum_{a'}\pi_{s'a'}\sum_{s''}\frac{\partial p^{\bm{\xi}}_{s'a's''}}{\partial \bm{\xi}}\left(c_{s'a's''}+\gamma v^{\bm{\pi},\bm{\xi}}_{s''}\right)+\gamma\sum_{s''}p^{\bm{\pi},\bm{\xi}}_{s's''}(1)\frac{\partial v^{\bm{\pi},\bm{\xi}}_{s''}}{\partial \bm{\xi}}\right]\\
     &=\sum^{1}_{k=0}\gamma^{k}\sum_{s'}p^{\bm{\pi},\bm{\xi}}_{ss'}(k)\sum_{a'}\pi_{s'a'}\left[\sum_{s''}\frac{\partial p^{\bm{\xi}}_{s'a's''}}{\partial \bm{\xi}}\left(c_{s'a's''}+\gamma v^{\bm{\pi},\bm{\xi}}_{s''}\right)\right] + \gamma^{2}\sum_{s'}p^{\bm{\pi},\bm{\xi}}_{ss'}(2)\frac{\partial v^{\bm{\pi},\bm{\xi}}_{s'}}{\partial \bm{\xi}}\\
     &=\sum^{2}_{k=0}\gamma^{k}\sum_{s'}p^{\bm{\pi},\bm{\xi}}_{ss'}(k)\sum_{a'}\pi_{s'a'}\left[\sum_{s''}\frac{\partial p^{\bm{\xi}}_{s'a's''}}{\partial \bm{\xi}}\left(c_{s'a's''}+\gamma v^{\bm{\pi},\bm{\xi}}_{s''}\right)\right] + \gamma^{3}\sum_{s'}p^{\bm{\pi},\bm{\xi}}_{ss'}(3)\frac{\partial v^{\bm{\pi},\bm{\xi}}_{s'}}{\partial \bm{\xi}}\\
     &= \cdots\\
     &= \sum^{\infty}_{k=0}\gamma^{k}\sum_{s'}p^{\bm{\pi},\bm{\xi}}_{ss'}(k)\sum_{a'}\pi_{s'a'}\left[\sum_{s''}\frac{\partial p^{\bm{\xi}}_{s'a's''}}{\partial \bm{\xi}}\left(c_{s'a's''}+\gamma v^{\bm{\pi},\bm{\xi}}_{s''}\right)\right].
\end{align*}
So we have 
\begin{align*}
    \frac{\partial J_{\bm{\rho}}(\bm{\pi},\bm{\xi})}{\partial \bm{\xi}} &= \sum_{s \in \mathcal{S}} \frac{\partial v^{\bm{\pi},\bm{\xi}}_{s}}{\partial \bm{\xi}} \rho_{s}\\
    &= \sum_{s}\rho_{s}\sum^{\infty}_{k=0}\gamma^{k}\sum_{s'}p^{\bm{\pi},\bm{\xi}}_{ss'}(k)\sum_{a'}\pi_{s'a'}\left[\sum_{s''}\frac{\partial p^{\bm{\xi}}_{s'a's''}}{\partial \bm{\xi}}\left(c_{s'a's''}+\gamma v^{\bm{\pi},\bm{\xi}}_{s''}\right)\right]\\
    &= \frac{1}{1-\gamma}\sum_{s'}\underbrace{(1-\gamma)\sum_{s}\rho_{s}\sum^{\infty}_{k=0}\gamma^{k}p^{\bm{\pi},\bm{\xi}}_{ss'}(k)}_{d^{\bm{\pi},\bm{\xi}}_{s'}}\sum_{a'}\pi_{s'a'}\left[\sum_{s''}\frac{\partial p^{\bm{\xi}}_{s'a's''}}{\partial \bm{\xi}}\left(c_{s'a's''}+\gamma v^{\bm{\pi},\bm{\xi}}_{s''}\right)\right]\\
    &= \frac{1}{1-\gamma}\sum_{s}d^{\bm{\pi},\bm{\xi}}_{s}\sum_{a}\pi_{sa}\left[\sum_{s'}\frac{\partial p^{\bm{\xi}}_{sas'}}{\partial \bm{\xi}}\left(c_{sas'}+\gamma v^{\bm{\pi},\bm{\xi}}_{s'}\right)\right]\\
    &= \frac{1}{1-\gamma}\sum_{s}d^{\bm{\pi},\bm{\xi}}_{s}\sum_{a}\pi_{sa}\sum_{s'}p^{\bm{\xi}}_{sas'}\left[\frac{\partial p^{\bm{\xi}}_{sas'}}{\partial \bm{\xi}}\cdot\frac{1}{p^{\bm{\xi}}_{sas'}}\cdot\left(c_{sas'}+\gamma v^{\bm{\pi},\bm{\xi}}_{s'}\right)\right]\\
    &= \frac{1}{1-\gamma}\sum_{s}d^{\bm{\pi},\bm{\xi}}_{s}\sum_{a}\pi_{sa}\sum_{s'}p^{\bm{\xi}}_{sas'}\left[\frac{\partial \log p^{\bm{\xi}}_{sas'}}{\partial \bm{\xi}}\left(c_{sas'}+\gamma v^{\bm{\pi},\bm{\xi}}_{s'}\right)\right]\\
    &= \frac{1}{1-\gamma}\mathbb{E}_{s\sim\bm{d}^{\bm{\pi},\bm{\xi}}_{\rho}}\mathbb{E}_{a\sim\bm{\pi}_{s\cdot}}\mathbb{E}_{s'\sim\bm{p}_{sa\cdot}}\left[\frac{\partial\log p^{\bm{\xi}}_{sas'}}{\partial \bm{\xi}}\left(c_{sas'}+\gamma v^{\bm{\pi},\bm{\xi}}_{s'}\right)\right].
\end{align*}
Then, we consider the partial derivative on $\bm{\theta}$ and $\bm{\lambda}$ separately. Notice that
$$
\begin{cases}
\frac{\partial J_{\rho}(\bm{\pi},\bm{\xi})}{\partial \theta_{i}} = \frac{1}{1-\gamma}\mathbb{E}_{s\sim\bm{d}^{\bm{\pi},\bm{\xi}}_{\rho}}\mathbb{E}_{a\sim\bm{\pi}_{s\cdot}}\mathbb{E}_{s'\sim\bm{p}_{sa\cdot}}\left[\frac{\partial\log p^{\bm{\xi}}_{sas'}}{\partial \theta_{i}}\left(c_{sas'}+\gamma v^{\bm{\pi},\bm{\xi}}_{s'}\right)\right]\\
\\
\frac{\partial J_{\rho}(\bm{\pi},\bm{\xi})}{\partial \lambda_{sa}} = \frac{1}{1-\gamma}\mathbb{E}_{s\sim\bm{d}^{\bm{\pi},\bm{\xi}}_{\rho}}\mathbb{E}_{a\sim\bm{\pi}_{s\cdot}}\mathbb{E}_{s'\sim\bm{p}_{sa\cdot}}\left[\frac{\partial\log p^{\bm{\xi}}_{sas'}}{\partial \lambda_{sa}}\left(c_{sas'}+\gamma v^{\bm{\pi},\bm{\xi}}_{s'}\right)\right]\\
\end{cases}
$$
We found that for all $(s,a,s')\in\mathcal{S}\times\mathcal{A}\times\mathcal{S}$, 
$\theta_{i}$ will appear in the parametrization form of $p^{\bm{\xi}}_{sas'}$. Hence we consider partial derivative of $\log p^{\bm{\xi}}_{sas'}$ then.
\begin{align*}
    \frac{\partial\log p^{\bm{\xi}}_{sas'}}{\partial \theta_{i}} &= \frac{\partial}{\partial \theta_{i}}\left[\log \bar{\bm{p}}_{sas'} + \frac{\bm{\theta}^{\top}\bm{\phi(s')}}{\lambda_{sa}}\right] - \frac{\partial}{\partial \theta_{i}} \left[\log\left(\sum_{k}\bar{p}_{sak}\cdot\exp({\frac{\bm{\theta}^{\top}\bm{\phi}(k)}{\lambda_{sa}}})\right)\right]\\
    &= \frac{\phi_{i}(s')}{\lambda_{sa}}-\frac{\sum_{k}\bar{p}_{sak}\cdot\exp({\frac{\bm{\theta}^{\top}\bm{\phi}(k)}{\lambda_{sa}}})\cdot\frac{\phi_{i}(k)}{\lambda_{sa}}}{\sum_{k}\bar{p}_{sak}\cdot\exp({\frac{\bm{\theta}^{\top}\bm{\phi}(k)}{\lambda_{sa}}})}\\
    &= \frac{\phi_{i}(s')}{\lambda_{sa}}-\sum_{j}\frac{\bar{p}_{saj}\cdot\exp({\frac{\bm{\theta}^{\top}\bm{\phi}(j)}{\lambda_{sa}}})}{\sum_{k}\bar{p}_{sak}\cdot\exp({\frac{\bm{\theta}^{\top}\bm{\phi}(k)}{\lambda_{sa}}})}\cdot\frac{\phi_{i}(j)}{\lambda_{sa}}\\
    &= \frac{\phi_{i}(s')}{\lambda_{sa}} - \sum_{j}p^{\bm{\xi}}_{saj}\cdot\frac{\phi_{i}(j)}{\lambda_{sa}}.
\end{align*}
Now we can obtain that
\begin{equation*}
    \frac{\partial J_{\rho}(\bm{\pi},\bm{\xi})}{\partial \theta_{i}} = \frac{1}{1-\gamma}\sum_{s}d^{\bm{\pi},\bm{\xi}}_{s}\sum_{a}\pi_{sa}\sum_{s'}p^{\bm{\xi}}_{sas'}\left[\left(\frac{\phi_{i}(s')}{\lambda_{sa}} - \sum_{j}p^{\bm{\xi}}_{saj}\cdot\frac{\phi_{i}(j)}{\lambda_{sa}}\right)\cdot\left(c_{sas'}+\gamma v^{\bm{\pi},\bm{\xi}}_{s'}\right)\right].
\end{equation*}
Similarly we can derive the partial derivative on $\lambda_{sa}$ for any state-action pair $(s,a)$. Interestingly, we notice that for $(\bar{s},\bar{a}) \neq (s,a)$, $\frac{\partial \log(p^{\bm{\xi}}_{\bar{s}\bar{a}s'})}{\partial \lambda_{sa}} = 0$. Therefore, we can consider the case $(\bar{s},\bar{a}) = (s,a)$.
\begin{align*}
    \frac{\partial\log p^{\bm{\xi}}_{sas'}}{\partial \lambda_{sa}} &= \frac{\partial}{\partial \lambda_{sa}}\left[\log \bar{p}_{sas'} + \frac{\bm{\theta}^{\top}\bm{\phi(s')}}{\lambda_{sa}}\right] - \frac{\partial}{\partial \lambda_{sa}} \left[\log\left(\sum_{k}\bar{p}_{sak}\cdot\exp({\frac{\bm{\theta}^{\top}\bm{\phi}(k)}{\lambda_{sa}}})\right)\right]\\
    &= \frac{\sum_{k}\bar{p}_{sak}\cdot\exp\left({\frac{\bm{\theta}^{\top}\bm{\phi}(k)}{\lambda_{sa}}}\right)\cdot\frac{\bm{\theta}^{\top}\bm{\phi}(k)}{\lambda^{2}_{sa}}}{\sum_{k}\bar{p}_{sak}\cdot\exp\left({\frac{\bm{\theta}^{\top}\bm{\phi}(k)}{\lambda_{sa}}}\right)} - \frac{\bm{\theta}^{\top}\bm{\phi}(s')}{\lambda_{sa}^{2}}\\
    &=\sum_{j}\frac{\bar{p}_{saj}\cdot\exp\left({\frac{\bm{\theta}^{\top}\bm{\phi}(j)}{\lambda_{sa}}}\right)}{\sum_{k}\bar{p}_{sak}\cdot\exp\left({\frac{\bm{\theta}^{\top}\bm{\phi}(k)}{\lambda_{sa}}}\right)}\cdot\frac{\bm{\theta}^{\top}\bm{\phi}(j)}{\lambda^{2}_{sa}} - \frac{\bm{\theta}^{\top}\bm{\phi}(s')}{\lambda_{sa}^{2}}\\
    &=  \sum_{j}p^{\bm{\xi}}_{saj}\cdot\frac{\bm{\theta}^{\top}\bm{\phi}(j)}{\lambda^{2}_{sa}}-\frac{\bm{\theta}^{\top}\bm{\phi}(s')}{\lambda_{sa}^{2}}.
\end{align*}
Then we can obtain that
\begin{equation*}
    \frac{\partial J_{\rho}(\bm{\pi},\bm{\xi})}{\partial \lambda_{sa}} = \frac{1}{1-\gamma}d^{\bm{\pi},\bm{\xi}}_{s}\cdot\pi_{sa}\cdot\sum_{s'}p^{\bm{\xi}}_{sas'}\left[\left(\sum_{j}p^{\bm{\xi}}_{saj}\cdot\frac{\bm{\theta}^{\top}\bm{\phi}(j)}{\lambda^{2}_{sa}}-\frac{\bm{\theta}^{\top}\bm{\phi}(s')}{\lambda_{sa}^{2}}\right)\cdot\left(c_{sas'}+\gamma v^{\bm{\pi},\bm{\xi}}_{s'}\right)\right].
\end{equation*}
\end{proof}

\section{Experiment Details}
\label{app: Details in experiment}
\subsection{Details on the Garnet problem example}
\begin{figure}[H]
\centering
\includegraphics[height=2.3in,width=0.5\textwidth]{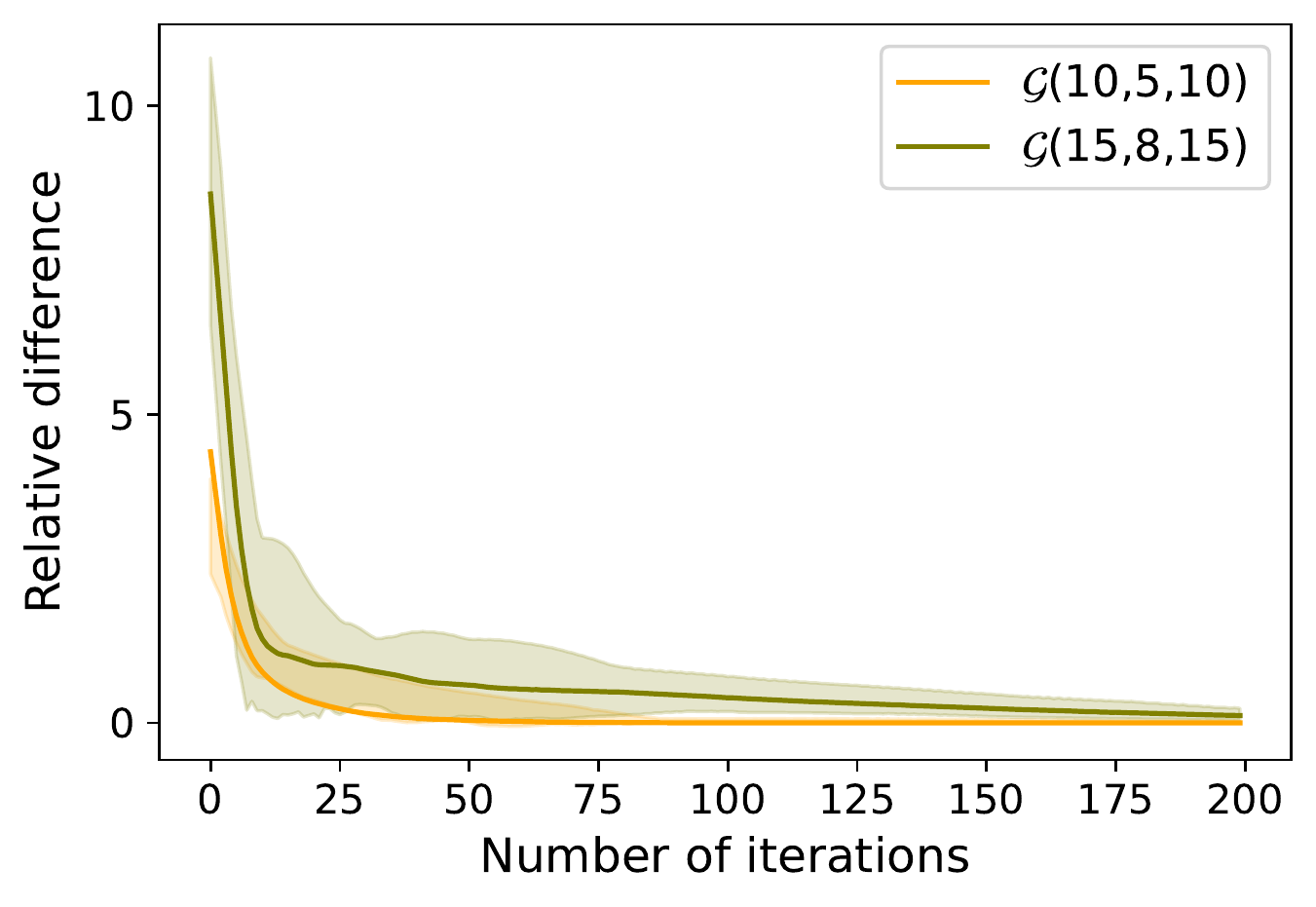}
\caption{The error of value function computed by non-parametric DRPG for two Garnet problems with $s$-rectangular ambiguity.}
\label{fig_appen_s_comparison}
\end{figure}
Note that, in our simulations, we test our algorithm for both high connectivity (\ie, b = $|\mathcal{S}|$) in $s$-rectangular case, and low connectivity (\ie, b = $|\mathcal{S}|/5$) in $(s,a)$-rectangular case. We also apply DRPG on random RMDPs with $L_1$-constrained $s$-rectangular ambiguity, which generally assumes the uncertain in transition probabilities is independent for each state-action pair and are defined as 
\begin{equation*}
\mathcal{P}=\underset{s \in \mathcal{S}}{\times} \mathcal{P}_{s} \quad \text { where } \quad \mathcal{P}_{s}:=\left\{\left(\bm{p}_{s 1}, \ldots, \bm{p}_{s A}\right)\in(\Delta^{S})^{A} \mid \sum_{a\in\mathcal{A}}\|\bm{p}_{sa}-\bar{\bm{p}}_{sa}\|_{1} \leq \kappa_{s}\right\}.
\end{equation*}

We run DRPG with a sample size of 50 for 200 iteration times on Garnet problems with three sizes for the $(s,a)$-rectangular case and two medium sizes for the $s$-rectangular case. We record the absolute value of gaps between objective values of DRPG and robust value iteration at each iteration time step, and then plot the relative difference under the $s$-rectangular assumption in Figure~\ref{fig_appen_s_comparison}. The upper and lower envelopes of the curves correspond to the 95 and 5 percentiles of the 50 samples, respectively. From Figure~\ref{fig_appen_s_comparison}, we can obtain similar results with the $(s,a)$-rectangular case that DRPG converges to a nearly identical value computed by
the value iteration computed by the robust value iteration.

\subsection{Details on the inventory management example}
In our inventory management example, we present a specific example of this problem with eight states  and three actions. 

We draw the cost for each $(s,a,s)\in\mathcal{S}\times\mathcal{A}\times\mathcal{S}$ at random uniformly in $[0,1]$, and we fix a discount factor $\gamma = 0.95$. Below we give details about the nominal transitions and the parameter $\bm{\kappa}$.

The feature function we use is the radial-type features which is introduced in~\cite{sutton2018reinforcement}, \ie, $\bm{\phi}_{i}(s) = \exp \left(-\frac{\left\|s-c_i\right\|^2}{2 \sigma_i^2}\right)$. We define a two-dimension state feature with deterministic $c_{i}$ and $\sigma_{i}$. Our parameters also share the same dimensions as these two features from our parameterization form. 

The ambiguity set $\Xi$ in our problem is simply chosen as a $L_{1}$-norm constrained set, that is,
\begin{equation}
    \Xi:= \{(\bm{\theta},\bm{\lambda})|\|\bm{\theta} - \bm{\theta}_{c}\|_{1}\leq\kappa_{\theta},\|\bm{\lambda}-\bm{\lambda}_{c}\|_{1}\leq\kappa_{\lambda}\}.
\end{equation}
The updating step size for $\bm{\xi}=(\bm{\theta},\bm{\lambda})$ on the inner problem are taken 0.01. For simplicity, we choose all elements of $\bm{\lambda}_{c}$ as one and $\bm{\theta}_{c}:=[0.4,0.9]^{\top}$, and set $\kappa_{\theta}=1,\kappa_{\lambda}=1$ in this problem. Other parameters are included in the published codes. Note that the instances for a larger number of states are constructed in the same fashion by adding
some condition states. 


\end{document}